\title{Upper Confidence Bounds for Combining Stochastic Bandits}
\author{
   
   Ashok Cutkosky \\
   Boston University \\
   Boston, MA \\
   \texttt{ashok@cutkosky.com} 
   \\
   \and
   Abhimanyu Das \\
   Google Research \\
   Mountain View, CA \\
   \texttt{abhidas@google.com}
   \\
   \and
   Manish Purohit \\
   Google Research \\
   Mountain View, CA \\
   \texttt{mpurohit@google.com}
  % \And
  % Coauthor \\
  % Affiliation \\
  % Address \\
  % \texttt{email} \\
}
\newcommand{\A}{\mathcal{A}}
\newcommand{\R}{\mathbb{R}}
\newcommand{\E}{\mathop{\mathbb{E}}}
\newcommand{\argmin}{\mathop{\text{argmin}}}
\newcommand{\argmax}{\mathop{\text{argmax}}}
\declaretheorem[name=Theorem]{Theorem}
\declaretheorem[name=Lemma, numberlike=Theorem]{Lemma}
\declaretheorem[sibling=Theorem]{Corollary}
\newcommand{\regret}{\text{Regret}}
\newif\ifarxiv
\date{}
\begin{document}

\maketitle

\begin{abstract}
  We provide a simple method to combine stochastic bandit algorithms. Our approach is based on a ``meta-UCB'' procedure that treats each of $N$ individual bandit algorithms as arms in a higher-level $N$-armed bandit problem that we solve with a variant of the classic UCB algorithm. Our final regret depends only on the regret of the base algorithm with the best regret in hindsight. 
  This approach provides an easy and intuitive alternative strategy to the CORRAL algorithm for adversarial bandits, without requiring the  stability conditions imposed by CORRAL on the base algorithms. 
  Our results match lower bounds in several settings, and we provide empirical validation of our algorithm on misspecified linear bandit and model selection problems.
\end{abstract}

\section{Introduction}

This paper studies the classic \emph{contextual bandit problem} in a stochastic setting \cite{langford2008epoch, beygelzimer2011contextual}, which is a generalization of the even more classical multi-armed bandit problem \cite{lai1985asymptotically}. In each of $T$ rounds indexed by $t=1,\dots,T$, we observe an i.i.d. random \emph{context} $c_t\in C$, which we use to select an \emph{action} $a_t=x_t(c_t)\in A$ based on some \emph{policy} $x_t:C\to A$. Then, we receive a noisy reward $\hat r_t\in[0,1]$, whose expectation is a function of only $c_t$ and $a_t$: $\E[r_t|c_t,a_t] = r(c_t, a_t)$. The goal is to perform nearly as well as the best policy in hindsight by minimizing the \emph{regret}:
\begin{align*}
    \regret = \sum_{t=1}^T r_\star - r(c_t, a_t)
\end{align*}
where $r_\star=\inf_{x\in X} \E_{c}[r(c, x(c))]$, and $X$ is some space of possible policies.

This problem and variants has been extensively studied under diverse assumptions about the space of policies $X$ and distributions of the rewards and values for $r_\star$. (e.g. see \cite{auer2002finite, langford2008epoch, beygelzimer2011contextual, lattimore2018bandit, tewari2017ads, agarwal2014taming}).  Many of these algorithms have different behaviors in different environments (e.g. one algorithm might do much better if the reward $r(c_t,a_t)$ is a linear function of the context, while another might do better if the reward is independent of the context). This plethora of prior algorithms necessitates a ``meta-decision'': If the environment is not known in advance, which algorithm should be used for the task at hand? Even in hindsight, it may not be obvious which algorithm was most optimized for the experienced environment, and so this meta-decision can be quite difficult.

We model this meta-decision by assuming we have access to $N$ \emph{base bandit algorithms} $\A_1,\dots,\A_N$. We will attempt to design a meta-algorithm whose regret is comparable to the best regret experienced by any base algorithm in hindsight for the current environment. Since we don't know in advance which base algorithm will be optimal for the current environment, we need to address this problem in an online fashion. On the $t$th round, we will choose some index $i_t$ and play the action suggested by the algorithm $\A_{i_t}$.   The primary difficulty is that some base algorithms might perform poorly at first, and then begin to perform well later. A naive strategy might discard such algorithms based on the poor early performance, so some enhanced form of exploration is necessary for success. A pioneering prior work on this setting has considered the \emph{adversarial} rather than stochastic case \citep{agarwal2017corralling}, and utilizes a sampling based on mirror descent with a clever mirror-map. Somewhat simplifying these results, suppose each algorithm $\A_i$ guarantees regret $C_iT^{\alpha_i}$ for some $C_i$ and $\alpha_i$\footnote{in many common settings, $\alpha_i=1/2$}. Given a user-specified learning rate parameter $\eta\in \R$, \cite{agarwal2017corralling, pacchiano2020model} guarantee:
\begin{align}
    \regret \le \min_{j\le N}   C_j^{\frac{1}{\alpha_j}}T\eta ^{\frac{1-\alpha_j}{\alpha_j}} +T\eta+ \frac{N}{\eta}\label{eqn:corral}
\end{align}
The value of $\eta$ is chosen apriori, but the values of $C_i$ need not be known. To gain a little intuition for this expression, suppose all $\alpha_i=1/2$, and set $\eta = \frac{\sqrt{N}}{\sqrt{T}}$. Then the regret is $\min_{i \le N} C_i^2\sqrt{NT}$.

In this paper, we leverage the stochastic environment to avoid requiring some technical stability conditions needed in \cite{agarwal2017corralling}: our result is a true black-box meta-algorithm that does not require any modifications to the base algorithms. Moreover, our general technique is in our view both different and much simpler. Our regret bound also improves on (\ref{eqn:corral}) by virtue of being non-uniform over the base algorithms: given any parameters $\eta_1,\dots,\eta_N$, we obtain
\begin{align}
    \regret \le \min_{j\le N} C_j^{\frac{1}{\alpha_j}}T\eta_j ^{\frac{1-\alpha_j}{\alpha_j}} + T\eta_j+\sum_{i\ne j} \frac{1}{\eta_i}\label{eqn:informal}
\end{align}
This recovers (\ref{eqn:corral}) when all $\eta_i$ are equal. In general one can think of the values $\frac{1}{\eta_i}$ as specifying a kind of \emph{prior} over the $\A_i$ which allows us to develop more delicate trade-offs between their performances. For example, consider again the setting when all $\alpha_i=1/2$. If we believe that $\A_1$ is more likely to perform well, then by setting $\eta_1 = \frac{1}{\sqrt{T}}$ and $\eta_j = \frac{N}{\sqrt{T}}$ for $j \neq 1$, we can obtain a regret of $\min(C_1^2 \sqrt{T}, \min_{2 \le j \le N} C_j^2 N \sqrt{T}))$.

This type of bound is in some sense a continuation of a general trend in the bandit community towards finding algorithms that adapt to various properties of the environment that are unknown in foresight (e.g \cite{bubeck2012best,wei2018more, audibert2009exploration,ghosh2020}). However, instead of committing to some property (e.g. large value of $r_\star$, or small variance of $\hat r$), we instead design an algorithm that is in some sense ``future-proof'', as new algorithms can be easily incorporated as new base algorithms $\A_i$. 

In a recent independent work, \cite{pacchiano2020model} has extended the techniques of \cite{agarwal2017corralling} to our same stochastic setting. They use a clever smoothing technique to also dispense with the stability condition required by \cite{agarwal2017corralling}, and achieve the regret bound (\ref{eqn:corral}). In addition to achieving the non-uniform bound (\ref{eqn:informal}), we also improve upon their work in two other ways: our algorithm requires only $O(N)$ space in contrast to $O(TN)$ space, and we allow for our base algorithms to only guarantee in-expectation rather than high-probability bounds.

In the stochastic setting, it is frequently possible to obtain \emph{logarithmic} regret subject to various forms of ``gap'' assumptions on the actions. However, the method of \cite{agarwal2017corralling}, even when considered in the stochastic setting as in \cite{pacchiano2020model}, seems unable to obtain such bounds. Instead, \cite{arora2020corralling} has recently provided a method based on UCB that can achieve such results. Their algorithm is similar to ours, but we devise a somewhat more intricate method that is able to not only obtain the results outlined previously, but also match the logarithmic regret bounds provided by \cite{arora2020corralling}.

In the stochastic setting, \cite{yadkori2020} introduces a new model selection technique called Regret Balancing. However this approach requires knowledge of the exact regret bounds of the optimal base algorithm. Our approach not only avoids this requirement, but also results in stronger regret guarantees than those in that paper.

We also consider an extension of our techniques to the setting of linear contextual bandits with \emph{adversarial} features. In this case we require some modifications to our combiner algorithm and assume that the base algorithms are in fact instances of linUCB \cite{chu2011contextual} or similar confidence-ellipsoid based algorithms. However, subject to these restrictions we are able to recover essentially similar results as in the non-adversarial setting, which we present in Section \ref{sec:advcombiner}.

The rest of this paper is organized as follows. In section \ref{sec:setup} we describe our formal setup and some assumptions. In section \ref{sec:mainalg} we provide our algorithm and main regret bound in a high-probability setting. In section \ref{sec:doubling}, we extend our analysis to in-expectation regret bounds as well as providing an automatic tuning of some parameters in the main algorithm. In section \ref{sec:examples} we sketch some ways in which our algorithm can be applied, and show that it matches some prior lower bound frontiers. Finally, in section \ref{sec:experiments} we provide empirical validation of our algorithm.

\section{Problem Setup}\label{sec:setup}
We let $A$ be a space of actions, $X$ a space of policies, and $C$ a space of contexts. Each policy is a function $C\to A$. (random policies can be modeled by pushing the random bits into the context). In each round $t=1,\dots,T$ we choose $x_t\in X$, then see an i.i.d. random context $c_t$, and then receive a random reward $\hat r_t\in[0,1]$. Let $H_t$ denote the sequence $x_1,c_1,\hat r_1,\dots,\hat r_{t-1}, x_t, c_t$. There is an unknown function $r:C\times A\to \R$ such that $\E[\hat r_t|H_t]=r(c_t,x_t(c_t))$ for any $H_t$. The distribution of $\hat r_t$ is independent of all other values conditioned on $x_t$ and $c_t$. 
We will also write $r(x)=\E_{c\sim \mathcal{D}_c}[r(c,x)]$ and $r_t = r(c_t, x_t(c_t))$.
Let $x_\star \in \argmax r(x)$. Define  $r_\star = \E_{c}r(c, x_\star(c))$. Then we define the regret (often called ``pseudo-regret'' instead) as:
\begin{align*}
    \regret = \sum_{t=1}^T r_\star - r_t
\end{align*}

Each base bandit algorithm $\A_i$ can be viewed as a randomized procedure that takes any sequence $x_1,c_1,\hat r_1,\dots,x_{t-1}, c_{t-1}, \hat r_{t-1}$ and outputs some $x_t\in X$. At the $t$th round of the bandit game, our algorithm will choose some index $i_t\in \{1,\dots,N\}$. Then we obtain policy $x^{i_t}_{T(i_t, t)}$ from $A_{i_t}$, and take action $x_t=x^{i_t}_{T(i_t,t)}(c_t)$. The policy $x^{i_t}_{T(i_t,t)}$ is the output of $A_{i_t}$ on the input sequence of policies, contexts, and rewards for all the prior rounds for which we have chosen this same index $i_t$. After receiving the reward $\hat r_t$, we send this reward as feedback to $\A_{i_t}$. 

In order to formalize this analysis more cleanly, for all $i$, we define $T$ independent random variables $c^i_1,\dots,c^i_T$, where each $c^i_t\sim \mathcal{D}_c$. Further, we define random variables $x^i_1,\dots,x^i_T$ and $\hat r^i_1,\dots,\hat r^i_T$ such that $x^i_t$ is the output of $\A_i$ on the input sequence $x^i_1,c^i_1,\hat r^i_1,\dots,x^i_{t-1}, c^i_{t-1}, \hat r^i_{t-1}$ and $\hat r^i_t$ is a reward obtained by choosing policy $x^i_t$ with context $c^i_t$. We also define random variables $r^i_t=\E[\hat r^i_t|x^i_1,c^i_1,\hat r^i_1,\dots,\hat r^i_{t-1},c^i_{t-1}]$.

Then we can rephrase the high-level action of our algorithm as follows: We choose some index $i_t\in \{1,\dots,N\}$. Then we play policy $x_t=x^{i_t}_{T(i_t, t)}$, see context $c_t=c^{i_t}_{T(i_t,t)}$, and obtain reward $\hat r_t=\hat r^{i_t}_{T(i_t,t)}$. We define $r_t =r^{i_t}_{T(i_t,t)}$. Note that the distribution of the observed reward $\sum_{t=1}^T \hat r_t$ as well as the expected reward $\sum_{t=1}^T r_t$ is maintained by this description of the random variables.

\subsection{Assumptions}
We will consider two settings for the base algorithms in our analysis. First, a high-probability setting in which we wish to provide a regret bound that holds with probability at least $1-\delta$ for some given $\delta$. Second, an in-expectation setting in which we simply wish to bound the expected value of the regret. In the first setting, we require high-probability bounds on the regret of the base algorithms, while in the second setting we do not. Our approach to the in-expectation setting will be a construction that uses our high-probability algorithm as a black-box. Thus, the majority of our analysis takes place in the high-probability setting (in Section \ref{sec:mainalg}), for which we now describe the assumptions.

In the high-probability setting, we assume there are (known) numbers $C_1,\dots,C_N$ and $\alpha_1,\dots,\alpha_N\in[\frac{1}{2},1]$, a $\delta>0$ and an (unknown) set $S\subset \{1,\dots,N\}$ such that with probability at least $1-\delta$, there is some $J\in S$ such that for all $t\le T$,
\begin{align}
    \sum_{\tau=1}^t r_\star - r^J_\tau\le C_Jt^{\alpha_J}\label{eqn:regret_requirement}
\end{align}
Intuitively, this is saying that each algorithm $\A_i$ comes with a \emph{putative} regret bound $C_it^{\alpha_i}$, and with high probability there is \emph{some} $\A_i$ with $i\in S$ for which its claimed regret bound is in fact correct. The assumption is stronger as $S$ becomes smaller, and our final results will depend on the size of $S$. In Section \ref{sec:ucb}, we provide some examples of algorithms that satisfy the requirement (\ref{eqn:regret_requirement}). Generally, it turns out that most algorithms based the optimism principle can be made to work in this setting for any desired $\delta$. We will refer to an index $J$ that satisfied (\ref{eqn:regret_requirement}) as being ``well-specified''.

To gain intuition about our results, we recommend that the reader supposes that $S$ is a singleton $S=\{J\}$ for some unknown index $J$, and $\alpha_i=1/2$ for all $i$ (note that $S$ may be much smaller than the set of indices for which \ref{eqn:regret_requirement} holds). As a concrete example, suppose $|X|=K$ and $c_t=0$ for all $t$, so that we are playing a classic $K$-armed bandits problem. Let $\A_1$ be an instance of UCB, but restricted to the first $10$ arms, while $\A_2$ is instance of UCB restricted to the last $K - 10$ arms. Then, we can set $C_1=O(\sqrt{10\log(T^2/\delta)}$ and $C_2=O(\sqrt{(K-10)\log(T^2/\delta)}$. Now, depending on which is the optimal arm, we may have $S=\{1\}$ or $S=\{2\}$. Although $S=\{1,2\}$ would also satisfy the assumptions, since our results improve when $S$ is smaller and $S$ is unknown to the algorithm, we are free to choose the smallest possible $S$.

% \textcolor{red}{Maybe mention that $S = \{1,2\}$ always works, but smaller $S$ leads to stronger results.}

These assumptions may seem stronger than prior work at first glance: not only do we require high-probability rather than in-expectation regret bounds, we require at least one base algorithm to be well-specified, and we require knowledge of the putative regret bounds through the coefficients $C_i$ and $\alpha_i$. Prior work in this setting (e.g. \citep{agarwal2017corralling, pacchiano2020model}) dispenses with the last two requirements, and \citep{agarwal2017corralling} also dispenses with the high-probability requirement. However, it turns out that through a simple doubling construction, we can easily incorporate base algorithms with unknown regret bounds that hold only in expectation into our framework. The ability to handle unknown regret bounds dispenses with the well-specified assumption. We describe this construction and the relevant assumptions in Section \ref{sec:doubling}, and show that it only weakens our analysis by log factors.

% \textcolor{red}{Did I miss the goals from assumptions and goals?}

\section{UCB over Bandits}
\label{sec:mainalg}
In this Section, we describe our meta-learner for the high-probability setting. The intuition is based on upper-confidence bounds: first, we observe that the unknown well-specified algorithm $\A_J$'s rewards $\hat r^J_\tau$ behave very similarly to independent bounded random variables with mean $r_\star$ in that their average value $\hat \mu^J_t =\frac{1}{t}\sum_{\tau=1}^t \hat r^J_\tau$ concentrates about $r_\star$ with radius $C_Jt^{\alpha_J-1}$. From this, one might imagine that for any index $i$, the value of $\hat \mu^i_t + C_it^{\alpha_i-1}$ gives some kind of upper-confidence bound for the final total reward of algorithm $i$. We could then feed these estimates into a UCB-style algorithm that views the $N$ base algorithms as $N$ arms. Unfortunately, such an approach is complicated by two issues. First, the putative regret bounds for each $\A_i$ may not actually hold, which could damage the validity of our confidence estimates. Second, the confidence bounds for different algorithms may have very imbalanced behavior due to the different values of $C_i$ and $\alpha_i$, and we would like our final regret bound to depend only on $C_J$ and $\alpha_J$.

We address the first issue by keeping track of the statistic $\sum_{\tau=1}^t \hat \mu^i_\tau - \hat r^i_\tau$. If $\sum_{\tau=1}^t \hat \mu^i_\tau - \hat r^i_\tau\ge C_it^{\alpha_i}$ at any time, then we can conclude that $i$ is not the well-specified index $J$ and so we simply discard $\A_i$. Moreover, it turns out that so long as $\sum_{\tau=1}^t \hat \mu^i_\tau - \hat r^i_\tau\le C_it^{\alpha_i}$, the rewards $\hat r^i_\tau$ are ``well-behaved'' enough that our meta-UCB algorithm can operate correctly.

We address the second issue by employing \emph{shifted confidence intervals} in a construction analogous to that employed by \cite{lattimore2015pareto}, who designed a $K$-armed bandit algorithm with a regret bound that depends on the identity of the best arm. Essentially, each algorithm $\A_i$ is associated with a \emph{target regret bound}, $R_i$, and each confidence interval is decreased by $\frac{R_i}{T}$. Assuming $R_i$ satisfies some technical conditions, this will guarantee that the regret is at most $O(R_J)$ for any $J$ such that $\A_J$ is well-specified.

Formally, our algorithm is provided in Algorithm \ref{alg:combiner}, and its analysis is given in Theorem \ref{thm:combiner}, proved in Appendix \ref{sec:mainalgproof}. Note that little effort has been taken to improve the constants or log factors.
\begin{algorithm*}
\caption{Bandit Combiner}\label{alg:combiner}
\begin{algorithmic}
   \STATE{\bfseries Input: } Bandit algorithms $\A_1,\dots,\A_N$, numbers $C_1,\dots,C_N$, $\alpha_1,\dots,\alpha_N$, $R_1,\dots,R_N$, $T$.
   \STATE Set $T(i,0)=0$ for all $i$, set $\hat\mu^i_0=0$ for all $i$, and set $I_1 = \{1,\dots,N\}$
   \FOR{$t=1\dots T$}
   \STATE Set $U(i,t-1)=\hat \mu^i_{T(i,t-1)} + \min\left(1,\frac{C_iT(i,t-1)^{\alpha_i} + \sqrt{8\log(T^3N/\delta)T(i,t-1)}}{T(i,t-1)}\right) -\frac{R_i}{T}$ for all $i$.
   \STATE Set $i_t = \argmax_{i \in I_t}U(i,t-1)$.
   \STATE Update $T(i_t,t)=T(i_t,t-1)+1$ and $T(j, t)=T(j,t-1)$ for $j\ne i_t$.
   \STATE Get $T(i_t,t)$th policy $x_t=x^{i_t}_{T(i_t, t)}$ from $A_{i_t}$. See context $c_t$ and play action $x_t(c_t)$.
   \STATE Receive reward $\hat r_t = \hat r^i_{T(i_t, t)}$, provide reward $\hat r_t$ and context $c_t$ as feedback to $\A_{i_t}$.
   \STATE Update $\hat \mu^{i_t}_{T(i_t,t)} = \frac{1}{T(i_t,t)}\sum_{\tau=1}^{T(i_t,t)}\hat r^{i_t}_\tau$.
   \IF{$\sum_{\tau=1}^{T(i_t,t)} \hat \mu^{i_t}_{\tau-1} - \hat r^{i_t}_\tau\ge C_{i_t}T(i_t, t)^{\alpha_{i_t}} + 3\sqrt{\log(T^3N/\delta)T(i_t,t)}$}
   \STATE $I_t = I_{t-1}-\{i_t\}$.
   \ELSE
   \STATE $I_t = I_{t-1}$.
   \ENDIF
   \ENDFOR
\end{algorithmic}
\end{algorithm*}
\begin{restatable}{Theorem}{thmcombiner}\label{thm:combiner}
Suppose there is a set $S\subset \{1,\dots,N\}$ such that with probability at least $1-\delta$, there is some $J\in S$ such that
\begin{align*}
    \sum_{\tau=1}^t r_\star - r^J_\tau \le C_Jt^{\alpha_J}
\end{align*}
for all $t\le T$. Further, suppose the $C_i$ and $\alpha_i$ are known, and the $R_i$ satisfy:
\begin{align*}
    R_i&\ge C_iT^{\alpha_i} \\
    R_i&\ge \sum_{k\ne i}  \max\left[\frac{(1-\alpha_k)(1+\alpha_k)^{\frac{1}{1-\alpha_k}} (2C_k)^{\frac{1}{1-\alpha_k}}T^{\frac{\alpha_k}{1-\alpha_k}} }{\alpha_k R_k^{\frac{\alpha_k}{1-\alpha_k}}},\right.\\
    &\left.\qquad\qquad\qquad\qquad\frac{288\log(T^3N/\delta) T}{R_k}\right]
\end{align*}
Let $r_t= \E[\hat r^{i_t}_{T(i_t,t)}]$ be the expected reward of Algorithm \ref{alg:combiner} at time $t$. Then, with probability at least $1-3\delta$, the regret satisfies:
\begin{align*}
    \sum_{t=1}^T r_\star - r_t \le 3\sup_{j\in S} R_j
\end{align*}
Note that the algorithm does not know the set $S$.
\end{restatable}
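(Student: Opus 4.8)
The plan is to run a UCB-style regret decomposition over the $N$ base algorithms, viewing the $n_i:=T(i,T)$ pulls of $\A_i$ as pulls of a single arm whose $\tau$-th instantaneous reward is $\hat r^i_\tau$ with conditional mean $r^i_\tau\le r_\star$. Writing the total pseudo-regret as $\sum_{t=1}^T(r_\star-r_t)=\sum_{i=1}^N\sum_{\tau=1}^{n_i}(r_\star-r^i_\tau)$, I would bound the contribution $G_i:=\sum_{\tau=1}^{n_i}(r_\star-r^i_\tau)$ of each base algorithm separately, using the (unknown but assumed-to-exist) well-specified index $J\in S$ as the ``good arm'' against which every other arm is compared.

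First I would fix a clean event of probability at least $1-3\delta$, obtained by a union bound over three events each holding with probability $\ge 1-\delta$: (i) the assumed event that some $J\in S$ satisfies $\sum_{\tau\le t}(r_\star-r^J_\tau)\le C_Jt^{\alpha_J}$ for all $t$; (ii) an Azuma/Freedman bound controlling $\sum_\tau(\hat r^i_\tau-r^i_\tau)$ uniformly over all $i$ and all prefix lengths; and (iii) the analogous concentration for the running means $\hat\mu^i_\tau$. The thresholds $\sqrt{8\log(T^3N/\delta)}$ and $3\sqrt{\log(T^3N/\delta)}$ in Algorithm~\ref{alg:combiner} are exactly what make (ii)--(iii) hold, giving $|\hat\mu^i_s-\tfrac1s\sum_{\tau\le s}r^i_\tau|\le\sqrt{8\log(T^3N/\delta)/s}$ for all $i,s$.

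Next I would establish two facts about the arm $J$. \emph{Never discarded:} using the identity $\sum_{\tau=1}^n(\hat\mu^i_{\tau-1}-\hat r^i_\tau)=\sum_{k=0}^{n-1}\hat\mu^i_k-n\hat\mu^i_n$ together with $\hat\mu^J_k\le r_\star+O(\sqrt{\log/k})$ (since $r^J_\tau\le r_\star$) and $n\hat\mu^J_n\ge nr_\star-C_Jn^{\alpha_J}-O(\sqrt{n\log})$ (from the regret bound plus concentration), the discard statistic for $J$ stays below $C_Jn^{\alpha_J}+3\sqrt{\log(T^3N/\delta)\,n}$, so $J\in I_t$ for all $t$. \emph{Valid UCB:} since $\hat\mu^J_s\ge r_\star-C_Js^{\alpha_J-1}-\sqrt{8\log/s}$, the (possibly capped) bonus cancels this deficit and $U(J,t-1)\ge r_\star-R_J/T$ for all $t$ (the $\min(1,\cdot)$ case uses $r_\star\le 1$). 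Hence whenever the algorithm pulls some $i\ne J$, that $i$ maximizes $U(\cdot,t-1)$ over $I_t\ni J$, so $U(i,t-1)\ge U(J,t-1)\ge r_\star-R_J/T$; unfolding $U$ at the last pull of $i$ (count $n_i-1$) and passing from $\hat\mu^i$ to $\tfrac1{n_i}\sum r^i_\tau$ via reward concentration yields the per-arm bound $G_i\le C_in_i^{\alpha_i}+O\!\left(\sqrt{n_i\log(T^3N/\delta)}\right)-(R_i-R_J)\tfrac{n_i}{T}+O(1)$.

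Finally I would assemble the three pieces. For $J$ itself, $G_J\le C_Jn_J^{\alpha_J}\le C_JT^{\alpha_J}\le R_J$ by the first hypothesis. For the remaining arms I split $-(R_i-R_J)\tfrac{n_i}{T}=-R_i\tfrac{n_i}{T}+R_J\tfrac{n_i}{T}$: the terms $R_J\tfrac{n_i}{T}$ sum to $\tfrac{R_J}{T}\sum_{i\ne J}n_i\le R_J$ because $\sum_i n_i\le T$, while for each residual summand, maximizing $C_in^{\alpha_i}+O(\sqrt{n\log})-R_i\tfrac{n}{T}$ over $n\in[0,T]$ produces precisely the two expressions inside the $\max[\cdots]$ in the second hypothesis (the power-law optimum and the logarithmic optimum), so $\sum_{i\ne J}(\cdots)\le\sum_{i\ne J}\max[\cdots]\le R_J$. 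Adding the three contributions gives total regret $\le 3R_J\le 3\sup_{j\in S}R_j$. The hard part is this assembly: obtaining a bound that depends only on $R_J$ (not on the other $R_i$) requires both the $\sum_i n_i\le T$ accounting for the cross term $R_J\tfrac{n_i}{T}$ and the matching of the per-arm optimization to a $\max$ of a power-law and a logarithmic term. Crucially, the slope $-R_i\tfrac{n}{T}$ is negative \emph{regardless} of how $R_i$ compares to $R_J$, which is exactly what legitimizes the non-uniform choice of the $R_i$; the generous constants ($2C_i$, $1+\alpha_k$, $288$) absorb the factor-$2$ and $O(1)$ slack. Verifying the never-discarded lemma and tracking concentration constants through the capped bonus are the remaining, more routine, technical points.
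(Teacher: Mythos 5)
Your proposal is correct, and it shares the paper's overall skeleton (a clean $1-3\delta$ event, $J$ never discarded, $U(J,t-1)\ge r_\star-R_J/T$, the $\sup_Z$ optimization via Lemma \ref{thm:alphabound}, and the three-way assembly into $3R_J$), but it controls the misspecified arms by a genuinely different mechanism. The paper sums the per-round inequality $r_\star-r_t\le R_J/T+U(i_t,t-1)-r^{i_t}_{T(i_t,t)}$ over \emph{every} pull of arm $i$, which leaves it needing a bound on $\sum_\tau(\hat\mu^i_{\tau-1}-r^i_\tau)$; for a badly behaved base algorithm (e.g.\ rewards that start high and then collapse) this sum can be $\Omega(n_i)$, so the paper must invoke the elimination test through Lemma \ref{thm:limitbadalgs}. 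You instead apply the UCB comparison only once per arm, at its \emph{last} pull, and exploit the identity that $\hat\mu^i_{n_i-1}$ is exactly the average of that arm's previous rewards, so a single inequality plus one martingale bound retroactively controls the arm's entire cumulative regret. Consequently the discard rule plays no role in your treatment of the bad arms — it only must not remove $J$, which is where the computation of the paper's Lemma \ref{thm:nodiscard} enters — so your route needs one fewer lemma, gives a slightly smaller bonus coefficient ($C_in_i^{\alpha_i}$ versus the paper's $(1+\alpha_i)C_in_i^{\alpha_i}/\alpha_i$), and in fact proves the theorem whether or not the elimination step is present; what the paper's per-round decomposition buys in exchange is that it is the machinery reused for the gap-dependent Theorem \ref{thm:gap}. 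Two details to pin down in a full write-up: replacing $-(R_i-R_J)(n_i-1)/T$ by $-(R_i-R_J)n_i/T+O(1)$ requires observing that any arm pulled at least once has $R_i\le T+R_J$ (compare $U(i,\cdot)$ with $U(J,\cdot)$ at the first pull, where $\hat\mu^i_0=0$), and your never-discarded step inherits a constant mismatch already present in the paper itself (the concentration argument yields $3\sqrt{8\log(T^3N/\delta)\,n}$ while the algorithm's test is written with threshold $3\sqrt{\log(T^3N/\delta)\,n}$), so neither point is a gap relative to the paper's own standard of rigor.
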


The conditions on $R_i$ in this Theorem are somewhat opaque, so to unpack this a bit we provide the following corollary:
\begin{restatable}{Corollary}{thmcombinereta}\label{thm:combinereta}
Suppose there is a set $S\subset \{1,\dots,N\}$ such that with probability at least $1-\delta$, there is some $J\in S$ such that $\sum_{\tau=1}^t r_\star - r^J_\tau \le C_Jt^{\alpha_J}$
for all $t\le T$. Further, suppose we are given $N$ positive real numbers $\eta_1,\dots,\eta_N$. Set $R_i$ via:
\begin{align*}
    R_i&= C_iT^{\alpha_i}+\frac{(1-\alpha_i)^{\frac{1-\alpha_i}{\alpha_i}}(1+\alpha_i)^{\frac{1}{\alpha_i}}}{\alpha_i^{\frac{1-\alpha_i}{\alpha_i}}}C_i^{\frac{1}{\alpha_i}}T\eta_i^{\frac{1-\alpha_i}{\alpha_i}}\\
    &\qquad+  288\log(T^3 N/\delta) T \eta_i+\sum_{k\ne i} \frac{1}{ \eta_{k}}
\end{align*}
Then, with probability at least $1-3\delta$, the regret of Algorithm \ref{alg:combiner} satisfies:
\begin{align*}
    \regret  \le 3\sup_{j\in S} R_j&=\tilde O\left(\sup_{j\in S} C_jT^{\alpha_j}+C_j^{\frac{1}{\alpha_j}}T\eta_j^{\frac{1-\alpha_j}{\alpha_j}} \right.\\
    &\qquad\qquad\left.+ T\eta_j + \sum_{k\ne j} \frac{1}{\eta_k}\right)
\end{align*}
% In the special case that $S$ is a singleton (i.e. any well-specified algorithm is equipped with a high-probability regret bound), this yields:
% \begin{align*}
%     \regret \le \sum_{t=1}^T r_\star - r_t&\le O\left(C_JT^{\alpha_J}+ C_J^{\frac{1}{\alpha_J}}T \eta_J^{\frac{1-\alpha_i}{\alpha_i}}+\log(T^3N/\delta) T \eta_J+\sum_{i\ne J}\frac{1}{\eta_i}\right)
% \end{align*}
\end{restatable}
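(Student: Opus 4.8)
The plan is to treat this corollary purely as an algebraic verification: substitute the prescribed values of $R_i$ into the two hypotheses of Theorem~\ref{thm:combiner}, check that they hold, and then the bound $\regret \le 3\sup_{j\in S}R_j$ follows verbatim from that theorem while the $\tilde O$ expansion is just a matter of reading off terms. The first hypothesis $R_i\ge C_iT^{\alpha_i}$ is immediate, since $C_iT^{\alpha_i}$ is literally the first of the four nonnegative summands defining $R_i$.

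The real content is the second hypothesis, and the key observation is that it \emph{decouples}: its right-hand side $\sum_{k\ne i}\max[\,\cdot\,,\,\cdot\,]$ depends only on the $R_k$ with $k\ne i$, never on $R_i$ itself. I would therefore aim to show that for every index $k$ the single summand satisfies $\max[\,\cdot\,,\,\cdot\,]\le \tfrac1{\eta_k}$; summing over $k\ne i$ then produces $\sum_{k\ne i}\tfrac1{\eta_k}$, which is exactly the fourth summand of $R_i$ and hence $\le R_i$. This exposes the design principle behind the four terms of $R_i$: the first term pays for $i$'s own trivial constraint, the fourth term pays for $i$'s own sum-constraint, and the middle two terms exist precisely so that $R_i$ is large enough to control the two max-arguments when index $i$ later appears inside \emph{other} indices' constraint sums.

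Bounding the second max-argument is easy: $\tfrac{288\log(T^3N/\delta)T}{R_k}\le \tfrac1{\eta_k}$ holds because $R_k$ dominates its own third summand $288\log(T^3N/\delta)T\eta_k$. The crux is the first argument. Here I would isolate the requirement $R_k^{\alpha_k/(1-\alpha_k)}\ge \eta_k\cdot\tfrac{(1-\alpha_k)(1+\alpha_k)^{1/(1-\alpha_k)}(2C_k)^{1/(1-\alpha_k)}T^{\alpha_k/(1-\alpha_k)}}{\alpha_k}$, raise both sides to the power $\tfrac{1-\alpha_k}{\alpha_k}$, and collapse the nested exponents using $\tfrac{1}{1-\alpha_k}\cdot\tfrac{1-\alpha_k}{\alpha_k}=\tfrac1{\alpha_k}$ and $\tfrac{\alpha_k}{1-\alpha_k}\cdot\tfrac{1-\alpha_k}{\alpha_k}=1$. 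This is exactly the computation that manufactures the opaque constant $\tfrac{(1-\alpha_k)^{(1-\alpha_k)/\alpha_k}(1+\alpha_k)^{1/\alpha_k}}{\alpha_k^{(1-\alpha_k)/\alpha_k}}$, the factor $T$, and the exponent $\eta_k^{(1-\alpha_k)/\alpha_k}$ appearing in the second summand of $R_k$, confirming that $R_k$ dominates the quantity needed.

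The main obstacle I anticipate is bookkeeping the constants, not the logic. In particular the $(2C_k)^{1/(1-\alpha_k)}$ inside the first max-argument is \emph{not} matched by the bare $C_k^{1/\alpha_k}$ sitting in the second summand of $R_k$, leaving a constant-factor slack (equal to $4$ at $\alpha_k=\tfrac12$). I would handle this either by absorbing it into the leading constant, which is harmless for the final $\tilde O$ claim, or, to make the exact inequality $\regret\le 3\sup_{j}R_j$ go through literally, by reading $C_k$ as $2C_k$ in that summand, which repairs the algebra on the nose. Once the verification is done, the $\tilde O$ expansion is obtained by reading off the four summands, folding $288\log(T^3N/\delta)$ into $\tilde O$, and noting $(2C_j)^{1/\alpha_j}=\Theta\!\left(C_j^{1/\alpha_j}\right)$.
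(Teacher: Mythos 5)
Your proposal is correct and takes essentially the same route as the paper's own proof: a direct verification that the prescribed $R_i$ satisfy the two hypotheses of Theorem \ref{thm:combiner}, with the same decoupling (first summand handles $R_i\ge C_iT^{\alpha_i}$, second and third summands make $\frac{1}{\eta_k}$ dominate each max-argument, fourth summand makes $R_i$ dominate $\sum_{k\ne i}\frac{1}{\eta_k}$). The constant-factor slack you flag is genuine and is in fact glossed over by the paper: its proof asserts $\frac{1}{\eta_i}\ge \frac{(1-\alpha_i)(1+\alpha_i)^{\frac{1}{1-\alpha_i}}(2C_i)^{\frac{1}{1-\alpha_i}}T^{\frac{\alpha_i}{1-\alpha_i}}}{\alpha_i R_i^{\frac{\alpha_i}{1-\alpha_i}}}$ as following from the second summand of $R_i$, but that summand contains only $C_i^{\frac{1}{\alpha_i}}$ rather than $(2C_i)^{\frac{1}{\alpha_i}}$, so one of your two patches (absorbing the factor $2^{1/\alpha_k}\le 4$ into the $\tilde O$, or writing $2C_k$ in the definition of $R_k$) is needed to make the exact claim $\regret\le 3\sup_{j\in S}R_j$ go through literally.
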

In most settings, $C_J\ge 1$ and $\eta \ge T^{-\alpha}$, so that $C_JT^{\alpha_J}$ is smaller than $C_J^{\frac{1}{\alpha_J}}T \eta_J^{\frac{1-\alpha_i}{\alpha_i}}$.

\begin{proof}[Proof sketch of Theorem \ref{thm:combiner}]
While the full proof of Theorem \ref{thm:combiner} is deferred to the appendix, we sketch the main ideas here. For simplicity, we consider the case that $\alpha_i=\frac{1}{2}$ for all $i$, assume that $S=\{J\}$ is a singleton set, assume $C_i\ge 1$, and drop all log factors and constants. Then by some martingale concentration bounds combined with the high-probability regret bound on $\A_J$, we have that $r_\star \le U(J,t-1) + \frac{R_J}{T}$ for all $t$ with high probability. Furthermore, by martingale concentration again, we have that $\hat \mu^i_t \le r_\star + \sqrt{t}$. Therefore, an algorithm is dropped from the set $I_t$ only if $\sum_{\tau=1}^{T(i,t)} r_\star - r^i_t \ge C_i\sqrt{T(i,t)}$, which does not happen for algorithm $\A_J$ with probability at least $1-\delta$. Further, by definition of $I_t$ and another martingale bound, all algorithms $i$ satisfy $\sum_{\tau=1}^{T(i,T)} \hat \mu^i_{\tau-1} - r^i_\tau\le C_i\sqrt{T(i,T)} +\sqrt{T(i,T)}\le 2C_i\sqrt{T(i,T})$ with high probability. Let us consider the instantaneous regret $r_\star - r_t$ on some round in which $i_t\ne J$. In this case, we must have $U(i_t,t-1) \ge U(J,t-1)$, so that we can write:
\begin{align*}
    r_\star - r_t &= r_\star - r^{i_t}_{T(i_t,t)}\\
    &= r_\star - U(J,t-1)+U(J,t-1)-U(i_t,t-1) \\
    &\qquad +U(i_t,t-1)- r^{i_t}_{T(i_t,t)}\\
    &\le \frac{R_J}{T} + \hat \mu^{i_t}_{T(i_t,t-1)} - r^{i_t}_{T(i_t,t)} - \frac{R_{i_t}}{T}\\
    &=\frac{R_J}{T} + \hat \mu^{i_t}_{T(i_t,t)-1} - r^{i_t}_{T(i_t,t)} - \frac{R_{i_t}}{T}
\end{align*}
Summing over all timesteps for which algorithm $i$ is chosen, we have
\begin{align*}
    \sum_{i_t =i}r_\star - r_t&\le \frac{R_JT(i,T)}{T}+2C_i\sqrt{T(i,T}) - \frac{R_iT(i,T)}{T}\\
    &\le \frac{R_JT(i,T)}{T}+\sup_{Z\ge 0} 2C_i\sqrt{Z} - \frac{R_iZ}{T}\\
    &\le \frac{R_JT(i,T)}{T} + \frac{C_i^2T}{R_i}
\end{align*}
Now summing over all indices $i\ne J$, we use the fact that $\sum_i T(i,T)\le T$ and the assumption that $\sum_{i\ne J} \frac{C_i^2T}{R_i}\le O(R_J)$ to conclude that the regret over all rounds in which $\A_J$ is not chosen is at most $O(R_J)$. For the rounds in which $A_J$ is chosen, we experience regret $C_J\sqrt{T(J,T)}\le C_J\sqrt{T}\le R_J$, which concludes the Theorem.
\end{proof}

\section{Gap-dependent regret bounds}\label{sec:gap}

In this section, we provide an analog of the standard ``gap-dependent'' bound for UCB. As a motivating example, consider the setting in which all $\A_i$ for $i\ne J$ never play any policy $x$ with $r(x)\ge r_\star - \Delta_i$ for some $\Delta_i\ge 0$. In this case, we might hope to perform much better, in the same way that standard UCB obtains logarithmic regret when the suboptimal arms have a non-negligible gap between their rewards and the optimal rewards. Specifically, we have the following result, whose proof is deferred to Section \ref{sec:appgapproof}:

%\mnote{Does it make sense to present this theorem in the asymptotic sense? Basically for $T$ large enough, we'll always have $B = \{1,\ldots,N\}$ and that simplifies the statement a lot.}
\begin{restatable}{Theorem}{thmgap}\label{thm:gap}
Suppose that there is some $J\in\{1,\dots, N\}$ such that with probability at least $1-\delta$, we have:
\begin{align*}
    \sum_{\tau=1}^t r_\star - r_\tau^J \le C_J t^{\alpha_J}
\end{align*}
Also, for all $i$, define $T_i=T(i,T)$ and $\Delta_i$ by:
% and numbers $\Delta_i>\frac{2R_J}{T}$ for $i\in B$
% and also for all $i\in B$,
\begin{align*}
    \Delta_i =\frac{1}{T(i,T_i-1)} \sum_{\tau=1}^{T(i,T_i-1)} r_\star - r_\tau^i %\le C_it^{\alpha_i} + \Delta_i t
\end{align*}
%\mnote{In the above definition, should the $T(i, T_i - 1)$ be simply $T_i - 1$?}
And let $B\subset \{1,\dots,N\}$ with $J\notin B$ be the set of indices with $\Delta_i>\frac{2R_J}{T}$ for $i\in B$.

% for all $t\le T$.  
For $i\ne J$, let $C_i>0$ and $\alpha_i\le 1$ for $i\ne J$ be arbitrary. Then with probability at least $1-3\delta$, the regret of Algorithm \ref{alg:combiner} satisfies:
\begin{align*}
    &\sum_{t=1}^T r_\star - r_t \le   \sum_{\tau=1}^{T(J,T)}r_\star -r_\tau^J +\sum_{i\in B}1+ \frac{512\log(T^3N/\delta)}{\Delta_i} + \frac{4^{\frac{1}{1-\alpha_i}} C_i^{\frac{1}{1-\alpha_i}}}{\Delta_i^{\frac{\alpha_i}{1-\alpha_i}}}\\
    &\ +\min\left[\sum_{k\ne J, k\notin B}2 R_J,\ R_J+ \sum_{k\ne J, k\notin B} \max\left[\frac{(1-\alpha_k)(1+\alpha_k)^{\frac{1}{1-\alpha_k}} (2C_k)^{\frac{1}{1-\alpha_k}}T^{\frac{\alpha_k}{1-\alpha_k}} }{\alpha_k R_k^{\frac{\alpha_k}{1-\alpha_k}}},\ \frac{288\log(T^3N/\delta) T}{R_k}\right]\right]%+\sum_{i\in B}1+ \frac{512\log(T^3N/\delta)}{\Delta_i} + \frac{4^{\frac{1}{1-\alpha_i}} C_i^{\frac{1}{1-\alpha_i}}}{\Delta_i^{\frac{\alpha_i}{1-\alpha_i}}}% + \frac{512^{\alpha_i} C_i \log^{\alpha_i}(T^3N/\delta)}{\Delta_i^{\alpha_i}} + \frac{4^{\frac{\alpha_i}{1-\alpha_i}} C_i^{\frac{1}{1-\alpha_i}}}{\Delta_i^{\frac{\alpha_i}{1-\alpha_i}}}
\end{align*}
% Also, the regret satisfies:
% \begin{align*}
%     \sum_{t=1}^T r_\star - r_t &\le C_JT^{\alpha_J} +\sum_{i\ne J, \Delta_i + \frac{R_i}{T}-\frac{R_J}{T}< 0}R_J-R_i\\
%     &\quad\quad+\sum_{i\ne J, \Delta_i + \frac{R_i}{T}-\frac{R_J}{T}\ge 0}1+ \frac{512\Delta_i\log(T^3N/\delta)}{\left(\Delta_i+\frac{R_i}{T}-\frac{R_J}{T}\right)^2} + \frac{4^{\frac{1}{1-\alpha_i}}\Delta_i C_i^{\frac{1}{1-\alpha_i}}}{\left(\Delta_i+\frac{R_i}{T}-\frac{R_J}{T}\right)^{\frac{1}{1-\alpha_i}}}% + \frac{512^{\alpha_i} C_i \log^{\alpha_i}(T^3N/\delta)}{\Delta_i^{\alpha_i}} + \frac{4^{\frac{\alpha_i}{1-\alpha_i}} C_i^{\frac{1}{1-\alpha_i}}}{\Delta_i^{\frac{\alpha_i}{1-\alpha_i}}}
% \end{align*}
\end{restatable}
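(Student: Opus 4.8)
The plan is to condition on three high-probability events and then split the regret according to which base algorithm is played on each round. Reusing the concentration machinery behind Theorem \ref{thm:combiner}, I would first argue that, with probability at least $1-3\delta$, simultaneously: (i) the well-specified bound $\sum_{\tau=1}^t r_\star - r^J_\tau \le C_J t^{\alpha_J}$ holds, so $\A_J$ is never removed from $I_t$; (ii) the optimistic bound $r_\star \le U(J,t-1) + \frac{R_J}{T}$ holds for all $t$; and (iii) a uniform martingale bound controls the empirical means, $\hat\mu^i_s \le \frac{1}{s}\sum_{\tau=1}^s r^i_\tau + \sqrt{\frac{8\log(T^3N/\delta)}{s}}$ for every $i$ and $s$. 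On this event I would write $\regret = \sum_{i_t=J}(r_\star - r_t) + \sum_{i\in B}\sum_{i_t=i}(r_\star-r_t) + \sum_{k\notin B,\,k\ne J}\sum_{i_t=k}(r_\star-r_t)$, matching the three groups of terms in the statement. The $\A_J$ rounds contribute exactly $\sum_{\tau=1}^{T(J,T)} r_\star - r^J_\tau$, the first term.

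The heart of the argument is the large-gap group $B$, handled by a shifted-UCB pull-count bound. For $i\in B$, consider the round on which $i$ is selected for the last ($T_i$-th) time. Selection requires $U(i,t-1)\ge U(J,t-1) \ge r_\star - \frac{R_J}{T}$, and at that round $T(i,t-1)=T_i-1$, so the running average gap equals exactly $\Delta_i$ by its definition. Substituting the definition of $U(i,t-1)$ and using event (iii) to replace $\hat\mu^i_{T_i-1}$ by $r_\star - \Delta_i + \sqrt{8\log(T^3N/\delta)/(T_i-1)}$, and using $R_i\ge 0$ together with $\Delta_i > \frac{2R_J}{T}$ to absorb the shifts into $\frac{\Delta_i}{2}$, I obtain
\begin{align*}
    \frac{\Delta_i}{2} \le 2\sqrt{\frac{8\log(T^3N/\delta)}{T_i-1}} + C_i(T_i-1)^{\alpha_i-1}.
\end{align*}
Splitting on which of the two right-hand terms is at least $\frac{\Delta_i}{4}$ yields $T_i - 1 \le \frac{512\log(T^3N/\delta)}{\Delta_i^2} + \frac{4^{1/(1-\alpha_i)}C_i^{1/(1-\alpha_i)}}{\Delta_i^{1/(1-\alpha_i)}}$. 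Since the regret from playing $i$ is $\sum_{\tau=1}^{T_i}(r_\star - r^i_\tau) = (T_i-1)\Delta_i + (r_\star - r^i_{T_i}) \le (T_i-1)\Delta_i + 1$, multiplying the count bound through by $\Delta_i$ reproduces the $1 + \frac{512\log(T^3N/\delta)}{\Delta_i} + \frac{4^{1/(1-\alpha_i)}C_i^{1/(1-\alpha_i)}}{\Delta_i^{\alpha_i/(1-\alpha_i)}}$ term, summed over $i\in B$.

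For the small-gap group I would present the two bounds inside the $\min$ separately and take the smaller. The simple bound reuses the identity $\sum_{i_t=k}(r_\star-r_t) \le (T_k-1)\Delta_k + 1$: since $\Delta_k \le \frac{2R_J}{T}$ and $T_k\le T$, each such $k$ contributes at most $2R_J$ up to the additive constant, yielding $\sum_{k\ne J,\,k\notin B} 2R_J$. The refined bound instead follows the per-round decomposition already used in the proof of Theorem \ref{thm:combiner}, bounding $\sum_{i_t=k}(r_\star-r_t)$ by $\frac{R_J T_k}{T}$ plus the supremum over the realized pull count $Z$ of the width-induced regret $2C_k Z^{\alpha_k} + 3\sqrt{\log(T^3N/\delta)Z} - \frac{R_k Z}{T}$; summing $\frac{R_J T_k}{T}$ over $k$ with $\sum_k T_k\le T$ produces the isolated $+R_J$, while optimizing the two competing width terms against $-\frac{R_k Z}{T}$ produces the two entries of the $\max[\cdot,\cdot]$. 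The main obstacle is the large-gap step: one must verify there is no circularity in using the data-dependent $\Delta_i$ (it is the realized average gap at count $T_i-1$, so evaluating the selection inequality precisely at the last pull is what makes it sound), and one must carry the $\min(1,\cdot)$ truncation and the shifts $\frac{R_i}{T},\frac{R_J}{T}$ through the concentration bounds without degrading the clean constants $512$ and $4^{1/(1-\alpha_i)}$.
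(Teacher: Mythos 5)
Your proposal is correct and takes essentially the same route as the paper's proof: the same three-way decomposition of the regret (rounds of $\A_J$, indices in $B$, indices outside $B$), the same last-selection argument bounding the pull count of each $i\in B$ via $U(i,T_i-1)\ge U(J,T_i-1)\ge r_\star - \frac{R_J}{T}$ together with the martingale bound on $\hat\mu^i$ (yielding the $\frac{512\log(T^3N/\delta)}{\Delta_i^2}$ and $4^{\frac{1}{1-\alpha_i}}C_i^{\frac{1}{1-\alpha_i}}\Delta_i^{-\frac{1}{1-\alpha_i}}$ count bounds), and the same two alternative bounds for the small-gap indices producing the $\min$. The only deviations are cosmetic constant bookkeeping, and your handling of the trailing $+1$ per small-gap index is if anything slightly more careful than the paper's.
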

% \adnote{For the second statement in the theorem, maybe we can replace the $\Delta_i+\frac{R_i}{T}-\frac{R_J}{T} > 0$ check with $\Delta_i +\frac{R_i}{T}-\frac{R_J}{T} > \frac{\Delta_i}{2}$ it to get rid of the $R_i$ terms as 
% \begin{align*}
%     \sum_{t=1}^T r_\star - r_t &\le C_J T^{\alpha_J} +\sum_{i\ne J, \frac{\Delta_i}{2} + \frac{R_i}{T}-\frac{R_J}{T}< 0}2R_J-2R_i\\
%     &\quad\quad+\sum_{i\ne J, \frac{\Delta_i}{2} + \frac{R_i}{T}-\frac{R_J}{T}\ge 0}1+ \frac{512\Delta_i\log(T^3N/\delta)}{\left(\Delta_i\right)^2} + \frac{4^{\frac{1}{1-\alpha_i}}\Delta_i C_i^{\frac{1}{1-\alpha_i}}}{\left(\Delta_i\right)^{\frac{1}{1-\alpha_i}}} \\
%     &\le (2N + 1) R_J + \sum_{i\ne J}1+ \frac{512\Delta_i\log(T^3N/\delta)}{\left(\Delta_i\right)^2} + \frac{4^{\frac{1}{1-\alpha_i}}\Delta_i C_i^{\frac{1}{1-\alpha_i}}}{\left(\Delta_i\right)^{\frac{1}{1-\alpha_i}}}
% \end{align*}
% This bound will work for any base learner, including log(T) learners, right?
% }\\
Note that we have made no conditions on $R_i$ in this expression. In particular, consider the case that each algorithm $\A_i$ considers only a subset of the possible policies, and that $\A_J$ is the only algorithm that is allowed to choose the optimal policy with reward $r_\star$. Then $\Delta_i$ is at least the gap between the reward of the best policy available to $\A_i$ and $r_\star$. Thus for large enough $T$, $B$ will be all indices except $J$, so that the overall regret provided in Theorem \ref{thm:gap} is $\sum_{\tau=1}^{T(J,T)} r_\star - r_\tau^J + \tilde O\left( \sum_{i\ne J} \frac{C_i^{\frac{1}{1-\alpha_i}}}{\Delta_i^{\frac{\alpha_i}{1-\alpha_i}}}\right)$.

As another example of this Theorem in action, let us consider the setting studied by \cite{arora2020corralling}. Specifically, each $\A_i$ has a putative regret bound of $\sum_{\tau=1}^t r_\star -r^i_\tau \le \sqrt{k_i\log(t) t}$ for all $t\le T$ for some $k_i$, and $\A_J$ in fact obtains its bound. However, for all $i\ne J$, $\A_i$ also suffers $\sum_{\tau=1}^t r_\star -r^i_t\ge \Delta_i t$ for all $t$ for some constant $\Delta_i$. For example, this might occur if each $\A_i$ is restricted to some subset of actions that does not include the best action. Now, recall that we made no restrictions of $R_i$ in Theorem \ref{thm:gap}, so we are free to set $R_i=0$ for all $i$. Then, we will have $B=\{1,\dots,J-1,J+1,N\}$ and obtain the following Corollary:
\begin{Corollary}\label{thm:log}
Suppose $k_1,\dots,k_N$ are such that for some $J$, $\A_J$ guarantees $\sum_{\tau=1}^t r_\star -r^J_\tau \le \sqrt{k_J\log(t) t}$ for all $t\le T$. Further, suppose that for all $i\ne J$,  $\sum_{\tau=1}^t r_\star -r^i_t\ge \Delta_i t$ for all $t$ for some constant $\Delta_i$. Then with $C_i=\sqrt{k_i\log(T)}$, $\alpha_i=\frac{1}{2}$ and $R_i=0$, with probability at least $1-3\delta$, Algorithm \ref{alg:combiner} guarantees regret:
\begin{align*}
    \sum_{t=1}^{T(J,T)} r_\star - r^J_t +\sum_{i\ne J}\frac{512 \log(T^3 N/\delta)}{\Delta_i} + \frac{16 k_i\log(T)}{\Delta_i}
\end{align*}
\end{Corollary}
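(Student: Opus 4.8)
The plan is to derive Corollary \ref{thm:log} as a direct specialization of Theorem \ref{thm:gap} under the parameter choices $C_i = \sqrt{k_i \log T}$, $\alpha_i = 1/2$, and $R_i = 0$ for all $i$; the strategy is to track what each term in the Theorem \ref{thm:gap} bound becomes under these choices, showing that every piece either collapses to zero or simplifies to one of the two summands appearing in the corollary.

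First I would check that the hypothesis of Theorem \ref{thm:gap} is actually met. The theorem requires a putative bound of the form $\sum_{\tau=1}^t r_\star - r^J_\tau \le C_J t^{\alpha_J}$ for the well-specified $\A_J$. Since $t \le T$ implies $\log t \le \log T$, the given guarantee satisfies $\sqrt{k_J \log(t)\, t} \le \sqrt{k_J \log T}\cdot \sqrt{t} = C_J t^{1/2}$, so the hypothesis holds with $C_J = \sqrt{k_J \log T}$ and $\alpha_J = 1/2$. Thus the conclusion of Theorem \ref{thm:gap} applies verbatim.

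Next, with $R_i = 0$ for all $i$ (in particular $R_J = 0$), the threshold $\frac{2R_J}{T}$ defining $B$ becomes $0$; since each $i \ne J$ has strictly positive gap by assumption, every $i \ne J$ lands in $B$, giving $B = \{1,\dots,N\}\setminus\{J\}$. Consequently the index set $\{k \ne J : k \notin B\}$ is empty, so both branches of the $\min[\cdots]$ term evaluate to $0$ (the sums are empty and $R_J = 0$), and the whole term disappears. The remaining per-index summand $\frac{4^{1/(1-\alpha_i)} C_i^{1/(1-\alpha_i)}}{\Delta_i^{\alpha_i/(1-\alpha_i)}}$ simplifies at $\alpha_i = 1/2$, where $\frac{1}{1-\alpha_i} = 2$ and $\frac{\alpha_i}{1-\alpha_i} = 1$, to $\frac{16 C_i^2}{\Delta_i} = \frac{16 k_i \log T}{\Delta_i}$, which is exactly the second corollary term.

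The one point requiring care is the two meanings of the symbol $\Delta_i$: in Theorem \ref{thm:gap} it denotes the empirical average instantaneous regret $\frac{1}{T(i,T_i-1)}\sum_{\tau=1}^{T(i,T_i-1)} r_\star - r^i_\tau$, whereas in the corollary it is the constant lower bound from the assumption $\sum_{\tau=1}^t r_\star - r^i_\tau \ge \Delta_i t$. That assumption forces the theorem's average to be at least the corollary's constant, and since every surviving term is decreasing in the denominator $\Delta_i$, replacing the former by the latter only inflates the bound — the correct direction for an upper bound. Finally the leftover $\sum_{i\in B} 1 = N-1$ is absorbed: rewards lie in $[0,1]$ so $\Delta_i \le 1$, whence $\frac{512\log(T^3N/\delta)}{\Delta_i} \ge 1$ dominates each unit term. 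Collecting the surviving pieces then yields precisely the stated bound. I expect no genuine obstacle here beyond careful bookkeeping; the only subtlety is getting the direction of the $\Delta_i$ substitution right and justifying why the additive $+1$ terms can be dropped.
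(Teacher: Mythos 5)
Your proposal is correct and follows exactly the paper's own route: the paper derives this corollary by noting that with $R_i=0$ every $i\ne J$ falls in $B$ (so the $\min[\cdots]$ term vanishes) and then specializing Theorem \ref{thm:gap} at $C_i=\sqrt{k_i\log T}$, $\alpha_i=\tfrac12$. You are in fact more careful than the paper on the two points it glosses over — the identification of the theorem's empirical $\Delta_i$ with the corollary's constant gap (your monotonicity argument is the right justification), and the leftover $\sum_{i\in B}1$ term, which the paper silently drops; note only that your ``absorption'' of that term is not literally valid (a bound of $1+X_i$ with $X_i\ge 1$ does not imply a bound of $X_i$ without inflating a constant), but this discrepancy is inherited from the paper's own statement rather than introduced by you.
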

Notably, the first term is the \emph{actual} regret of if $\A_J$ rather than the regret \emph{bound} $C_J\sqrt{T}$. Thus if $\A_J$ outperforms this bound and obtains logarithmic regret, our combiner algorithm will also obtain logarithmic regret, which is not obviously possible using techniques based on the Corral algorithm \cite{agarwal2017corralling}. Note that this result also appears to improve upon \cite{arora2020corralling} (Theorem 4.2) by removing a $\log(T)$ factor, but this is because we have assumed knowledge of the time horizon $T$ in order to set $C_i$.

\section{Unknown and In-Expectation Bounds on Base Algorithms}\label{sec:doubling}

In this section, we show how to remedy two surface-level issues with Algorithm \ref{alg:combiner}. First, we require knowledge of the values $C_i$ and $\alpha_i$. Second, we require a high-probability regret bound for the well-specified base algorithm $\A_J$. Here, we show that a simple duplication and doubling-based technique suffices to address both issues. 

First, let us gain some intuition for how to convert an in-expectation bound into a high-probability bound suitable for use in Theorem \ref{thm:combiner}. Suppose we are given an algorithm that maintains expected regret $C_iT^{\alpha_i}$. We duplicate this algorithm $M=O(\log_2(1/\delta))$ times. Then by Markov inequality, each individual duplicate obtains regret at most $2C_iT^{\alpha_i}$ with probability at least $1/2$. Therefore, with probability at least $1-\frac{1}{2^M}=1-\delta$, at least one of the duplicates obtains regret at most $2C_iT^{\alpha_i}$. Then in the terminology of Theorem \ref{thm:combiner}, we let $S$ be the set of duplicate algorithms and so we satisfy the hypothesis of the Theorem. This argument is slightly flawed as-is because we need an \emph{anytime} regret bound for the base algorithms, but it turns out this is fixable by another use of Markov and union bound inequality. We then use a variant on the doubling trick to avoid requiring knowledge of $C$ and $\alpha$. The full construction is described below in Theorem \ref{thm:expectation}, with proof in Appendix \ref{sec:doublingproof}.

\begin{restatable}{Theorem}{thmexpectation}\label{thm:expectation}
Suppose that for some $J$, there is some unknown $\bar C_J$ and $\bar \alpha_J$ such that $\A_J$ ensures $\E[\sum_{\tau=1}^t r_\star - r^J_\tau]\le \bar C_Jt^{\bar\alpha_J}$ for all $t \leq T$. Further, suppose we are given positive numbers $\eta_1,\dots,\eta_N$. Let $\delta\in (0,1)$ be some user-specified failure probability. For $M=\lceil \log_2(1/\delta) \rceil$ and $K=\lceil \log_2(T)\rceil $ and $L=\lceil \frac{\log_2(T)}{2}\rceil $, we duplicate each $\A_i$ $MKL$ times, specifying each duplicate $\A_{i,x,y,z}$ by a multi-index $(i,x,y,z)\in[N]\times [M]\times [K]\times [L]$. To each duplicate we associate $C_{i,x,y,z} = 2^y$ and $\alpha_{i,x,y,z}=\min\left(1, 1/2 + \frac{z}{\log(T)}\right)$. Let $\eta_{i,x,y,z} = \eta_i$. Specify $R_{i,x,y,z}$ as a function of $\eta_{i,x,y,z}$ as described in Theorem \ref{thm:combinereta}. Then with probability at least $1-3\delta$, Algorithm \ref{alg:combiner} guarantees regret:

\begin{align*}
    &\regret \le O\left(\bar C_{J}T^{\bar \alpha_J}+\bar C_{J}^{\frac{1}{\bar \alpha_J}} T^{\bar \alpha_J} \eta_J^{\frac{1-\bar \alpha_J}{\bar \alpha_J}}\right.\\
    &\qquad\left.+\log\left(\tfrac{T^3N}{\delta}\right)T\eta_J+\sum_{i=1}^N \frac{\log(1/\delta)\log^2(T)}{\eta_i}\right)
\end{align*}
so that the expected regret is bounded by:
\begin{align*}
    &\E[\regret] \le O\left(\bar C_{J}T^{\bar \alpha_J}+\bar C_{J}^{\frac{1}{\bar \alpha_J}} T^{\bar \alpha_J} \eta_J^{\frac{1-\bar \alpha_J}{\bar \alpha_J}}\right.\\
    &\qquad\left.+\log\left(\tfrac{T^3N}{\delta}\right) T \eta_J+\sum_{i=1}^N \frac{\log(1/\delta)\log^2(T)}{\eta_i} + T\delta\right)
\end{align*}
\end{restatable}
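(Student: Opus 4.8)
The plan is to reduce everything to the already-established high-probability result of Corollary \ref{thm:combinereta} by feeding it the $NMKL$ duplicate algorithms $\A_{i,x,y,z}$ and exhibiting a single duplicate of $\A_J$ that plays the role of the well-specified index. Since $\bar\alpha_J\in[1/2,1]$, and we may assume $\bar C_J T^{\bar\alpha_J}\le T$ (otherwise the claimed bound is vacuous as $\regret\le T$ always), we have $\bar C_J\le T^{1-\bar\alpha_J}\le\sqrt T$, so both grids are wide enough: there is a $z^\star\in[L]$ with $\alpha_{z^\star}:=\min(1,1/2+z^\star/\log T)\in[\bar\alpha_J,\bar\alpha_J+1/\log T]$, and a $y^\star\in[K]$ with $2^{y^\star}$ lying in a constant-and-log-factor window above $\bar C_J$ (specified below). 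Rounding the exponent up is harmless because $t^{\alpha_{z^\star}}\le t^{\bar\alpha_J}\,t^{1/\log T}\le e\,t^{\bar\alpha_J}$ for $t\le T$, and rounding the constant up costs only a factor of two; thus the target duplicates $\{\A_{J,x,y^\star,z^\star}\}_{x\in[M]}$ inherit the in-expectation anytime bound $\E[\sum_{\tau=1}^t r_\star-r^J_\tau]\le \bar C_J t^{\bar\alpha_J}\le e\,2^{y^\star}t^{\alpha_{z^\star}}$.

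First I would turn this in-expectation bound into an anytime high-probability bound for a single duplicate. The key observation is that the pseudo-regret path $P_t:=\sum_{\tau=1}^t r_\star-r^J_\tau$ is \emph{non-decreasing}, since $r^J_\tau=r(x^J_\tau)\le r_\star$ makes every increment nonnegative. Hence it suffices to control $P_t$ at the dyadic times $t\in\{1,2,4,\dots\}$: applying Markov's inequality at each of the at most $\lceil\log_2 T\rceil+1$ dyadic scales with deviation parameter $\lambda=O(\log T)$ and taking a union bound over the scales gives, with probability at least $1/2$, that $P_{t}\le \lambda\,\bar C_J\,(2t)^{\bar\alpha_J}$ for every $t\le T$, where monotonicity is what lets us pass from a dyadic endpoint back to an arbitrary $t$ at the cost of the factor $2^{\bar\alpha_J}\le 2$. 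Choosing $y^\star$ so that $2^{y^\star}$ exceeds this realized constant $2\lambda\bar C_J=O(\log(T)\bar C_J)$ (still within the grid, since $O(\log T)\sqrt T\le T$), each single duplicate then satisfies the \emph{exact} hypothesis $P_t\le 2^{y^\star}t^{\alpha_{z^\star}}$ of Theorem \ref{thm:combiner} with probability at least $1/2$.

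Next I would boost the probability: because the $M=\lceil\log_2(1/\delta)\rceil$ duplicates $\A_{J,x,y^\star,z^\star}$ are driven by independent reward streams, the probability that none of them satisfies its anytime bound is at most $2^{-M}\le\delta$. Setting $S=\{(J,x,y^\star,z^\star):x\in[M]\}$ therefore verifies the hypothesis of Corollary \ref{thm:combinereta} for the enlarged collection of $N':=NMKL$ base algorithms, with well-specified constant $C_J=2^{y^\star}=O(\log(T)\bar C_J)$ and exponent $\alpha_{z^\star}$ within $1/\log T$ of $\bar\alpha_J$. Invoking Corollary \ref{thm:combinereta} yields $\regret\le 3\sup_{j\in S}R_j$ with probability $1-3\delta$; substituting the well-specified parameters produces the terms $\bar C_J T^{\bar\alpha_J}$ and $\bar C_J^{1/\bar\alpha_J}T^{\bar\alpha_J}\eta_J^{(1-\bar\alpha_J)/\bar\alpha_J}$ (the $\alpha$-rounding and the $\log T$ in $C_J$ only inflate these by constants and polylogs absorbed into the $O$), while the residual sum $\sum_{k\ne j}1/\eta_k$ collapses to $\sum_{i=1}^N (MKL)/\eta_i=\sum_{i=1}^N\frac{\log(1/\delta)\log^2(T)}{\eta_i}$ up to constants, because all $MKL$ duplicates of $\A_i$ share the common weight $\eta_i$. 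Finally, since the per-round regret is at most $1$ we always have $\regret\le T$; combining the event of probability $1-3\delta$ with this worst case gives $\E[\regret]\le(\text{high-probability bound})+3\delta T$, which is the advertised $+T\delta$ term.

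The main obstacle is the in-expectation-to-anytime conversion. A single application of Markov at the horizon would only yield a \emph{uniform} bound $P_t\le 2\bar C_JT^{\bar\alpha_J}$, which is useless for Theorem \ref{thm:combiner} because it forces $C_J\gtrsim \bar C_J T^{\bar\alpha_J}$ at $t=1$ and destroys the regret; recovering the correct $t^{\bar\alpha_J}$ \emph{shape} is exactly what the dyadic decomposition plus monotonicity buys, at the unavoidable price of a $\log T$ factor on the effective constant. The two subtleties to get right are (i) that the constant the algorithm literally plugs into its confidence widths and drop test, namely $2^{y^\star}$, must be chosen to dominate the realized high-probability constant rather than $\bar C_J$ itself, and (ii) that the grids for $C$ and $\alpha$ remain wide enough after this inflation, which follows from the a priori ceiling $\bar C_J\le\sqrt T$.
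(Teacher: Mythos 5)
Your proposal is correct and takes essentially the same route as the paper's own proof: round $(\bar C_J,\bar\alpha_J)$ onto the $(y,z)$ grid, convert the in-expectation bound into an anytime high-probability bound via Markov's inequality at dyadic times combined with monotonicity of the pseudo-regret, boost the success probability over the $M$ independent duplicates, and invoke Corollary \ref{thm:combinereta} with $S$ the set of duplicates of $\A_J$. If anything, you are slightly more careful than the paper: its stated claim uses the grid constant $2^{\bar y}\approx \bar C_J$ while its Markov/union argument only establishes the bound with an extra $O(\log T)$ factor, and your choice of $2^{y^\star}$ dominating $O(\log T)\bar C_J$ (legitimate since $\bar C_J\le\sqrt{T}$ keeps this within the grid) patches exactly that mismatch.
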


\section{Examples and Optimality}\label{sec:examples}

In this section, we provide some illustrative examples of how our approach can be used. We will also highlight a few examples in which our construction matches lower bounds. The proofs are straightforward applications of Theorems \ref{thm:combiner} and \ref{thm:combinereta}, and are deferred to Appendix \ref{sec:examplesproofs}.

\subsection{$K$-Armed Bandits}

For our first example, suppose that the space $A$ is a finite set of $K$ arms and $X$ consists of the $K$ constant functions mapping all contexts to a single arm. This setup describes the classic $K$-armed bandit problem. Let $N=K$ and suppose each $\A_i$ is a naive algorithm that simply pulls arm $i$ on every round. We consider a $\A_i$ to be well-specified if the $i$th arm is in fact the optimal arm, in which case it is clear we may set $C_i=0$, $\alpha_i=\frac{1}{2}$ for all $i$. In the high-probability setting, we let $S$ be the singleton set containing only the unknown optimal index. In this case, the conditions on $R_i$ of Theorem \ref{thm:combiner} correspond almost exactly (up to constants and log factors) with the pareto frontier for regret bounds described in \cite{lattimore2015pareto}, showing that using our construction in this setting allows us to match this lower bound frontier.

\subsection{Misspecified Linear Bandit}

%\textcolor{red}{Many issues here. By our model, $x_t(c_t) \in A$; so this view of context being a mapping from $A \rightarrow \mathbb{R}^d$ doesn't fit. I think we need $A$ to be a set of $K$ vectors in $\mathbb{R}^d$, and $X$ to be a finite set of $K$ policies. Let $C$ be a singleton (so context is essentially ignored). Policy $x_i$ is so that $x_i(c) = a_i$ and the reward $r(c, a)$ is a linear function of $a$. In general, the reward is an arbitrary function of $a$.
%Also we have used $X$ to be the space of policies, so using $X$ and $Y$ as numbers is inconsistent.}
For our second example, suppose that the space $A$ is a finite set of $K$ arms, and that the context $c_t$ is a constant $c_t=c$ and provides a feature $c(a)\in \R^d$ for each arm $a\in A$. The space of policies is the set of $K$ constant functions again. In this case, it is possible  the reward $r(c,a)$ is a fixed linear function $\langle \beta, c(a)\rangle$ for some $\beta\in \R^d$, in which case the linUCB algorithm \cite{chu2011contextual} can obtain regret $\tilde O( \sqrt{d \log(K) t})$. On the other hand, in general the reward might be totally unrelated to the context, in which case one might wish to fall back on the UCB algorithm which obtains regret $\tilde O(\sqrt{Kt})$. By setting $\A_1$ to be linUCB and $\A_2$ to be ordinary UCB, we say that $S=\{1\}$ if the rewards are indeed linear, and $S=\{2\}$ otherwise. Further, we set $C_1=\sqrt{d\log(K)}$, $C_2=\sqrt{K}$ and $\alpha_1=\alpha_2=\frac{1}{2}$. Now let $P$ and $Q$ be any two numbers such that $P Q=KT$ and both $P$ and $Q$ are greater than $\sqrt{d\log(k) T}$. Then appropriate application of Corollary \ref{thm:combinereta}, yields regret $O(P)$ in the linear setting and regret $O(Q)$ in general. This again matches the frontier of regret bounds for this scenario described in  Theorem 24.4 of \cite{lattimore2018bandit} (see also Lemma 6.1 of \cite{pacchiano2020model}). Formally, we have the following Corollary:
\begin{restatable}{Corollary}{thmmisspecified}\label{thm:misspecified}
Suppose $c_t:A\to \R^d$ for all $t$. Suppose $K\ge d\log(K)$. Let $\A_1$ be an instance of linUCB and $\A_2$ be an instance of the ordinary UCB algorithm. Let $P$ and $Q$ be any two numbers such that $PQ=KT$ and both are greater than $\sqrt{d\log(K) T}$. We consider two cases, either the reward is a linear function of $c_t$, or it is not. Then $\A_1$ guarantees regret $\tilde O(\sqrt{d\log(K)t})$ with probability $1-\delta$ in the first case, while $\A_2$ guarantees regret $\tilde O(\sqrt{K t})$ in the second case. Set $\eta_1 = \frac{P}{d\log(K) T}=\frac{K}{Qd\log(K)}$ and $\eta_2 = \frac{1}{P}=\frac{Q}{KT}$. Then using the $R_i$ construction of Corollary \ref{thm:combinereta}, with probability at least $1-3\delta$, we guarantee regret $\tilde O(P)$ with linear rewards, and $\tilde O(Q)$ otherwise.
\end{restatable}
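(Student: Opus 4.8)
The plan is to obtain Corollary~\ref{thm:misspecified} as a direct instantiation of Corollary~\ref{thm:combinereta} with $N=2$, so that the real work lies entirely in (i) verifying the hypotheses and (ii) simplifying the resulting bound under the prescribed $\eta_1,\eta_2$. First I would pin down the two base algorithms as satisfying the regret requirement~(\ref{eqn:regret_requirement}) with $\alpha_1=\alpha_2=\tfrac12$: the standard anytime high-probability confidence-ellipsoid bound for linUCB gives $\A_1$ the guarantee $\sum_{\tau=1}^t r_\star - r^1_\tau \le C_1 t^{1/2}$ with $C_1=\tilde O(\sqrt{d\log K})$ \emph{whenever the reward is linear}, while the standard UCB bound gives $\A_2$ the guarantee with $C_2=\tilde O(\sqrt K)$ in \emph{every} environment (the anytime $\sqrt{\log(T^3N/\delta)}$ factors being absorbed into the $\tilde O$). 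Accordingly I take $S=\{1\}$ in the linear case and $S=\{2\}$ otherwise; since $S$ is a singleton, the bound of Corollary~\ref{thm:combinereta} reduces to controlling a single $R_j$.

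Next I would specialize the bound. With $\alpha_j=\tfrac12$ the exponents collapse ($\tfrac{1-\alpha_j}{\alpha_j}=1$, $\tfrac{1}{\alpha_j}=2$), so Corollary~\ref{thm:combinereta} gives
\begin{align*}
    \regret = \tilde O\left(\sup_{j\in S}\ C_j\sqrt{T} + C_j^2 T\eta_j + T\eta_j + \sum_{k\ne j}\frac{1}{\eta_k}\right).
\end{align*}
It then remains to substitute $\eta_1=\tfrac{P}{d\log(K)T}$ and $\eta_2=\tfrac{1}{P}$ (equivalently $\eta_1=\tfrac{K}{Qd\log K}$ and $\eta_2=\tfrac{Q}{KT}$, the two forms being identical because $PQ=KT$) and check each of the four terms individually. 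In the linear case ($j=1$) the two dominant terms evaluate exactly: $C_1^2 T\eta_1 = d\log(K)\,T\cdot\tfrac{P}{d\log(K)T}=P$ and $\sum_{k\ne 1}\tfrac{1}{\eta_k}=\tfrac{1}{\eta_2}=P$, while the remaining two are lower order, $C_1\sqrt T=\sqrt{d\log(K)\,T}\le P$ by hypothesis and $T\eta_1=\tfrac{P}{d\log K}\le P$; hence $\regret=\tilde O(P)$. In the general case ($j=2$) the analogous computation gives $C_2^2 T\eta_2 = K\,T\cdot\tfrac{Q}{KT}=Q$, $\tfrac{1}{\eta_1}=\tfrac{Q\,d\log K}{K}\le Q$ using the assumption $K\ge d\log K$, and $T\eta_2=\tfrac{Q}{K}\le Q$, so that $\regret=\tilde O(Q)$.

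The one subtlety — the main obstacle — is the leading term $C_2\sqrt T=\sqrt{KT}$ in the general case, which is bounded by $Q$ only when $Q\ge\sqrt{KT}$, i.e. on the part of the frontier $PQ=KT$ with $P\le\sqrt{KT}\le Q$. This is exactly the meaningful regime: since $\sqrt{KT}$ is the minimax-optimal rate for $K$-armed bandits, a general-case target $Q<\sqrt{KT}$ is infeasible for any algorithm containing UCB, so nothing is lost. Equivalently, this is precisely the condition flagged in the remark following Corollary~\ref{thm:combinereta} under which $C_jT^{\alpha_j}$ is dominated by $C_j^{1/\alpha_j}T\eta_j^{(1-\alpha_j)/\alpha_j}$, and for $j=1$ the same condition is exactly the assumed $P\ge\sqrt{d\log(K)T}$. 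Collecting the two cases yields the claimed $\tilde O(P)$ and $\tilde O(Q)$ guarantees, each holding with probability $1-3\delta$ as inherited directly from Corollary~\ref{thm:combinereta}.
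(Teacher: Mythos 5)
Your proposal is correct and follows essentially the same route as the paper: instantiate Corollary \ref{thm:combinereta} with $\alpha_1=\alpha_2=\tfrac12$, $C_1=\tilde O(\sqrt{d\log K})$, $C_2=\tilde O(\sqrt{K})$, and $S=\{1\}$ or $S=\{2\}$ according to whether the rewards are linear, then evaluate the terms of $R_j$ under the prescribed $\eta_1,\eta_2$. You are in fact more careful than the paper, whose one-line proof silently drops the $C_2 T^{\alpha_2}=\sqrt{KT}$ term in $R_2$; your observation that the general-case guarantee $\tilde O(Q)$ genuinely requires $Q\ge\sqrt{KT}$ (equivalently $P\le\sqrt{KT}$, the only achievable part of the frontier given the $K$-armed minimax lower bound) is a legitimate refinement of the stated corollary rather than a flaw in your argument.
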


Note that we leverage our ability to use non-uniform $\eta$ values in this Corollary. It is not so obvious how to obtain this full frontier using the prior uniform bound (\ref{eqn:corral}), although it is of course conceivable that more detailed analysis of prior algorithms might allow for this same result.

\subsection{Linear Model Selection}

For our third example, we consider the case of model selection for linear bandits. In this setting, the context $c_t$ again specifies features $c_t(a)\in \R^d$ for each arm $a\in A$, and we are guaranteed that the reward is a linear function of the context. The question now is whether the full $d$-dimensions are actually necessary. Specifically, if there is some $d_\star$ such that the reward is in fact a linear function of the first $d_\star$ coordinates of the context only, then we would like our regret to depend on $d_\star$ rather than $d$. This setting has been studied before in the context of a finite set of actions in \cite{foster2019model, chatterji2019osom}. These prior works impose some additional technical conditions on the distribution of rewards and contexts provided by the environment. Under their conditions, \cite{chatterji2019osom} obtains regret $\tilde O(\sqrt{d_\star T})$ while under somewhat weaker conditions, \cite{foster2019model} obtains regret $\tilde O(\sqrt{d_\star T} + T^{3/4})$. In contrast, we require no extra conditions, and obtain regret $\tilde O(d_\star\sqrt{T})$. The construction is detailed in the following Corollary:

\begin{restatable}{Corollary}{thmmodelselection}\label{thm:modelselection}
Suppose $c_t\in \R^d$ for all $t$ and the reward is always a linear function of $c_t$. Suppose that the reward is in fact purely a linear function of the first $d_\star$ coordinates of $c_t$. Suppose the action set $A$ has finite cardinality $K$. Let $\A_i$ be an instance of linUCB of \cite{chu2011contextual} restricted to the first $2^i$ coordinates of the context. Set $C_i=\sqrt{2^i \log(K)}$, $\alpha_i=\frac{1}{2}$, and $\eta_i=\frac{1}{\sqrt{T}}$. Then using the instantiation of Algorithm \ref{alg:combiner} from Theorem \ref{thm:combinereta}, we obtain regret $\tilde O\left( d_\star \log(K)\sqrt{T}\right)$.
If instead the set $A$ is infinite, let $\A_i$ be an instance of the linUCB algorithm for infinite arms \cite{abbasi2011improved, Dani2008StochasticLO} restricted to the first $2^i$ coordinates. Set $C_i=2^i$, $\alpha_i=\frac{1}{2}$, and $\eta_i=\frac{1}{\sqrt{T}}$. Then we obtain regret $\tilde O\left( d_\star^2\sqrt{T}\right)$.
\end{restatable}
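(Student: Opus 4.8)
The plan is to instantiate Corollary \ref{thm:combinereta} with the singleton set $S = \{J\}$, where $J = \lceil \log_2 d_\star \rceil$ is the smallest index whose associated dimension $2^J$ is at least $d_\star$. The only structural fact I need is that $\A_J$ is well-specified. Since the true reward is a linear function of the first $d_\star$ coordinates of $c_t$, it is a fortiori a linear function of the first $2^J \ge d_\star$ coordinates (pad the unknown parameter $\beta$ with zeros on coordinates $d_\star + 1, \dots, 2^J$). Hence the restriction of linUCB to the first $2^J$ coordinates is solving a genuinely realizable linear bandit problem in $2^J$ dimensions, and the standard anytime high-probability analysis of linUCB (\cite{chu2011contextual} in the finite-arm case, \cite{abbasi2011improved, Dani2008StochasticLO} in the infinite-arm case) guarantees that with probability at least $1 - \delta$ we have $\sum_{\tau=1}^t r_\star - r^J_\tau \le C_J t^{1/2}$ for all $t \le T$, with $C_J = \sqrt{2^J \log(K)}$ in the finite-arm case and $C_J = 2^J$ in the infinite-arm case. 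This verifies the hypothesis of Corollary \ref{thm:combinereta}.

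It then remains to substitute $\eta_i = 1/\sqrt{T}$ and $\alpha_i = \tfrac{1}{2}$ into the bound of Corollary \ref{thm:combinereta}. With $\alpha_J = \tfrac12$ the four terms evaluate to $C_J \sqrt{T}$, to $C_J^{2} T \eta_J = C_J^{2}\sqrt{T}$, to $T\eta_J = \sqrt{T}$, and to $\sum_{k \ne J} 1/\eta_k = (N-1)\sqrt{T}$. Because the base algorithms range over $i = 1, \dots, \lceil \log_2 d \rceil$, we have $N = O(\log d)$, so the last term is $\tilde O(\sqrt{T})$ and is absorbed into the $\tilde O$; the dominant term is therefore $C_J^2 \sqrt{T}$.

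Finally, substituting $C_J$ in each regime and using $2^J = \Theta(d_\star)$ completes the computation: for finite arms $C_J^2 \sqrt{T} = 2^J \log(K)\sqrt{T} = O(d_\star \log(K)\sqrt{T})$, which yields the claimed $\tilde O(d_\star \log(K)\sqrt{T})$, while for infinite arms $C_J^2 \sqrt{T} = 4^J\sqrt{T} = O(d_\star^2 \sqrt{T})$, yielding $\tilde O(d_\star^2 \sqrt{T})$. I expect the only genuinely delicate point to be confirming that linUCB restricted to $2^J$ coordinates really does enjoy an anytime, high-probability regret bound of exactly the shape required by (\ref{eqn:regret_requirement}): the usual confidence-ellipsoid bounds are naturally anytime and high-probability, so this should be routine, but one must take care that the logarithmic factors folded into $C_J$ and the failure probability $\delta$ are tracked consistently with the hypotheses of Theorem \ref{thm:combiner}.
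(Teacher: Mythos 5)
Your proposal is correct and follows essentially the same route as the paper: identify the smallest index $J$ with $2^J \ge d_\star$, note that the corresponding restricted linUCB/OFUL instance is well-specified (and enjoys the standard anytime high-probability bound), and then invoke Corollary \ref{thm:combinereta} with $\alpha_i = \tfrac{1}{2}$, $\eta_i = 1/\sqrt{T}$, observing that $N = O(\log d)$ keeps the $\sum_{k\ne J} 1/\eta_k$ term at $\tilde O(\sqrt{T})$ so that $C_J^2\sqrt{T}$ dominates. The paper's proof is just a terser version of this same argument, citing the same base-algorithm guarantees and deferring the arithmetic to the reader.
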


\section{Experimental Validation}
\label{sec:experiments}
\begin{figure*}[tbp]
\centering
  \begin{subfigure}{.4\textwidth}
  \centering
  \includegraphics[width=\linewidth]{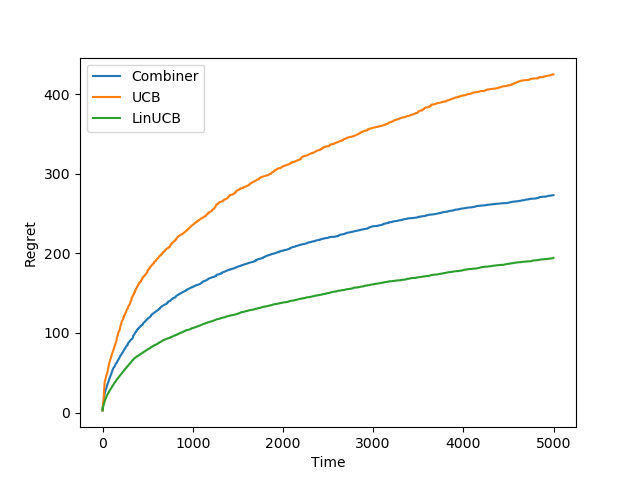}
  \caption{Linear Rewards}
  \label{fig:alpha0}
\end{subfigure}%
\begin{subfigure}{.4\textwidth}
  \centering
  \includegraphics[width=\linewidth]{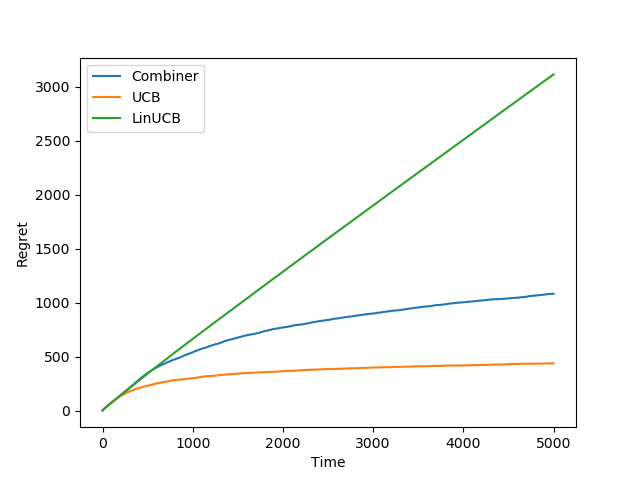}
  \caption{Non-Linear Rewards}
  \label{fig:alpha1}
\end{subfigure}
\caption{Misspecified Linear Bandit}
\label{fig:misspecified}
\end{figure*}
% \begin{wrapfigure}{r}{0.5\textwidth}
\begin{figure}[tbp]
    \centering
    \includegraphics[scale = 0.4]{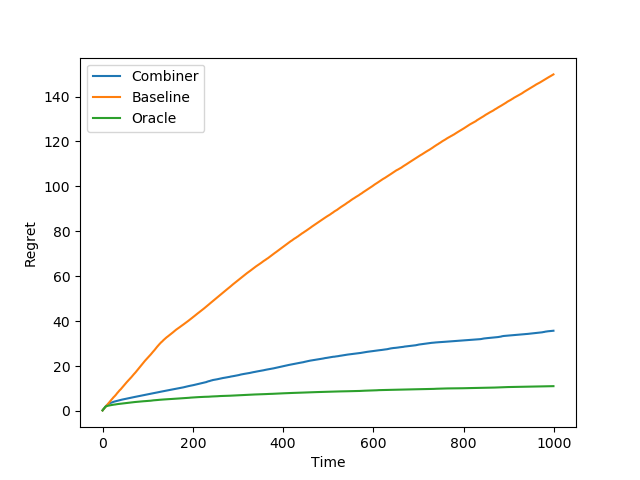}
    \caption{Model Selection Experiments}
    \label{fig:modelselection}
\end{figure}
% \end{wrapfigure}
We now demonstrate empirical validation of our results in two different application settings. For our first experiment, we consider the misspecified linear bandit setting. We use ordinary UCB and linUCB %\cite{abbasi2011improved} 
as the two base algorithms. For simplicity we focus on the ordinary stochastic bandits framework (i.e. we assume the context remains fixed over time). Each arm $a \in [K]$ is associated with a feature vector $x_a \in \mathbb{R}^d$ that is chosen from the uniform distribution on the unit sphere. Let $\beta \in \mathbb{R}^d$ also chosen from the unit sphere be a fixed unknown parameter vector. Finally, for each arm $a$, we choose $\mu_a\in \mathbb{R}$ to be specified later. The reward for arm $a$ at any time step $t$ is set to be $\alpha \mu_a + (1-\alpha) \sqrt{d} \cdot \langle \beta, x_a \rangle + \eta_t$ where $\eta_t \sim \mathcal{N}(0, \sigma)$ is independently sampled noise and $0 \leq \alpha \leq 1$ is a fixed constant. Let $a^\star = \argmin_{a \in [K]} \langle \beta, x_a \rangle$ be the worst arm with respect to the linear component of the reward. For any arm $a \neq a^\star$, we set $\mu_a = 0.25\sqrt{d} \cdot \langle \beta, x_a \rangle$, whereas we set $\mu_{a^\star} = 1$.
We consider two different settings of $\alpha$. When $\alpha = 0$, the reward is simply a linear function of the arms and we expect the linUCB algorithm to outperform the ordinary UCB algorithm. On the other hand, when $\alpha = 1$, the rewards are constructed so that the linUCB algorithm essentially never chooses the arm $a^\star$ and thus incurs linear regret whereas the ordinary UCB algorithm still guarantees $O(\sqrt{T})$ regret. Figures \ref{fig:alpha0} and \ref{fig:alpha1} show the performance on these two settings respectively. In both settings, the combiner uses UCB and linUCB as the base algorithms and the putative regret bounds for UCB and linUCB are computed empirically on independent instances of the non-linear and linear reward settings respectively.

For our second experiment, we consider the model selection problem in linear bandits. As earlier, each arm $a \in K$ is associated with a feature vector $x_a$ sampled independently from the unit sphere. The parameter vector $\beta \in \mathbb{R}^d$ is chosen so that first each $\beta_j \sim \mathcal{N}(0,1), \ \forall j \leq d^*$ and $\beta_j = 0$, otherwise, and then normalized to be of unit length. The reward for arm $a$ at any time step $t$ is set to be $\langle \beta, x_a \rangle + \eta_t$ where as earlier $\eta_t \sim \mathcal{N}(0,\sigma)$ is independently sampled noise. Figure \ref{fig:modelselection} demonstrates the performance of three different algorithms in this setting with $d=128, d^*=8$ and $K=1000$. The \emph{Baseline} algorithm is a vanilla LinUCB algorithm that works on the ambient dimension $d$, while the \emph{Oracle} is a LinUCB algorithm that works on the true dimensionality of the reward parameter $d^*$. The \emph{Combiner} algorithm is an implementation of Algorithm \ref{alg:combiner} with $\log d$ base algorithms where each base algorithm $A_i$ is an instance of LinUCB restricted to the first $2^i$ dimensions of the features. The putative regret bound $C_i$ for each base algorithm $A_i$ is computed empirically on independent instances where the corresponding algorithm is well-specified. We set the target regret bound $R_i \leftarrow (C_i^2 + N) \sqrt{T}$ where $N = \log d$ is the number of base algorithms. The experimental results validate our theoretical findings and show that the combiner algorithm is able to adapt to the true dimensionality of the rewards.

% \begin{figure}
% \centering
% \begin{subfigure}{.33\textwidth}
%   \centering
%   \includegraphics[width=\linewidth]{model_selection.png}
%   \caption{Model Selection}
%   \label{fig:modelselection}
% \end{subfigure}
%   \begin{subfigure}{.33\textwidth}
%   \centering
%   \includegraphics[width=\linewidth]{misspecified_0.png}
%   \caption{Well-specified Linear Bandit}
%   \label{fig:sub1}
% \end{subfigure}%
% \begin{subfigure}{.33\textwidth}
%   \centering
%   \includegraphics[width=\linewidth]{misspecified_1.png}
%   \caption{Misspecified Linear Bandit}
%   \label{fig:sub2}
% \end{subfigure}
% \end{figure}

\section{Conclusion}\label{sec:conclusion}
We have introduced a new method for combining stochastic bandit algorithms in such a way that our final regret may depend only on the regret of the best base algorithm in hindsight. Our method is based on upper-confidence techniques, and provides a contrast to prior work based on mirror descent \cite{agarwal2017corralling}. We verify empirically that our technique can be used to solve some model selection and misspecification problems. In the future, we hope to see advancements in this area. For example, can we maintain logarithmic regret in benign settings? Further, in the full-information setting, one can often combine algorithms with minimal overhead. Are there any bandit settings in which such ideal behavior is possible?

\bibliographystyle{unsrt}
\bibliography{references}

\appendix
\onecolumn
\clearpage

\section{Lemmas for analysis of Algorithm \ref{alg:combiner}}\label{sec:mainalgproof_lemmas}
Before proving Theorem \ref{thm:combiner}, we need a few Lemmas. For the most part, these are straightforward verification of intuitive concentration bounds.

\begin{restatable}{Lemma}{thmmartingale}\label{thm:martingale}
With probability at least $1-\delta/T\ge 1-\delta$, for all $i$, for all $t$, we have:
\begin{align*}
    \sum_{\tau=1}^{T(i,t)} r^i_\tau - \hat r^i_\tau\le \sqrt{8\log(T^3N/\delta)T(i,t)}
\end{align*}
Similarly, with probability at least $1-\delta$, for all $i$, for all $t$, we have:
\begin{align*}
    \sum_{\tau=1}^{T(i,t)} \hat r^i_\tau -  r^i_\tau\le \sqrt{8\log(T^3N/\delta)T(i,t)}
\end{align*}
\end{restatable}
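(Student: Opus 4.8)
The plan is to observe that, for each fixed base algorithm $i$, the partial sums $\sum_{\tau=1}^n (r^i_\tau - \hat r^i_\tau)$ form a bounded-increment martingale in $n$, apply Azuma--Hoeffding for each fixed $n$, and then promote this to a bound valid simultaneously at the \emph{random} index $T(i,t)$ by union bounding over all deterministic values $n\in\{1,\dots,T\}$. The whole argument lives inside the pre-sampled sequences $c^i_1,\hat r^i_1,\dots$ set up in Section \ref{sec:setup}, which is precisely what makes the concentration step clean.

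First I would fix $i$ and let $(\mathcal{F}_\tau)$ denote the natural filtration generated by the history of $\A_i$ through round $\tau$. By the definition of $r^i_\tau$ we have $r^i_\tau = \E[\hat r^i_\tau \mid \mathcal{F}_{\tau-1}]$ with $r^i_\tau$ being $\mathcal{F}_{\tau-1}$-measurable (since $x^i_\tau$ is the output of $\A_i$ on the history through round $\tau-1$), so the increments $X_\tau := r^i_\tau - \hat r^i_\tau$ satisfy $\E[X_\tau\mid\mathcal{F}_{\tau-1}]=0$; moreover $\hat r^i_\tau, r^i_\tau\in[0,1]$ gives $|X_\tau|\le 1$. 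Hence $M^i_n := \sum_{\tau=1}^n X_\tau$ is a martingale with increments bounded by $1$, and Azuma--Hoeffding yields, for each fixed $n$,
\[
\Pr\!\left[M^i_n \ge \sqrt{8\log(T^3N/\delta)\,n}\right] \le \exp\!\left(-\frac{8\log(T^3N/\delta)\,n}{2n}\right) = \left(\tfrac{T^3N}{\delta}\right)^{-4}.
\]

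The key step is the union bound, and here the pre-sampled formalization of Section \ref{sec:setup} pays off. The event that $M^i_n \le \sqrt{8\log(T^3N/\delta)\,n}$ holds for \emph{every} $n\in\{1,\dots,T\}$ is a property of the $i$-th sequence alone, completely independent of the adaptive rule that produces the indices $i_t$. Since $T(i,t)$ always lies in $\{0,1,\dots,T\}$ (the $n=0$ case being the trivial $0\le 0$), once this uniform-in-$n$ event holds it automatically holds at the random value $n=T(i,t)$, regardless of how the meta-algorithm selected its arms. Union bounding the display over the $T$ values of $n$ and the $N$ values of $i$ gives total failure probability at most $NT\,(T^3N/\delta)^{-4}\le \delta/T$, with large slack (consistent with the remark that no effort was made on constants). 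This establishes the first inequality with probability at least $1-\delta/T\ge 1-\delta$. The second inequality is identical after replacing $M^i_n$ by $-M^i_n = \sum_{\tau=1}^n(\hat r^i_\tau - r^i_\tau)$, another bounded-increment martingale, and applying Azuma--Hoeffding in the opposite direction with a fresh copy of the same union bound.

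The only genuinely non-mechanical point, which I would state explicitly, is the decoupling in the third paragraph: because $T(i,t)$ depends on the observed rewards of \emph{all} base algorithms (through the comparisons of the $U(j,\cdot)$), it is \emph{not} a stopping time for the $i$-th filtration, so optional stopping does not apply directly. The uniform-in-$n$ union bound is exactly what sidesteps this obstacle, turning a statement about a data-dependent random horizon into a statement about a fixed finite collection of deterministic horizons; everything else is a routine Azuma estimate.
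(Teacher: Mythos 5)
Your proof is correct and follows essentially the same route as the paper's: Azuma--Hoeffding for the bounded martingale differences $r^i_\tau-\hat r^i_\tau$ at each fixed horizon $n$, then a union bound over the deterministic horizons $n\in\{1,\dots,T\}$ and indices $i$, so that the bound transfers pointwise to the random index $T(i,t)$. The paper organizes the last step slightly differently (decomposing the random-time event over $\{T(i,t)=t'\}$ and additionally unioning over $t$), but the substance---including your explicit observation that $T(i,t)$ is not a stopping time for the $i$-th filtration and must be handled by uniformity in $n$ rather than optional stopping---is the same.
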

\begin{proof}
We prove only the first statement. The proof of the second is entirely symmetrical.

Fix a given $i$. For any $\tau\le T$ define $z_\tau =  r^i_\tau - \hat r^i_\tau$, we recall that $r^i_\tau$ is the expected value of the $\tau$th action of $A_i$ given the past history - if $t$ is the first time that $T(i,t)=\tau$, we have $r^i_\tau = \E[\hat r^i_\tau|\hat r_1,\dots,\hat r_{t-1}]$. Notice that $z_\tau\in[-1,1]$ is a martingale difference sequence.

Now for any $\epsilon$ we apply Azuma's inequality to obtain for any $t$:
\begin{align*}
    \text{Prob}\left[\sum_{\tau=1}^{t} r^i_\tau -\hat r^i_\tau\ge \epsilon\right] &=\text{Prob}\left[\sum_{\tau=1}^t z_\tau \ge \epsilon\right]\\
    &\le \exp\left(\frac{-\epsilon^2}{8t}\right)
\end{align*}
In particular, setting $\epsilon = \sqrt{8t\log(T^3N/\delta)}$, we have that
\begin{align*}
    \text{Prob}\left[\sum_{\tau=1}^{t}r^i_\tau -\hat r^i_\tau\ge \sqrt{8\log(T^3N/\delta)t}\right]\le \frac{\delta}{T^3N}
\end{align*}
Therefore:
\begin{align*}
\text{Prob}\left[\sum_{\tau=1}^{T(i,t)} r^i_\tau - \hat r^i_\tau\ge \sqrt{8\log(T^3N/\delta)T(i,t)}\right]&=\sum_{t'=1}^T\text{Prob}\left[\sum_{\tau=1}^{t'} r^i_\tau - \hat r^i_\tau\ge \sqrt{8\log(T^3N/\delta)t'}\text{ and } T(i,t)=t'\right]\\
&\le \sum_{t'=1}^T\text{Prob}\left[\sum_{\tau=1}^{t'} r^i_\tau - \hat r^i_\tau\ge \sqrt{8\log(T^3N/\delta)t'}\right]\\
&\le \frac{\delta}{NT^2}
\end{align*}
Then union bound over the $N$ indices $i$ and $T$ values of $t$ completes the Lemma.
\end{proof}

Similarly, we have the following result:

\begin{restatable}{Lemma}{thmempiricalregret}\label{thm:empiricalregret}
With probability at least $1-\delta$, for all $i$, for all $t$, we have:
\begin{align*}
    \sum_{\tau=1}^{T(i,t)}\hat \mu^i_{\tau} - r_\star \le 2\sqrt{8\log(T^3N/\delta)T(i,t)}
\end{align*}
\end{restatable}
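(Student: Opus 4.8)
The plan is to reduce everything to the martingale concentration already established in Lemma~\ref{thm:martingale}, exploiting the crucial fact that the per-step drift is favorable. First I would condition on the high-probability event of the \emph{second} statement of Lemma~\ref{thm:martingale}. Reading its proof, where the union bound runs over all possible pull-counts $t'=1,\dots,T$, this gives the following deterministic fact with probability at least $1-\delta$: for every $i$ and every $m\le T$,
\begin{align*}
    \sum_{s=1}^{m}\hat r^i_s - r^i_s \le \sqrt{8\log(T^3N/\delta)\,m}.
\end{align*}
Everything below is carried out on this single event, so no further probability accounting is needed and the ``for all $i$, for all $t$'' quantifiers in the statement come for free.

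Next I would unfold the empirical mean. Writing $n=T(i,t)$ and $\hat\mu^i_\tau=\frac1\tau\sum_{s=1}^\tau \hat r^i_s$, I decompose each summand as
\begin{align*}
    \hat\mu^i_\tau - r_\star = \frac1\tau\sum_{s=1}^\tau\left[(\hat r^i_s - r^i_s) + (r^i_s - r_\star)\right].
\end{align*}
The key point is that $r^i_s - r_\star\le 0$: by definition $r^i_s=\E[\hat r^i_s\mid\text{history}]=r(x^i_s)$ is the expected reward of the policy $\A_i$ plays at its $s$-th step, and $r_\star=\max_x r(x)$ dominates every such value. Hence the drift term only helps, and combining with the martingale bound above gives, for each $\tau\le n$,
\begin{align*}
    \hat\mu^i_\tau - r_\star \le \frac1\tau\sum_{s=1}^\tau(\hat r^i_s - r^i_s) \le \sqrt{\frac{8\log(T^3N/\delta)}{\tau}}.
\end{align*}

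Finally I would sum over $\tau$ and bound the resulting harmonic-type sum by an integral, using $\sum_{\tau=1}^{n}\tau^{-1/2}\le\int_0^n x^{-1/2}\,dx = 2\sqrt n$ (valid since $x^{-1/2}$ is decreasing). This yields
\begin{align*}
    \sum_{\tau=1}^{n}\hat\mu^i_\tau - r_\star \le \sqrt{8\log(T^3N/\delta)}\sum_{\tau=1}^n\frac{1}{\sqrt\tau}\le 2\sqrt{8\log(T^3N/\delta)\,n},
\end{align*}
which is exactly the claim with $n=T(i,t)$. There is no substantial obstacle here; the only things to get right are the direction of the drift inequality $r^i_s\le r_\star$ (which is what makes an \emph{upper} bound on $\hat\mu^i_\tau$ survive even when $\A_i$ is arbitrarily suboptimal, and is precisely why this lemma holds for \emph{every} $i$ rather than only the well-specified one) and the simultaneous-in-$m$ reading of Lemma~\ref{thm:martingale}, both of which are straightforward.
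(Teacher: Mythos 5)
Your proof is correct and follows essentially the same route as the paper's: the same decomposition of $\hat\mu^i_\tau - r_\star$ into a martingale term (controlled by the concentration argument of Lemma \ref{thm:martingale}) plus a drift term that is nonpositive because $r^i_s \le r_\star$, followed by the same harmonic-sum bound $\sum_{\tau=1}^n \tau^{-1/2} \le 2\sqrt{n}$. The only difference is bookkeeping: you condition once on an anytime, all-pull-counts version of Lemma \ref{thm:martingale} extracted from its proof, whereas the paper re-derives the per-$\tau$ concentration and union-bounds over the possible values of the random count $T(i,t)$; both accountings are valid.
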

\begin{proof}
Following the same argument as in Lemma \ref{thm:martingale}, we have that for any $i$ and $t$,
\begin{align*}
    \text{Prob}\left[\sum_{\tau=1}^{t}\hat r^i_\tau - r^i_\tau\ge \sqrt{8\log(T^3N/\delta)t}\right]\le \frac{\delta}{T^3N}
\end{align*}
Further, note that since $r^i_t\le r_\star$ for all $i$ and $t$, $\sum_{\tau=1}^{t}\hat r^i_\tau - r^i_\tau\le \sqrt{8\log(T^3N/\delta)t}$ implies $ \hat \mu^i_t -r_\star\le \frac{\sqrt{8\log(T^3N/\delta)}}{\sqrt{t}}$. Therefore, for any $i$ and $t$,
\begin{align*}
    \text{Prob}\left[\hat \mu^i_t -r_\star\ge \frac{\sqrt{8\log(T^3N/\delta)}}{\sqrt{t}}\right]\le \frac{\delta}{T^3N}
\end{align*}

Further, observe that if $\hat \mu^i_t -r_\star\le \frac{\sqrt{8\log(T^3N/\delta)}}{\sqrt{t}}$ for all $t$, then $\hat \sum_{\tau=1}^t\mu^i_\tau -r_\star\le 2\sqrt{8\log(T^3N/\delta)t}$. Therefore for any given $t$ and $i$:
\begin{align*}
    \text{Prob}\left[ \sum_{\tau=1}^{T(i,t)}\hat \mu^i_{\tau} - r_\star \ge 2\sqrt{8\log(T^3N/\delta)}\sqrt{T(i,t)}\right]&=\sum_{t'=1}^T \text{Prob}\left[ \sum_{\tau=1}^{t'}\hat \mu^i_{\tau} - r_\star \ge 2\sqrt{8\log(T^3N/\delta)}\sqrt{t'}\text{ and }T(i,t)=t'\right]\\
    &\le \sum_{t'=1}^T \text{Prob}\left[ \sum_{\tau=1}^{t'}\hat \mu^i_{\tau} - r_\star \ge 2\sqrt{8\log(T^3N/\delta)}\sqrt{t'}\right]\\
    &\le \sum_{t'=1}^T \text{Prob}\left[ \hat \mu^i_\tau -r_\star\le \frac{\sqrt{8\log(T^3N/\delta)}}{\sqrt{\tau}}\text{ for all }\tau\le t'\right]\\
    &\le \sum_{t'=1}^T \sum_{\tau=1}^{t'}\text{Prob}\left[ \hat \mu^i_\tau -r_\star\le \frac{\sqrt{8\log(T^3N/\delta)}}{\sqrt{\tau}}\right]\\
    &\le \frac{\delta}{TN}
\end{align*}
Now a union bound over the $T$ values of $t$ and $N$ values of $i$ completes the statement.
\end{proof}

\begin{Lemma}\label{thm:nodiscard}
With probability at least $1-3\delta$, there is some $J\in S$ such that for all $t$ we have simultaneously:
\begin{align*}
    \sum_{\tau=1}^{T(J,t)} r_\star - \hat r^J_\tau &\le C_JT(J,t)^{\alpha_J}\\
    r_\star - \hat \mu^J_{T(i,t-1)}&\le \min\left(1, C_JT(J,t-1)^{\alpha_J-1}+ \frac{\sqrt{8\log(T^3N/\delta)}}{\sqrt{T(J,t-1)}}\right)\\
    \sum_{\tau=1}^{T(J,t-1)} \hat \mu^J_{\tau-1} - \hat r^J_\tau& \le C_JT(J,t-1)^{\alpha_J} + 3\sqrt{8\log(TN/\delta)T(J,t-1)}
\end{align*}
\end{Lemma}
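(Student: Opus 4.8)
The plan is to condition on the intersection of three high-probability events and then derive each of the three displayed inequalities deterministically on that intersection. The first event is the regret-requirement event~(\ref{eqn:regret_requirement}), which holds with probability at least $1-\delta$ and supplies an index $J\in S$ with $\sum_{\tau=1}^t r_\star - r^J_\tau \le C_J t^{\alpha_J}$ for every $t\le T$; the second is the ``$r-\hat r$'' event of Lemma~\ref{thm:martingale}; and the third is the event of Lemma~\ref{thm:empiricalregret}. Each of the latter two holds with probability at least $1-\delta$, so a union bound gives probability at least $1-3\delta$ that all three hold simultaneously. I would fix the well-specified index $J$ furnished by the first event and argue on this intersection for the remainder; note that $J$ is random, but the subsequent deductions are purely deterministic once the events are fixed.

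For the first inequality I would substitute the running count $T(J,t)$ for the free index $t$ in the regret requirement and add the martingale bound of Lemma~\ref{thm:martingale}: writing $r_\star - \hat r^J_\tau = (r_\star - r^J_\tau) + (r^J_\tau - \hat r^J_\tau)$ and summing over $\tau\le T(J,t)$ bounds the first group by $C_J T(J,t)^{\alpha_J}$ and the second by $\sqrt{8\log(T^3N/\delta)T(J,t)}$, which is the lower-order concentration contribution folded into the stated expression. The second inequality is then immediate: it is exactly this bound at index $T(J,t-1)$ divided by $T(J,t-1)$, since $\hat\mu^J_{T(J,t-1)}$ is the empirical average of the first $T(J,t-1)$ rewards, giving $r_\star - \hat\mu^J_{T(J,t-1)} \le C_J T(J,t-1)^{\alpha_J-1} + \sqrt{8\log(T^3N/\delta)}/\sqrt{T(J,t-1)}$; intersecting with the trivial bound $r_\star - \hat\mu^J_{T(J,t-1)}\le 1$, valid because all rewards lie in $[0,1]$, produces the $\min$ with $1$.

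The third inequality requires a two-sided decomposition and is where the bookkeeping is most delicate. I would split $\hat\mu^J_{\tau-1} - \hat r^J_\tau = (\hat\mu^J_{\tau-1} - r_\star) + (r_\star - \hat r^J_\tau)$ and sum over $\tau\le T(J,t-1)$. The second piece is controlled by the first inequality, contributing $C_J T(J,t-1)^{\alpha_J} + \sqrt{8\log(T^3N/\delta)T(J,t-1)}$. For the first piece I must reindex: $\sum_{\tau=1}^{T(J,t-1)}(\hat\mu^J_{\tau-1}-r_\star)$ is a sum of the averages $\hat\mu^J_0,\dots,\hat\mu^J_{T(J,t-1)-1}$ against $r_\star$, whose positive part is bounded by Lemma~\ref{thm:empiricalregret} (the $\tau=0$ term is $\hat\mu^J_0 - r_\star = -r_\star\le 0$ and can be dropped), yielding $2\sqrt{8\log(T^3N/\delta)T(J,t-1)}$. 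Adding the three concentration terms produces the factor $3$ in front of the $\sqrt{\log(\cdot)\,T(J,t-1)}$ term.

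The main obstacle will be purely the careful handling of the random, data-dependent index $T(J,t)$ and the off-by-one reindexing in the third inequality, while keeping the ``for all $t$ simultaneously'' quantifier intact. The crucial point that makes everything go through is that Lemmas~\ref{thm:martingale} and~\ref{thm:empiricalregret} are already stated uniformly over all $t$ and over the random counts $T(i,t)$ (their proofs union-bound over the candidate values $t'=T(i,t)$), so no additional stopping-time argument is needed here; once the three events are fixed, all three inequalities are elementary consequences, and only the log-factor and constant bookkeeping (for instance $\log(T^3N/\delta)$ versus $\log(TN/\delta)$, a discrepancy that only loosens the bound) needs to be tracked.
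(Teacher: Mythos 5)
Your overall structure --- conditioning on the intersection of the regret-requirement event (\ref{eqn:regret_requirement}), the event of Lemma \ref{thm:martingale}, and the event of Lemma \ref{thm:empiricalregret}, then arguing deterministically with a $1-3\delta$ union bound --- is exactly the paper's proof, and your treatment of the second and third inequalities is correct and essentially identical to it: your two-step split of $\hat\mu^J_{\tau-1}-\hat r^J_\tau$ is just a regrouping of the paper's three-way split $(\hat\mu^J_{\tau-1}-r_\star)+(r_\star-r^J_\tau)+(r^J_\tau-\hat r^J_\tau)$, and both yield $C_JT(J,t-1)^{\alpha_J}+3\sqrt{8\log(T^3N/\delta)T(J,t-1)}$. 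Your observation that no separate stopping-time argument is needed because Lemmas \ref{thm:martingale} and \ref{thm:empiricalregret} are already uniform over the random counts is also right.

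The one genuine flaw is your handling of the first inequality. Your decomposition proves
\begin{align*}
\sum_{\tau=1}^{T(J,t)} r_\star - \hat r^J_\tau \le C_JT(J,t)^{\alpha_J} + \sqrt{8\log(T^3N/\delta)T(J,t)},
\end{align*}
and the extra concentration term is \emph{not} ``folded into the stated expression'': the stated bound is $C_JT(J,t)^{\alpha_J}$ alone, and no argument can remove that term, because the inequality as literally written (about the \emph{observed} rewards $\hat r^J_\tau$) is false in general. For instance, with $C_J=0$ (a perfectly specified algorithm, as in the paper's $K$-armed bandit example) the assumption can hold surely, yet the martingale fluctuation $\sum_\tau (r^J_\tau-\hat r^J_\tau)$ is positive and of order $\sqrt{T(J,t)}$ with constant probability. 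The resolution is that the first inequality is intended to be about the \emph{expected} rewards $r^J_\tau$: that is the version the paper's proof treats as ``immediately satisfied'' on the conditioned event, and it is the version actually invoked downstream in the proof of Theorem \ref{thm:combiner} (equation (\ref{eqn:goodregret})). So you should either prove the $r^J_\tau$ version (immediate from the regret-requirement event, no decomposition needed) or carry the extra $\sqrt{8\log(T^3N/\delta)T(J,t)}$ term explicitly; asserting that it vanishes into the stated bound is incorrect as written. Note this does not propagate into your second and third inequalities, which genuinely do absorb that term and match the paper's statements.
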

\begin{proof}
By assumption, with probability $1-\delta$ we have that there is some $J\in S$ such that for all $t$,
\begin{align}
    \sum_{t=1}^{T(J,t-1)} r_\star  - r^J_\tau \le C_JT(J,t-1)^{\alpha_J}\label{eqn:nodiscardregret}
\end{align}
Next, by Lemma \ref{thm:martingale} we have with probability at least $1-\delta$, for all $t$,
\begin{align}
    \sum_{\tau=1}^{T(J,t-1)}r^J_\tau - \hat r^J_\tau \le \sqrt{8\log(T^3N/\delta)T(J,t-1)}\label{eqn:nodiscardmartingale}
\end{align}
and by Lemma \ref{thm:empiricalregret}, with probability $1-\delta$ we have for all $t$:
\begin{align}
    \sum_{\tau=1}^{T(J,t)}\hat \mu^J_{\tau} - r_\star \le 2\sqrt{8\log(T^3N/\delta)T(J,t)}\label{eqn:nodiscardempirical}
\end{align}
All three equations then hold with probability at least $1-3\delta$. Let us condition on this probability $1-3\delta$ event. Then the first equation in the Lemma is immediately satisfied. 

For the second equation, observe that since $\hat \mu^J_{T(J,t-1)}$ and $r_\star$ are both in $[0,1]$, the statement is true so long as the minimum binds to 1. Further, for $T(J,t-1)\ne 0$, we have
\begin{align*}
    r_\star - \hat \mu^J_{T(J,t-1)}&=\frac{1}{T(J,t-1)}\sum_{\tau=1}^{T(J,t-1)} r_\star - \hat r^J_\tau \\
    &=\frac{1}{T(J,t-1)}\sum_{\tau=1}^{T(J,t-1)} r_\star -  r^J_\tau+\frac{1}{T(J,t-1)}\sum_{\tau=1}^{T(J,t-1)} r^J_\tau - \hat r^J_\tau\\
    &\le C_JT(J,t-1)^{\alpha_J-1}+ \frac{\sqrt{8\log(T^3N/\delta)}}{\sqrt{T(J,t-1)}}
\end{align*}

Now for the third equation, observe $\hat \mu^J_0=0$, so that
\begin{align*}
    \sum_{\tau=1}^{T(J,t-1)}\hat \mu^i_{\tau-1} - r_\star &\le\sum_{\tau=1}^{T(J,t-1)}\hat \mu^i_{\tau} - r_\star\\
    &\le \sum_{\tau=1}^{T(J,t-1)}\frac{\sqrt{8\log(T^3N/\delta)}}{\sqrt{\tau}}\\
    &\le 2\sqrt{8\log(T^3N/\delta)T(J,t-1)}
\end{align*}
Combining these equations yields:
\begin{align*}
    \sum_{\tau=1}^{T(J,t-1)} \hat \mu^J_{\tau-1} - \hat r^J_\tau &= \sum_{\tau=1}^{T(J,t-1)} \hat \mu^J_{\tau-1} - r_\star + r_\star - r^J_\tau+ r^J_\tau - \hat r^J_\tau\\
    &\le 2\sqrt{8\log(T^3N/\delta)T(J,t-1)} + C_JT(J,t-1)^{\alpha_J} + \sqrt{8\log(T^3N/\delta)T(J,t-1)}\\
    &\le C_JT(J,t-1)^{\alpha_J} + 3\sqrt{8\log(T^3N/\delta)T(J,t-1)}
\end{align*}
\end{proof}

We also need a bound on the performance of the misspecified algorithms:
\begin{Lemma}\label{thm:limitbadalgs}
With probability at least $1-\delta$, all indices $i$ satisfy:
\begin{align*}
    \sum_{\tau=1}^{T(i,T)} \hat \mu^i_{\tau-1} -  r^i_\tau\le C_iT(i,T)^{\alpha_i} + 4\sqrt{8\log(TN/\delta)T(i,T)}+1
\end{align*}
\end{Lemma}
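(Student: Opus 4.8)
The plan is to decompose the target sum into a purely stochastic part and a purely algorithmic part. Writing $T_i = T(i,T)$, I would split
\begin{align*}
\sum_{\tau=1}^{T_i} \hat\mu^i_{\tau-1} - r^i_\tau = \underbrace{\sum_{\tau=1}^{T_i} \hat\mu^i_{\tau-1} - \hat r^i_\tau}_{(\mathrm{A})} + \underbrace{\sum_{\tau=1}^{T_i} \hat r^i_\tau - r^i_\tau}_{(\mathrm{B})}.
\end{align*}
Term $(\mathrm{B})$ is exactly the martingale sum controlled by the second statement of Lemma \ref{thm:martingale}, which gives $(\mathrm{B}) \le \sqrt{8\log(T^3N/\delta)T_i}$ for all $i$ simultaneously on an event of probability at least $1-\delta$. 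This is the only probabilistic ingredient invoked; the bound on $(\mathrm{A})$ is entirely deterministic.

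For term $(\mathrm{A})$ I would read the discard rule of Algorithm \ref{alg:combiner} directly. The key observation is that $i$ is pulled a $T_i$-th time, so immediately after its $(T_i-1)$-th pull the discard test must have \emph{failed} (otherwise $i$ would have left the active set and never been chosen again). Since the quantity inside the \textbf{if} condition is precisely the partial sum $\sum_{\tau=1}^{n}\hat\mu^i_{\tau-1}-\hat r^i_\tau$ evaluated at the current count $n$, failure of the test at count $n=T_i-1$ yields
\begin{align*}
\sum_{\tau=1}^{T_i-1} \hat\mu^i_{\tau-1} - \hat r^i_\tau < C_i(T_i-1)^{\alpha_i} + 3\sqrt{\log(T^3N/\delta)(T_i-1)}.
\end{align*}
To recover $(\mathrm{A})$ I add back the single missing summand $\hat\mu^i_{T_i-1} - \hat r^i_{T_i}$; since both $\hat\mu^i_{T_i-1}$ and $\hat r^i_{T_i}$ lie in $[0,1]$, this summand is at most $1$, requiring no concentration of its own. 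Monotonicity of $C_i n^{\alpha_i}$ and $\sqrt{\log(\cdot)\,n}$ in the count then gives $(\mathrm{A}) \le C_iT_i^{\alpha_i} + 3\sqrt{\log(T^3N/\delta)T_i} + 1$.

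Combining the two bounds and absorbing the confidence radii via $3\sqrt{\log(\cdot)T_i} + \sqrt{8\log(\cdot)T_i} \le 4\sqrt{8\log(\cdot)T_i}$ produces the stated inequality for every $i$ at once, all on the single $1-\delta$ event supplied by Lemma \ref{thm:martingale}. The step I expect to be most delicate is the off-by-one bookkeeping around the final pull: one must confirm that the discard condition is genuinely evaluated at count $T_i-1$ and failed there, cleanly unify the two cases (``$i$ discarded just after its $T_i$-th pull'' versus ``$i$ still active at round $T$,'' where the test even fails at count $T_i$ and the bound only improves), and verify that the one uncontrolled summand contributes $\le 1$ rather than demanding a separate tail bound. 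The discrepancy between the stated $\log(TN/\delta)$ and the $\log(T^3N/\delta)$ that naturally arises is immaterial given the paper's declared indifference to log factors and constants.
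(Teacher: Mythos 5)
Your proposal is correct and follows essentially the same route as the paper's proof: both bound the empirical sum $\sum_{\tau} \hat\mu^i_{\tau-1} - \hat r^i_\tau$ up to the penultimate pull via the failed discard test (which must have failed at count $T_i-1$, or $i$ would never have been pulled a $T_i$-th time), convert $\hat r^i_\tau$ to $r^i_\tau$ via the second statement of Lemma \ref{thm:martingale}, and absorb the single final summand with the deterministic bound $1$. The only (immaterial) difference is bookkeeping: the paper peels off the last term before applying the martingale bound over $T_i-1$ terms, whereas you apply it over all $T_i$ terms; both versions inherit the same harmless $\log(T^3N/\delta)$ versus $\log(TN/\delta)$ mismatch already present in the paper's own statement.
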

\begin{proof}
Let $t$ be the smallest time such that $T(i,t) = T(i,T)$. Then $i_t = i$, so we must have that $i\in I_{t}$. Therefore:
\begin{align*}
\sum_{\tau=1}^{T(i,t-1)} \hat \mu^i_{\tau-1} - \hat r^i_\tau\le C_iT(i,t-1)^{\alpha_i} + 3\sqrt{8\log(T^3N/\delta)(T(i,t-1))}
\end{align*}
Next, by Lemma \ref{thm:martingale}, we have that with probability at least $1-\delta$,
\begin{align*}
    \sum_{\tau=1}^{T(i,t-1)} \hat r^i_\tau - r^i_\tau &\le \sqrt{8\log(T^3N/\delta)T(i,t-1)}
\end{align*}
Then observe that since all rewards are in $[0,1]$, we must have $\hat \mu^i_{T(i,t)-1} - r^i_{T(i,t)}\le 1$ and the statement follows.
\end{proof}

One more technical fact:
\begin{restatable}{Lemma}{thmalphabound}\label{thm:alphabound}
Suppose $\alpha \in [1/2,1)$. Then for any constants $A,B\ge 0$,
\begin{align*}
    \sup_{Z\ge 0} A Z^\alpha - BZ&=\alpha^{\frac{\alpha}{1-\alpha}}\left(1 -\alpha\right) \frac{A^{\frac{1}{1-\alpha}}}{B^{\frac{\alpha}{1-\alpha}}}
\end{align*}
\end{restatable}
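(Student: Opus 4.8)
The plan is to treat this as a one-dimensional concave maximization over $Z\ge 0$ and read off the optimizer by elementary calculus. Define $f(Z) = AZ^\alpha - BZ$. First I would dispense with the degenerate cases: if $A=0$ the supremum is $0$, attained at $Z=0$, which matches the right-hand side since its numerator $A^{1/(1-\alpha)}$ vanishes; and if $B=0$ with $A>0$ the supremum is $+\infty$, again matching since the denominator $B^{\alpha/(1-\alpha)}$ vanishes. So I may assume $A,B>0$.

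For $A,B>0$, since $0<\alpha<1$ the map $Z\mapsto Z^\alpha$ is strictly concave on $[0,\infty)$, hence $f$ is strictly concave and its unique maximizer is the interior stationary point. Setting $f'(Z) = \alpha A Z^{\alpha-1} - B = 0$ and solving gives $Z^\star = (\alpha A/B)^{1/(1-\alpha)}$; strict concavity (equivalently $f''(Z) = \alpha(\alpha-1)A Z^{\alpha-2} < 0$) confirms this is a maximum rather than a minimum. Note the hypothesis $\alpha\in[1/2,1)$ is stronger than needed here: any $\alpha\in(0,1)$ gives strict concavity and a finite interior maximum.

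The remaining work is to substitute $Z^\star$ back into $f$ and simplify the fractional exponents, using the identities $1+\tfrac{\alpha}{1-\alpha} = \tfrac{1}{1-\alpha}$ and $1-\tfrac{1}{1-\alpha} = -\tfrac{\alpha}{1-\alpha}$. The two terms become $A(Z^\star)^\alpha = \alpha^{\alpha/(1-\alpha)} A^{1/(1-\alpha)} B^{-\alpha/(1-\alpha)}$ and $B Z^\star = \alpha^{1/(1-\alpha)} A^{1/(1-\alpha)} B^{-\alpha/(1-\alpha)}$. Subtracting and factoring out $\alpha^{\alpha/(1-\alpha)} A^{1/(1-\alpha)} B^{-\alpha/(1-\alpha)}$ leaves the factor $1-\alpha^{\,1/(1-\alpha)-\alpha/(1-\alpha)} = 1-\alpha^1 = 1-\alpha$, which produces exactly the claimed expression $\alpha^{\alpha/(1-\alpha)}(1-\alpha)\,A^{1/(1-\alpha)}/B^{\alpha/(1-\alpha)}$. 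The only real ``obstacle'' is exponent bookkeeping: keeping the three fractional powers straight through the substitution; there is no conceptual difficulty, as the two exponent identities above carry all the weight.
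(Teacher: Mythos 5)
Your proposal is correct and follows essentially the same route as the paper's proof: differentiate $AZ^\alpha - BZ$, solve the first-order condition to get $Z^\star = (\alpha A/B)^{\frac{1}{1-\alpha}}$, substitute back, and simplify the exponents to obtain $\alpha^{\frac{\alpha}{1-\alpha}}(1-\alpha)A^{\frac{1}{1-\alpha}}/B^{\frac{\alpha}{1-\alpha}}$. Your additional care with the degenerate cases $A=0$ or $B=0$ and the explicit concavity check are sound refinements that the paper's terser argument omits, but they do not change the substance of the argument.
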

\begin{proof}
We differentiate with respect to $Z$ to obtain:
\begin{align*}
    \alpha A Z^{\alpha-1} &= B\\
    Z&=\left(\frac{\alpha A}{B}\right)^{\frac{1}{1-\alpha}}\\
    \intertext{Now substitute this value back into the original expression:}
    AX^\alpha - BX &=\left(\alpha^{\frac{\alpha}{1-\alpha}} -\alpha^{\frac{1}{1-\alpha}}\right) \frac{A^{\frac{1}{1-\alpha}}}{B^{\frac{\alpha}{1-\alpha}}}\\
    &=\alpha^{\frac{\alpha}{1-\alpha}}\left(1 -\alpha\right) \frac{A^{\frac{1}{1-\alpha}}}{B^{\frac{\alpha}{1-\alpha}}}
\end{align*}

\end{proof}

\section{Proof of Theorem \ref{thm:combiner}}\label{sec:mainalgproof}
Now we are in a position to prove Theorem \ref{thm:combiner}, which we restate below for reference:
\thmcombiner*
\begin{proof}
To begin, observe that by Lemma \ref{thm:nodiscard} and Lemma \ref{thm:limitbadalgs}, with probability $1-4\delta$, there is some $J\in S$ such that for all $t$ and all $i$, we have
\begin{align}
    \sum_{\tau=1}^{T(J,T)}r_\star - r^J_\tau &\le C_JT(J,T)^{\alpha_J}\label{eqn:goodregret}\\
    r_\star - \hat\mu^J_{T(J,t-1)}&\le \min\left(1, C_JT(J,t-1)^{\alpha_J-1}+\frac{\sqrt{8\log(T^3N/\delta)}}{\sqrt{T(J,t-1)}}\right)\label{eqn:bound}\\
    J&\in I_t\nonumber\\
    \sum_{\tau=1}^{T(i,T)} \hat \mu^i_{\tau-1} -  r^i_\tau&\le C_iT(i,T)^{\alpha_i} + 4\sqrt{8\log(TN/\delta)T(i,T)}+1\label{eqn:limitbad}
\end{align}
All of our analysis is conditioned on this probability $1-4\delta$ event.

First, notice that by (\ref{eqn:bound}), we have that we have that:
\begin{align*}
r_\star &\le  U(J,t-1) + \frac{R_J}{T}
\end{align*}
for all $t$.

Now we write the regret:
\begin{align*}
    \sum_{t=1}^T r_\star -r_t&=\sum_{t=1}^T r_\star -  r^{i_t}_{T(i_t,t)}\\
    &=\sum_{i_t=J}r_\star - r^J_{T(J,t)} + \sum_{i_t\ne J} r_\star - r^{i_t}_{T(i_t,t)}
\end{align*}
First let's consider the indices for which $i_t=J$:
\begin{align*}
    \sum_{i_t=J}r_\star - r^J_{T(J,t)} \le C_JT(J,T)^{\alpha_J}\le R_J
\end{align*}

Next, if $i_t\ne J$, we must have $U(i_t, t-1)\ge U(J, t-1)$ since $J\in I_t$ for all $t$. Then:
\begin{align*}
    r_\star &= r_\star - U(J,t-1) + U(J,t-1) - U(i_t,t-1) + U(i_t,t-1)\\
    &\le \frac{R_J}{T}+U(i_t, t-1)
\end{align*}
Therefore:
\begin{align*}
    \sum_{i_t\ne J} r_\star - r^{i_t}_{T(i_t,t)}&\le \sum_{i_t\ne J} \frac{R_J}{T} + U(i_t,t-1) - r^{i_t}_{T(i_t,t)}\\
    &\le R_J  + \sum_{i_t\ne J} U(i_t,t-1) - r^{i_t}_{T(i_t,t)}
\end{align*}

Let us focus on the sum $\sum_{i_t\ne J} U(i_t,t-1) -  r^{i_t}_{T(i_t,t)}$. As a first step, we observe the following identity:
\begin{align*}
    \sum_{t\ |\ i_t =i}U(i, t-1)-\hat \mu^i_{T(i,t-1)}&=\sum_{\tau=1}^{T(i,T)}\min\left(1,  C_i(\tau-1)^{\alpha_i -1} +\frac{\sqrt{8\log(T^3N/\delta)}}{\sqrt{\tau-1}}\right)-\frac{R_i}{T}\\
    &\le 1-\frac{T(i,T)R_i}{T}+ \sum_{\tau=1}^{T(i,T)}C_i\tau^{\alpha_i -1} +\frac{\sqrt{8\log(T^3N/\delta)}}{\sqrt{\tau}}\\
    &\le 1 -\frac{T(i,T)R_i}{T}+ \frac{C_i T(i,T)^{\alpha_i}}{\alpha_i} + 2\sqrt{8\log(T^3N/\delta)T(i,T)}\\
\end{align*}

    % &\le \sup_{X\ge 0 } 1 - \frac{XR_i}{T}+ \frac{C_i X^{\alpha_i}}{\alpha_i} + 2\sqrt{8\log(T^3N/\delta)X}\\
    % &\le 1+\sup_{X\ge 0 }\max\left(\frac{C_i X^{\alpha_i}}{\alpha_i},2\sqrt{8\log(T^3N/\delta)X}\right)  - \frac{XR_i}{T}\\
    % &\le 1+\max\left[(1-\frac{1}{\alpha})\frac{T^{\frac{\alpha}{1-\alpha}}C^{\frac{1}{1-\alpha}}}{\alpha R^{\frac{\alpha}{1-\alpha}}}, \frac{8\log(T^3N/\delta) T}{R}\right]

Now we use this observation as follows:
\begin{align*}
    \sum_{i_t\ne J} U(i_t,t-1) - \hat r^{i_t}_{T(i_t,t)}&=\sum_{i_t\ne J} \hat \mu^{i_t}_{T(i_t,t-1)} -  r^{i_t}_{T(i_t,t)} + U(i_t,t-1) - \hat\mu^{i_t}_{T(i_t,t-1)}\\
    &\le \sum_{i\ne J}\left[\sum_{t\ |\ i_t =i}\hat \mu^{i}_{T(i,t-1)} -  r^{i_t}_{T(i_t,t)}+U(i, t-1)-\hat \mu^i_{T(i,t-1)}\right] \\
    &\le \sum_{i\ne J} 1 -\frac{T(i,T)R_i}{T}+ \frac{C_i T(i,T)^{\alpha_i}}{\alpha_i} + 2\sqrt{8\log(T^3N/\delta)T(i,T)} + \sum_{\tau=1}^{T(i,T)} \hat \mu^i_{\tau-1}-r^i_\tau\\
    &\le \sum_{i\ne J}  -\frac{T(i,T)R_i}{T}+ \frac{C_i T(i,T)^{\alpha_i}}{\alpha_i} + 3\sqrt{8\log(T^3N/\delta)T(i,T)} + \sum_{\tau=1}^{T(i,T)} \hat \mu^i_{\tau-1}-r^i_\tau\\
    &\le  \sum_{i\ne J}  \frac{(1+\alpha_i)C_i T(i,T)^{\alpha_i}}{\alpha_i} + 6\sqrt{8\log(T^3N/\delta)T(i,T)} - \frac{T(i,T)R_i}{T}
    % &\le  N+5\sqrt{8\log(T^3N/\delta)}\sum_{i\ne J}\sqrt{T(i,T)}+ \sum_{i=1}^N 1  \frac{(1+\alpha_i)C_i T(i,T)^{\alpha_i}}{\alpha_i}  - \frac{T(i,T)R_i}{T}\\
    % &\le  N+5\sqrt{8\log(T^3N/\delta)T}+ \sum_{i\ne J}   \frac{(1+\alpha_i)C_i T(i,T)^{\alpha_i}}{\alpha_i}  - \frac{T(i,T)R_i}{T}
\end{align*}
Now we use a simple trick that allows us to apply Lemma \ref{thm:alphabound}:
\begin{align*}
    &\frac{(1+\alpha_i)C_i T(i,T)^{\alpha_i}}{\alpha_i} + 6\sqrt{8\log(T^3N/\delta)T(i,T)} - \frac{T(i,T)R_i}{T}\\
    &\le \sup_Z \frac{(1+\alpha_i)C_i Z^{\alpha_i}}{\alpha_i} + 6\sqrt{8\log(T^3N/\delta)Z} - \frac{ZR_i}{T}\\
    &\le  \max\left[\sup_Z \frac{(1+\alpha_i)2C_i Z^{\alpha_i}}{\alpha_i} - \frac{ZR_i}{T}, \sup_{Z} 12\sqrt{8\log(T^3N/\delta)Z}- \frac{ZR_i}{T}\right]\\
    &\le  \max\left[\frac{(1-\alpha_i)(1+\alpha_i)^{\frac{1}{1-\alpha_i}} (2C_i)^{\frac{1}{1-\alpha_i}}T^{\frac{\alpha_i}{1-\alpha_i}} }{\alpha_i R^{\frac{\alpha_i}{1-\alpha_i}}}, \frac{288\log(T^3N/\delta) T}{R_i}\right]
\end{align*}
Therefore by our conditions on the $R_i$, we have
\begin{align*}
    &\sum_{i\ne J} \frac{(1+\alpha_i)C_i T(i,T)^{\alpha_i}}{\alpha_i} +6\sqrt{8\log(T^3N/\delta)T(i,T)} - \frac{T(i,T)R_i}{T}\\
    &\le \sum_{i\ne J} \max\left[\frac{(1-\alpha_i)(1+\alpha_i)^{\frac{1}{1-\alpha_i}} (2C_i)^{\frac{1}{1-\alpha_i}}T^{\frac{\alpha_i}{1-\alpha_i}} }{\alpha_i R^{\frac{\alpha_i}{1-\alpha_i}}}, \frac{288\log(T^3N/\delta) T}{R_i}\right]\\
    &\le R_J
\end{align*}
And so putting all this together we have with probability at least $1-3\delta$:
\begin{align*}
    \sum_{t=1}^T r_\star -\hat r_t&=\sum_{i_t=J}r_\star - \hat r^J_{T(J,t)} + \sum_{i_t\ne J} r_\star - \hat r^{i_t}_{T(i_t,t)}\\
    &\le R_J + R_J+R_J\le 3R_J
\end{align*}
for \emph{some} $J\in S$. Therefore with probability at least $1-3\delta$, the regret is bounded by $3\sup_{j\in S} R_j$.
\end{proof}

\subsection{Proof of Corollary \ref{thm:combinereta}}\label{sec:combineretaproof}

Now we prove Corollary \ref{thm:combinereta}:
\thmcombinereta*

\begin{proof}

Observe that by definition of $R_i$ we have:
\begin{align*}
    R_i&\ge C_iT^{\alpha_i}\\
    \intertext{Further, due to the second and third terms in the definition of $R_i$ respectively, we have the following:}
    \frac{1}{\eta_i} &\geq 
    \frac{(1-\alpha_i)(1+\alpha_i)^{\frac{1}{1-\alpha_i}}(2C_i)^{\frac{1}{1-\alpha_i}}T^{\frac{\alpha_i}{1-\alpha_i}}}{\alpha_i R_i^{\frac{\alpha_i}{1-\alpha_i}}} \\
    \frac{1}{\eta_i} &\geq
    \frac{288 \log(T^3 N/\delta) T}{R_i}\\
    \intertext{Finally, due to the last term in the definition of $R_i$, we have:}
    R_i&\ge \sum_{k\ne i} \frac{1}{\eta_k} \\
    &\ge \sum_{k \neq i} \max\left\{\frac{(1-\alpha_k)(1+\alpha_k)^{\frac{1}{1-\alpha_k}}(2C_k)^{\frac{1}{1-\alpha_k}}T^{\frac{\alpha_k}{1-\alpha_k}}}{\alpha_k R_k^{\frac{\alpha_k}{1-\alpha_k}}}, 
    \frac{288 \log(T^3 N/\delta) T}{R_k} \right\}
\end{align*}
Therefore these $R_i$ satisfy the conditions of Theorem \ref{thm:combiner}, and so we are done.
\end{proof}

\section{Proof of Theorem \ref{thm:gap}}\label{sec:appgapproof}

\thmgap*

\begin{proof}
We begin with the regret decomposition:
\begin{align*}
    \sum_{t=1}^T r_\star -r_t = \sum_{\tau=1}^{T(J,T)} r_\star - r^J_\tau + \sum_{i\notin B, i\ne J}\sum_{\tau=1}^{T(i,T)} r_\star - r^i_\tau + \sum_{i\in B}\sum_{\tau=1}^{T(i,T)}r_\star - r^i_\tau
\end{align*}
The first sum in the above is already in the form we desire. The second sum could be bounded using the same argument as in the proof of Theorem \ref{thm:combiner} by:
\begin{align*}
    R_J+\sum_{k\ne J, k\notin B} \max\left[\frac{(1-\alpha_k)(1+\alpha_k)^{\frac{1}{1-\alpha_k}} (2C_k)^{\frac{1}{1-\alpha_k}}T^{\frac{\alpha_k}{1-\alpha_k}} }{\alpha_k R_k^{\frac{\alpha_k}{1-\alpha_k}}},\ \frac{288\log(T^3N/\delta) T}{R_k}\right]
\end{align*}
However, we can also bound it another way. By definition of $B$, for any $i\notin B$, we have:
\begin{align*}
    \sum_{\tau=1}^{T(i,T)} r_\star - r^i_\tau \le T\Delta_i \le 2R_J
\end{align*}
so that the second sum is also bounded by
\begin{align*}
    \sum_{k\ne J, k\notin B} 2R_J
\end{align*}
This leads to the expression involving a minimum in the Theorem statement.

For the last term, our proof proceeds by counting how many times we choose an algorithm $\A_i$ for $i\in B$. We only choose such an algorithm when $U(i,t-1)\ge U(J,t-1)$. Further, note from the proof of Theorem \ref{thm:combiner}, that with probability at least $1-\delta$, we have:
\begin{align*}
    r_\star \le U(J,t-1) + \frac{R_J}{T}
\end{align*}
for all $t$. For any $i$, define $T_i$ as the last time at which $\A_i$ is chosen by Algorithm \ref{alg:combiner}. Our goal is to bound $T_i$ for all $i\in B$. By definition, we must have $U(i,T_i-1)\ge U(J,T_i-1)$. Then, observe that from Lemma \ref{thm:martingale}, we have that with probability at least $1-\delta$, for all $i\in B$:% for all $t$ and $i\ne J$:
\begin{align*}
     \hat \mu_{T(i,T_i-1)}^i - r_\star +\Delta_i =\frac{1}{T(i,T_i-1)}\sum_{\tau=1}^{T(i,T_i-1)} \hat r_\tau^i - r_\tau^i\le \frac{2\sqrt{8\log(T^3N/\delta)}}{\sqrt{T(i,T_i-1)}}
\end{align*}
Define $Z(i, t) = \min\left(1, \frac{C_iT(i,t)^\alpha_i + \sqrt{8\log(T^3N/\delta)T(i,t)}}{T(i,t)}\right)$. Then by the previous line and by definition of $U(i,T_i-1)$, we have with probability at least $1-2\delta$,
\begin{align*}
    U(i, T_i-1) &\le \frac{2\sqrt{8\log(T^3N/\delta})}{\sqrt{T(i,T_i-1)}} + r_\star -\Delta_i+ Z(i, T_i-1)-\frac{R_i}{T}\\
    &\le \frac{2\sqrt{8\log(T^3N/\delta})}{\sqrt{T(i,T_i-1)}} + U(J,T_i-1) + \frac{R_J}{T} -\Delta_i+Z(i,T_i-1)-\frac{R_i}{T}
\end{align*}
Therefore, if $U(i,T_i-1)\ge U(J,T_i-1)$, we must have:
\begin{align}
    \Delta_i +\frac{R_i}{T} - \frac{R_J}{T}&\le  \frac{2\sqrt{8\log(T^3N/\delta})}{\sqrt{T(i,T_i-1)}}+\min\left(1, \frac{C_iT(i,T_i-1)^\alpha_i + \sqrt{8\log(T^3N/\delta)T(i,T_i-1)}}{T(i,T_i-1)}\right)\label{eqn:delta}\\
    \frac{\Delta_i}{2}&\le \frac{2\sqrt{8\log(T^3N/\delta})}{\sqrt{T(i,T_i-1)}}+\min\left(1, \frac{C_iT(i,T_i-1)^\alpha_i + \sqrt{8\log(T^3N/\delta)T(i,T_i-1)}}{T(i,T_i-1)}\right)\nonumber\\
    T(i,T_i-1)&\le \frac{512\log(T^3N/\delta)}{\Delta_i^2} + \frac{4^{\frac{1}{1-\alpha_i}} C_i^{\frac{1}{1-\alpha_i}}}{\Delta_i^{\frac{1}{1-\alpha_i}}}\nonumber
\end{align}

Therefore, observing that by definition we have $T(i,T_i-1)+1=T_i = T(i,T)$, we have:
\begin{align*}
    \sum_{\tau=1}^{T(i,T)} r_\star - r^i_\tau &\le 1+ T(i,T_i-1)\Delta_i\\
    &\le1+\frac{512\log(T^3N/\delta)}{\Delta_i} + \frac{4^{\frac{1}{1-\alpha_i}} C_i^{\frac{1}{1-\alpha}}}{\Delta_i^{\frac{\alpha_i }{1-\alpha_i}}}% + \frac{512^{\alpha_i} C_i \log^{\alpha_i}(T^3N/\delta)}{\Delta_i^{\alpha_i}} + \frac{4^{\frac{\alpha_i}{1-\alpha_i}} C_i^{\frac{1}{1-\alpha_i}}}{\Delta_i^{\frac{\alpha_i}{1-\alpha_i}}}
\end{align*}

% For the second statement of the Theorem, we use the alternative regret decomposition:
% \begin{align*}
%     \sum_{t=1}^T r_\star -r_t = \sum_{\tau=1}^{T(J,t)} r_\star - r^J_\tau + \sum_{i\ne J, \Delta_i +\frac{R_i}{T} - \frac{R_J}{T}\le 0}\sum_{\tau=1}^{T(i,t)} r_\star - r^i_\tau + \sum_{i\ne J \Delta_i +\frac{R_i}{T} - \frac{R_J}{T}\ge 0}\sum_{\tau=1}^{T(i,t)}r_\star - r^i_\tau
% \end{align*}

% The first sum is again bounded by $C_JT^\alpha_J$. For the second sum, notice that for any $i$ with $\Delta_i +\frac{R_i}{T} - \frac{R_J}{T}< 0$, we have:
% \begin{align*}
%     \sum_{\tau=1}^{T(i,T)}r_\star - r_\tau^i\le T\Delta_i \le R_J - R_i
% \end{align*}
% Now, for the final sum, following from equation (\ref{eqn:delta}), we obtain for any $i$ with $\Delta_i +\frac{R_i}{T} - \frac{R_J}{T}\ge 0$:
% \begin{align}
%     T(i,T_i-1)&\le \frac{512\log(T^3N/\delta)}{\left(\Delta_i +\frac{R_i}{T} - \frac{R_J}{T}\right)^2} + \frac{4^{\frac{1}{1-\alpha_i}} C_i^{\frac{1}{1-\alpha_i}}}{\left(\Delta_i +\frac{R_i}{T} - \frac{R_J}{T}\right)^{\frac{1}{1-\alpha_i}}}\nonumber
% \end{align}
% from which the conclusion follows.
\end{proof}
\section{Proof of Theorem \ref{thm:expectation}}\label{sec:doublingproof}
In this section, we provide a proof of Theorem \ref{thm:expectation}, restated below:
\thmexpectation*

\begin{proof}
Let $\bar y=\lceil \log_2(\bar C_J)\rceil$, and let $\bar z =\lceil \frac{\bar \alpha_J - \frac{1}{2}}{\log_2(T)}$. Then observe since $T^{1/\log_2(T)}=2$, we have that for all $x$ and $t\le T$, 
\begin{align*}
\bar C_{J} t^{\bar \alpha_J}\le C_{J,x,\bar y,\bar z} t^{\alpha_{J,x,\bar y,\bar z}} \le 4 \bar C_{J} t^{\bar \alpha_J}
\end{align*}

Next, let $\bar S=\{(J,x,\bar y,\bar z): x\in[M]\}$. Then we claim that with probability at least $1-\delta$, there exists some $\bar J= (J, \bar x,\bar u,\bar z)\in \bar S$ such that such that $\A_{\bar J}$ obtains regret $\sum_{\tau=1}^t r_\star - r^{\bar J}_\tau\le C_{J,x,\bar y,\bar z} t^{\alpha_{J,x,\bar y,\bar z}}$ for all $t\le T$. The conclusion of the Theorem will then follow from Theorem \ref{thm:combinereta}, observing that the total number of duplicates for each algorithm is $MKL=O(\log(1/\delta)\log^2(T))$.

Let us prove the claim. Define $V=\lceil \log_2(T) \rceil$ and define the sequence $t_1,\dots,t_V$ by $t_i=2^i$ for $i<V$ and $t_V=T$. Then for any $\epsilon>0$ and any $i$ and any $j$, by Markov inequality, with probability at least $1-\epsilon$ we have
\begin{align*}
   \sum_{\tau=1}^{t_i} r_\star - r^{J,j,\bar y,\bar z}_\tau \le \frac{C_{J,j,\bar y, \bar Z}t_i^{\alpha_{J,j,\bar y, \bar z}}}{\epsilon}  
\end{align*}
With $\epsilon = \frac{1}{2(\log_2(T) +1)}$, by union bound over the $V\le \log_2(T) +1$ values of $i$, we have that for any given $j$, with probability at least $1/2$,
\begin{align}
   \sum_{\tau=1}^{t_i} r_\star - r^{J,j,\bar y,\bar z}_\tau \le 2(\log_2(T)+1)C_{J,j,\bar y, \bar Z}t_i^{\alpha_{J,j,\bar y, \bar z}}\label{eqn:jbound}
\end{align}
for all $i$.

There are $M$ multi-index $\bar J$ of the form $(J,j,\bar y,\bar z)$. Thus the probability that \emph{none} of them satisfies (\ref{eqn:jbound}) for all $i$ is at most $\frac{1}{2^M}\le \delta$. Therefore, with probability at least $1-\delta$, we have that (\ref{eqn:jbound}) holds for all $i$ \emph{some} $j$ and multi-index $\bar J=(J,j,\bar y,\bar z)$. 

Next, observe that $r_\star - r^{\bar J}_t\ge 0$ for all $t$ with probability 1. Therefore, with probability $1-\delta$, for any $t$ we set $t_i$ such that $t\le t_i\le 2t$ and have
\begin{align*}
    \sum_{\tau=1}^t r_\star - r^{\bar J}_\tau &\le \sum_{\tau=1}^{t_i} r_\star - r^{\bar J}_\tau\\
    &\le 2C(\log_2(T)+1)\sqrt{t_i}\\
    &\le 4C(\log_2(T)+1)\sqrt{t}
\end{align*}

\end{proof}

\section{Proofs for Section \ref{sec:examples}}\label{sec:examplesproofs}

\thmmisspecified*
\begin{proof}
The regret bounds on $\A_1$ and $\A_2$ are standard from analysis of the respective algorithms. We set $\alpha_1=\alpha_2=\frac{1}{2}$ and set $C_1=\tilde O(\sqrt{d \log(K)})$ and $C_2 = \tilde O(\sqrt{K})$, where the constants and log factors in the $\tilde O$ notations come from the regret bounds on $\A_1$ and $\A_2$. Then if the rewards are linear, then applying Theorem \ref{thm:combinereta} yields regret $\tilde O\left(d\log(K) T\eta_1 + \frac{1}{\eta_2}\right) = \tilde O(X)$ (with $S=\{1\}$), while otherwise we have $\tilde O\left(KT \eta_2 + \frac{1}{\eta_1}\right)=\tilde O\left(Y + \frac{d\log(K) Y}{K}\right)=\tilde O\left(Y\right)$ (with $S=\{2\}$).
\end{proof}

\thmmodelselection*
\begin{proof}
For the finite action setting, \cite{chu2011contextual} proves, supposing $d_\star \le \hat d$, then an instance of linUCB restricted to the first $\hat d$ dimensions will obtain regret at most $\tilde O(\sqrt{\hat d \log(K) T})$. In the infinite action setting, \cite{abbasi2011improved} shows that the OFUL algorithm obtains regret $\tilde O(\hat d \sqrt{T})$. The corollary is then a consequence of Theorem \ref{thm:combinereta}.
\end{proof}
\section{High-Probability Regret and the Optimism Principle}
\label{sec:ucb}
In this section we sketch a proof that UCB-like algorithms tend to satisfy the high-probability conditions required by Theorem \ref{thm:combiner}, without resorting to the duplication technique needed in Theorem \ref{thm:expectation}. The analysis in this section is standard and more detailed discussion can be found in various texts (e.g. \cite{lattimore2018bandit, BubeckC12, Slivkins19}).

UCB and related algorithms make use of the \emph{optimism principle}. The intuitive idea is to maintain for each policy $x$ a bound on the best possible value for the expected reward of $x$ given the data seen so far. At each round, the policy with the largest such bound is chosen. In general, the bound for a policy is some mixture of its past observed performance and some uncertainty due to lack of data, so that the upper bound is larger both if a policy has done well in the past, or if it has not been explored in the past. More formally, we consider an algorithm UCB-like if for each policy $x$ and round index $t$ it produces a confidence set $C(x,t,\delta)\subset \R$ such that with probability at least $1-\delta$, the expected reward $r_x$ satisfies $r_x\in C(x,t,\delta)$ for all $x$ for all $t$. At each time step, the algorithm plays the arm $x_t=\argmax_{x} U(x,t,\delta)$, where $U(x,t,\delta) = \sup C(x,t,\delta)$. Conditioned on this probability $1-\delta$ event, the analysis proceeds as follows:
\begin{align*}
    r_\star - r_{x_t} &\le r_\star - U(\star, t, \delta) +U(\star,t,\delta) - U(x_t, t, \delta)+ U(x_t, t, \delta)-r_{x_t}\\
    &\le U(x_t, t, \delta) - r_{x_t}\\
    &\le U(x_t, t, \delta)-L(x_t,t,\delta)
\end{align*}
where $L(x, t,\delta)=\inf C(x,t,\delta)$.
Thus, with probability at least $1-\delta$, the total regret at any time $\tau\le T$ is
\begin{align*}
    \sum_{t=1}^\tau r_\star - r_t\le \sum_{t=1}^\tau U(x_t, t, \delta)-L(x_t,t,\delta)
\end{align*}
Many UCB-like bounds operate by providing a bound on $\sum_{t=1}^T U(x_t, t, \delta)-L(x_t,t,\delta)$ that holds for \emph{every possible sequence of $x_t$} - not just those ones that are chosen by the algorithm. Algorithms with regret bounds that can be derived in this manner include the standard UCB algorithm for $K$-armed bandits \cite{lai1985asymptotically}, as well as the linear UCB algorithm for potentially infinitely armed linear bandits \cite{Dani2008StochasticLO,abbasi2011improved}. In the case of ordinary UCB, $C(x,t,\delta)=[\mu_x - \frac{C(\delta)}{\sqrt{T(x,t-1)}}, \mu_x + \frac{C(\delta)}{\sqrt{T(x,t-1)}}]$ where $C$ is some constant that is $O(\sqrt{\log(T/\delta)})$ and $\mu_x$ is the empirical average of all prior rewards observed from arm $x$. Here we observe that $U(x_t, t,\delta)-L(x_t, t,\delta)$, conditioned on the sequence of arm pulls, is independent of the actual observed rewards. It can be shown that
\begin{align*}
    \sum_{t=1}^\tau U(x_t, t, \delta)-L(x_t,t,\delta)\le O(C(\delta)\sqrt{ K \tau})
\end{align*}
for any sequence $x_1,\dots,x_\tau$. Therefore so long as the confidence intervals are correct for all times $t$, which happens with probability at least $1-\delta$, we have a bound on the regret of $O(C(\delta)\sqrt{ K \tau})$

In the case of linUCB, we have the more complicated set:
\begin{align*}
    C(x,t,\delta) = [\langle \hat \theta , x\rangle - C(\delta)\sqrt{x^\top V_{t-1}x}, \langle \hat \theta , x\rangle + C(\delta)\sqrt{x^\top V_{t-1}x}]
\end{align*}
where $\hat\theta$ is the regularized least-squares regression solution for the linear parameter $\theta$, $V_{t-1}=\lambda I + \sum_{\tau=1}^{t-1} x_{i_\tau} x_{i_\tau}^\top$, and $C(\delta)$ again is constant that is $O(\sqrt{d\log(T/\delta)})$. Again, we observe that $U(x_t, t,\delta)-L(x_t, t,\delta)$, conditioned on the sequence of arm pulls, is independent of the actual observed rewards. It can be shown that 
\begin{align*}
    \sum_{t=1}^\tau U(x_t, t, \delta)-L(x_t,t,\delta)\le O(C(\delta)\sqrt{d \tau \log(\tau)})
\end{align*}
for any sequence $x_1,\dots,x_\tau$. Thus with probability at least $1-\delta$, the confidence intervals are all correct, and so we have regret at most $O(C(\delta)\sqrt{d\log(\tau)\tau})=O(d\sqrt{\log(T/\delta) \tau})$.
\section{Combining Linear Bandits with Adversarial Contexts}\label{sec:advcombiner}

In this section we consider linear contextual bandits with \emph{adversarial} rather than stochastic contexts and show that a variant of our combiner Algorithm \ref{alg:combiner} can combine instances of standard confidence-ellipsoid based bandit algorithms. Formally, let us allow each learner $\A_i$ to receive a context mapping $\psi^i_t:A\to \R^{d_i}$ that maps actions in $A$ to feature vectors in $\R^{d_i}$. The learner will use this mapping to minimize regret. We suppose there is some $J$ such that for all $t$ and $a$, the reward for action $a$ at time $t$ is $\langle \psi^J_t(a),\theta_\star\rangle$ for some $\theta_\star\in\R^{d_J}$. For simplicity of exposition, we consider the case in which $\|\theta_\star\|\le 1$, $\|\psi^i_t(a)\|\le 1$ for all $i$, $t$, and $a$, and the observed rewards satisfy $|\hat r^i_t|\le 1$ with probability 1. We consider the regret with respect to the optimal action at time $t$. That is, define
\begin{align*}
    r_{\star,t} = \max_{a\in A} \langle \psi^J_t(a), \theta_\star\rangle
\end{align*}
and the regret as:
\begin{align*}
    \sum_{t=1}^T r_{\star, t} - r_t
\end{align*}
where $r_t$ is the expected loss of the chosen action. The standard base learner for this problem is LinUCB (e.g see \cite{chu2011contextual, lattimore2018bandit}, which would obtain a base regret of $\tilde O(d_J\sqrt{T})$. We would like to combine many base learners to obtain a regret bound that depends on $T$ and $d_J$, but not $d_i$ for $i\ne J$. Specifically, we will obtain a result analogous to Theorem \ref{thm:combiner} that will enable regret $\tilde O(d_J^2\sqrt{T})$.

By mild abuse of notation, we will write $a^{i_t}_{T(i_t,t)}$ to indicate the feature under the map $\psi^i_t$ of the $T(i_t,t)$th action taken by algorithm $\A_{i_t}$. Define the regularized empirical covariance matrix $M^i_t$ as:
\begin{align*}
    M^i_t = \lambda I + \sum_{\tau=1}^t (a^i_\tau)^\top a^i_\tau
\end{align*}
and define the linear regression estimate for $\theta_\star$:
\begin{align*}
    \hat\mu^i_t &= \argmin_{\theta}\lambda \|\theta\|^2 + \sum_{\tau=1}^t (\langle a^i_\tau,\theta\rangle  - r^i_\tau)^2
\end{align*}
Now, we redefine our UCB combiner with confidence intervals. First, set 

\begin{align*}
    \beta = 4160\log\left(\frac{T\log_2(\sqrt{T/\log(T/\delta)}) + 2}{\delta}\right) + 6\lambda +16d\log(1+T/\lambda)
\end{align*}
and then define:
\begin{align*}
    U(i,t-1) &=\max_{a\in A} \langle \psi_t^i(a), \hat \mu^i_{T(i,t)-1}\rangle +\beta\sqrt{\psi_t^i(a)^\top (M^i_{T(i,t)-1})^{-1} \psi_t^i(a)} - \frac{R_i}{T}
\end{align*}
We set $\A_i$ to be the linUCB algorithm (e.g. \cite{chu2011contextual}). At time $t$, it suggests the action:
\begin{align}
    \argmax_{a\in A} \langle \psi_t^i(a), \hat \mu^i_{T(i,t)-1}\rangle +\beta\sqrt{\psi_t^i(a)^\top (M^i_{T(i,t)-1})^{-1} \psi_t^i(a)}\label{eqn:linucb}
\end{align}
We note that it is more typical to use a time-varying value for $\beta$, but in order to simplify the use of indices somewhat we present this fixed scaling. The arguments are essentially identical using standard time-varying schedules for $\beta$.

Next, we change our criterion for eliminating algorithms from the active set $I_t$. We define the quantity:
\begin{align*}
    z^i_t = \langle a^i_t, \hat \mu^i_{t-1}\rangle - \hat r^i_t - \beta \sqrt{(a^i_t)^\top (M^i_{t-1})^{-1} (a^i_t)}
\end{align*}
and we decide to eliminate an algorithm if it ever does not satisfy:
\begin{align}
    \sum_{\tau=1}^t z^i_\tau \le 2\sqrt{t\log(T/\delta)}\label{eqn:linmisspecification}
\end{align}
or if it does not satisfy
\begin{equation}
\label{eqn:misspec2}
2\sum_{\tau=1}^{t} \beta \sqrt{(a^i_\tau)^\top (M^i_{\tau-1})^{-1} (a^i_\tau)}  \le C_{i} t^{\alpha_{i}}
\end{equation}

Formally, the algorithm is stated in Algorithm \ref{alg:advlin}.

\begin{algorithm*}
\caption{Adversarial Linear Bandit Combiner}\label{alg:advlin}
\begin{algorithmic}
   \STATE{\bfseries Input: } Numbers $\{C_i\}$, $\{\alpha_i\}$, $\{R_i\}$, $T$.
   \STATE Set $T(i,0)=0$ for all $i$, set $\hat\mu^i_0=0$ for all $i$, and set $I_1 = \{1,\dots,N\}$
   \STATE Set $\beta = 4160\log\left(\frac{T\log_2(\sqrt{T/\log(T/\delta)}) + 2}{\delta}\right) + 6\lambda +16d\log(1+T/\lambda)$
   \FOR{$t=1\dots T$}
   \STATE Set $ M^i_t = \lambda I + \sum_{\tau=1}^t (a^i_\tau)^\top a^i_\tau$
   \STATE Set $U(i,t-1)=\max_{a\in A} \langle \psi_t^i(a), \hat \mu^i_{T(i,t)-1}\rangle +\beta\sqrt{\psi_t^i(a)^\top (M^i_{T(i,t)-1})^{-1} \psi_t^i(a)} - \frac{R_i}{T}$.
   \STATE Set $i_t = \argmax_{i \in I_t}U(i,t-1)$.
   \STATE Update $T(i_t,t)=T(i_t,t-1)+1$ and $T(j, t)=T(j,t-1)$ for $j\ne i_t$.
   
   \STATE Get $T(i_t,t)$th action $a_t=a^{i_t}_{T(i_t, t)}$ from $\A_{i_t}$. 
   \STATE Receive reward $\hat r_t=\hat r^{i_t}_{T(i_t,t)}$, provide reward $\hat r_t$ as feedback to $\A_{i_t}$.
   \STATE Set $\hat\mu^{i_t}_{T(i_t,t)} = \argmin_{\theta}\lambda \|\theta\|^2 + \sum_{\tau=1}^t (\langle a^{i_t}_\tau,\theta\rangle  - \hat r^{i_t}_\tau)^2$.
   
%   $\frac{1}{T(i_t,t)}\sum_{\tau=1}^{T(i_t,t)}\hat r^{i_t}_\tau$.
   \STATE Set $z^{i_t}_t=\langle a^{i_t}_{T(i_t, t)}, \hat \mu^{i_t}_{T(i_t,t)-1}\rangle - \hat r_t - \beta \sqrt{(a_t)^\top (M^{i_t}_{T(i_t,t)-1})^{-1} (a_t)}$
   \IF{$\sum_{\tau=1}^{T(i_t,t)} z^{i_t}_\tau >2\sqrt{T(i_t,t)\log(T/\delta)}$ or $2\sum_{\tau=1}^{T(i_t,t)} \beta \sqrt{(a^{i_t}_\tau)^\top (M^{i_t}_{\tau-1})^{-1} (a^{i_t}_\tau)} > C_{i_t} T(i_t,t)^{\alpha_{i_t}}$} 
   \STATE $I_t = I_{t-1}-\{i_t\}$.
   \ELSE
   \STATE $I_t = I_{t-1}$.
   \ENDIF
   \ENDFOR
\end{algorithmic}
\end{algorithm*}

We will use a proof similar to that of Theorem \ref{thm:combiner} to show the following:
\begin{restatable}{Theorem}{thmadvcombiner}\label{thm:advcombiner}
Suppose there is some $J\in \{1,\dots,N\}$ such that with probability at least $1-\delta$:
\begin{align}
    \sum_{\tau=1}^t r_\star - r^J_\tau &\le2\sum_{\tau=1}^t \beta \sqrt{(a^J_\tau)^\top (M^J_{\tau-1})^{-1} a^J_\tau}\nonumber\\
    &\le C_Jt^{\alpha_J}
\end{align}
and also for all $a\in A$, the expected reward $r_t(a)=\langle \psi^J_t(a), \theta_\star\rangle$ for some $\theta_\star$ with $\|\theta_\star\|\le 1$
for all $t\le T$. Further, suppose all the base learners $\A_{i}$ are linUCB algorithms using prediction (\ref{eqn:linucb}), $C_i$ and $\alpha_i$ are known, and the $R_i$ satisfy:
\begin{align*}
    R_i&\ge C_iT^{\alpha_i} \\
    R_i&\ge\sum_{k\ne i} \max\left[\frac{24 T}{R_k},\ \frac{(1-\alpha_k) \alpha_k^{\frac{\alpha_k}{1-\alpha_k}} (2C_k)^{\frac{1}{1-\alpha_k}}T^{\frac{\alpha_k}{1-\alpha_k}}}{R_k^{\frac{\alpha_k}{1-\alpha_k}}}\right]
\end{align*}
Let $r_t= \E[\hat r^{i_t}_{T(i_t,t)}]$ be the expected reward of Algorithm \ref{alg:advlin} at time $t$. Then, with probability at least $1-3\delta$, the regret satisfies:
\begin{align*}
    \sum_{t=1}^T r_\star - r_t \le 3 R_J
\end{align*}
\end{restatable}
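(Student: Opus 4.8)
The plan is to transplant the argument behind Theorem~\ref{thm:combiner}, replacing the scalar mean-concentration used there by a self-normalized ridge-regression confidence bound appropriate for the linear adversarial setting. I would condition on three good events, each holding with probability at least $1-\delta$ (so their intersection holds with probability at least $1-3\delta$): (a) the ridge confidence intervals for the well-specified learner $\A_J$ are valid for all $t$ and all $a$, i.e. $\langle \psi^J_t(a),\theta_\star\rangle \le \langle \psi^J_t(a),\hat\mu^J_{T(J,t)-1}\rangle + \beta\sqrt{\psi^J_t(a)^\top (M^J_{T(J,t)-1})^{-1}\psi^J_t(a)}$ (this is exactly where the particular choice of $\beta$ is used, via a standard self-normalized martingale bound); (b) the hypothesis $2\sum_\tau \beta\sqrt{(a^J_\tau)^\top (M^J_{\tau-1})^{-1}a^J_\tau}\le C_J t^{\alpha_J}$; and (c) the observation-noise martingales $\sum(\hat r^i_\tau - r^i_\tau)$ are $O(\sqrt{T(i,t)\log(T/\delta)})$ for every $i$, established as in Lemma~\ref{thm:martingale} by handling the random number of pulls $T(i,t)$ through a union bound over its possible values. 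Note that because the confidence set in (a) is stated per-round against the current adversarial context $\psi^J_t$, taking $\max_a$ immediately yields $r_{\star,t}\le U(J,t-1)+\frac{R_J}{T}$ for every $t$, which is the only place the adversarial nature of the contexts enters.

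Next I would show that $\A_J$ is never eliminated. Criterion~(\ref{eqn:misspec2}) never fires by event (b). For criterion~(\ref{eqn:linmisspecification}), I would expand $z^J_\tau = \langle a^J_\tau,\hat\mu^J_{\tau-1}-\theta_\star\rangle + (r^J_\tau - \hat r^J_\tau) - \beta\sqrt{(a^J_\tau)^\top(M^J_{\tau-1})^{-1}a^J_\tau}$; by event (a) the first and last terms cancel to something $\le 0$, leaving $z^J_\tau\le r^J_\tau-\hat r^J_\tau$, whose partial sums are controlled by event (c). Hence $J\in I_t$ for all $t$, and the regret accrued on rounds with $i_t=J$ is at most $\sum_{\tau=1}^{T(J,T)} r_\star - r^J_\tau \le C_J T^{\alpha_J}\le R_J$ by the hypothesis on $R_i$.

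The crux is the per-round accounting on rounds with $i_t\ne J$, where I would avoid ever assuming that a misspecified learner's own confidence set is valid. Writing $s=T(i_t,t)$, $a=a^{i_t}_s$, and $w=a^\top(M^{i_t}_{s-1})^{-1}a$, the definitions of $U$ and $z$ give the identity
\begin{align*}
    U(i_t,t-1) &= z^{i_t}_s + \hat r^{i_t}_s + 2\beta\sqrt{w} - \tfrac{R_{i_t}}{T}.
\end{align*}
Since $U(i_t,t-1)\ge U(J,t-1)\ge r_{\star,t}-\tfrac{R_J}{T}$, and using $r_t=r^{i_t}_s$, this yields $r_{\star,t}-r_t\le \tfrac{R_J}{T}-\tfrac{R_{i_t}}{T} + z^{i_t}_s + (\hat r^{i_t}_s - r^{i_t}_s) + 2\beta\sqrt{w}$. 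Summing over the pulls of a fixed $i\ne J$ and reindexing by $s=1,\dots,T(i,T)$: the $\frac{R_J}{T}-\frac{R_i}{T}$ terms give $\frac{(R_J-R_i)T(i,T)}{T}$; $\sum_s z^i_s$ and $2\beta\sum_s\sqrt{w_s}$ are bounded by the two elimination thresholds (which held through the penultimate pull because $i$ was still active when last played, with a single bounded residual from the final pull); and $\sum_s(\hat r^i_s-r^i_s)$ is bounded by event (c). This produces $\sum_{i_t=i}(r_{\star,t}-r_t)\le \frac{(R_J-R_i)T(i,T)}{T} + C_iT(i,T)^{\alpha_i} + O\!\left(\sqrt{T(i,T)\log(T/\delta)}\right) + O(1)$, structurally identical to the bound in the proof of Theorem~\ref{thm:combiner}.

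To finish, I would sum over $i\ne J$: since $\sum_i T(i,T)\le T$, the $\frac{R_J T(i,T)}{T}$ terms contribute at most $R_J$, and for each remaining term I would bound $\sup_{Z\ge0}\big[C_i Z^{\alpha_i}+O(\sqrt{Z\log(T/\delta)})-\tfrac{R_i Z}{T}\big]$ by splitting into the two regimes and applying Lemma~\ref{thm:alphabound} with $\alpha=\alpha_i$ to the width-driven term and with $\alpha=\tfrac12$ to the residual $\sqrt{\cdot}$ term; these are exactly the two expressions inside the maximum in the hypothesis on $R_i$, so their sum is at most $R_J$. Adding the three contributions ($J$-rounds, the $\frac{R_J T(i,T)}{T}$ mass, and the Lemma-bounded residuals) gives total regret at most $3R_J$. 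I expect the main obstacle to be Step~1: verifying that the stated $\beta$ indeed certifies the self-normalized confidence intervals uniformly over all $t$ (the only genuinely new concentration ingredient relative to Theorem~\ref{thm:combiner}), and tracking the constants so that the residual $\sqrt{\cdot}$ contributions collapse to the $\frac{24T}{R_k}$ form claimed; the combiner bookkeeping itself is a faithful copy of the earlier argument.
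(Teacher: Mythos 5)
Your proposal is correct and follows essentially the same route as the paper's own proof: the same three good events, the same argument that $\A_J$ is never eliminated (criterion (\ref{eqn:misspec2}) via the hypothesis, criterion (\ref{eqn:linmisspecification}) via $z^J_\tau \le r^J_\tau - \hat r^J_\tau$ plus the noise martingale), the same identity $U(i_t,t-1)=z^{i_t}_s+\hat r^{i_t}_s+2\beta\sqrt{w}-\tfrac{R_{i_t}}{T}$ combined with $U(i_t,t-1)\ge U(J,t-1)\ge r_{\star,t}-\tfrac{R_J}{T}$, the same use of the two elimination thresholds to control the misspecified learners' sums, and the same final split into two regimes handled by Lemma \ref{thm:alphabound}. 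If anything, you are slightly more careful than the paper about the bounded residual from each learner's final pull (which the paper silently absorbs), and your deferred ``Step 1'' is exactly what the paper's Lemma \ref{thm:linucbconfidence} and Corollary \ref{thm:misspecification} supply.
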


Note that standard matrix analysis (e.g. see Lemma \ref{thm:secondmisspecification}) shows that
\begin{align*}
    \sum_{\tau=1}^T \beta \sqrt{(a^J_\tau)^\top (M^J_{\tau-1})^{-1} a^J_\tau}\le \tilde O\left(d_J\sqrt{T}\right)
\end{align*}
so that by setting $C_i = \tilde O(d_i^2)$ and $\alpha_i=1/2$ for all $i$, Theorem \ref{thm:advcombiner} yields a regret bound of $\tilde O(d_J^2\sqrt{T})$, just as we encountered in the case of stochastic contexts.

\begin{proof}
First, recall that by our linUCB update for $\A_i$ (\ref{eqn:linucb}), we have
\begin{align*}
    U(i_t, t-1) = \langle a_t, \hat \mu^{i_t}_{T(i_t,t)-1}\rangle +\beta\sqrt{a_t^\top (M^i_{T(i,t)-1})^{-1} a_t} - \frac{R_{i_t}}{T}
\end{align*}

Now, we write the regret:
\begin{align*}
    \sum_{t=1}^T r_{\star,t} -r_t&=\sum_{t=1}^T r_{\star,t} -  r^{i_t}_{T(i_t,t)}\\
    &=\sum_{i_t=J}r_{\star,t} - r^J_{T(J,t)} + \sum_{i_t\ne J} r_{\star,t} - r^{i_t}_{T(i_t,t)}
\end{align*}
First let's consider the indices for which $i_t=J$. Notice that by Corollary \ref{thm:misspecification} and Lemma \ref{thm:secondmisspecification}, a well-specified learner $\A_{J}$ is never eliminated with probability at least $1-2\delta$, and we further have by Theorem \ref{thm:linucbregret}:
\begin{align*}
    \sum_{i_t=J}r_{\star,t} - r^J_{T(J,t)} \le C_JT(J,T)^{\alpha_J}\le R_J
\end{align*}

Next, if $i_t\ne J$, we must have $U(i_t, t-1)\ge U(J, t-1)$ since $J\in I_t$ for all $t$. Then:
\begin{align*}
    r_{\star,t} - r^{i_t}_{T(i_t,t)} &\leq r_{\star,t} - U(J,t-1) + U(i_t,t-1) - r^{i_t}_{T(i_t,t)} \\
    &= r_{\star,t} - U(J,t-1) + \hat \mu^{i_t}_{T(i_t,t-1)} a_t + \beta \sqrt{a_t^\top ({{M^{i_t}_{T(i_t,t)-1}})^{-1} a_t}} - \frac{R_{i_t}}{T} - \hat r^{i_t}_{T(i_t,t)} + \hat r^{i_t}_{T(i_t,t)} - r^{i_t}_{T(i_t,t)} \\
    &= r_{\star,t} - U(J,t-1) +  z^{i_t}_{T(i_t,t)}+2\beta \sqrt{a_t^\top ({{M^{i_t}_{T(i_t,t)-1}})^{-1} a_t}} - \frac{R_{i_t}}{T}  +  \hat r^{i_t}_{T(i_t,t)} - r^{i_t}_{T(i_t,t)}\\ 
    %&\le \frac{R_J}{T}+U(i_t, t-1)
\end{align*}
where the last inequality follows from the definition of $z^i_t$.%first misspecification test.

Now, $r_{\star,t} - U(J,t-1) - \frac{R_j}{T} = r_{\star,t} - \max_{a\in A} \left[\hat \mu^J_{T(J,t-1)} a + \beta\sqrt{a^T ({{M^J_{T(J,t)-1}})^{-1} a}}\right] \leq 0$, with probability $1-\delta$ by Corollary \ref{thm:misspecification}.
Therefore, conditioned on this $1-\delta$ event, we have
\begin{align*}
    \sum_{i_t\ne J} r_{\star,t} - r^{i_t}_{T(i_t,t)} &= \sum_{i_\ne J} \sum_{t|i_t = i } r_{\star,t} - r^{i_t}_{T(i_t,t)}\\
    &\le \sum_{i\ne J} \sum_{t|i_t = i } \frac{R_J}{T} +z^{i_t}_{T(i_t,t)}+ 2\beta \sqrt{a_t^\top ({{M^{i}_{T(i,t)-1}})^{-1} a_t}} - \frac{R_{i}}{T}  + \hat r^{i}_{T(i,t)} - r^{i}_{T(i,t)}\\
    &\le R_J  + \sum_{i\ne J} \left[ -\frac{T(i,T) R_{i}}{T} + 2\sqrt{T(i,T)\log(T/\delta)}+\sum_{t|i_t = i} \hat r^{i}_{T(i,t)} - r^{i}_{T(i,t)} \right.\\
    &\quad\quad\left.+ \sum_{t|i_t = i} 2\beta \sqrt{a_t^\top ({{M^{i}_{T(i,t)-1}})^{-1} a_t}} \right] \\
    &\le R_J  + \sum_{i\ne J} \left[ -\frac{T(i,T) R_{i}}{T} + 2\sqrt{T(i,T)\log(T/\delta)}+\sqrt{8\log(T^3N/\delta)T(i,T)}\right.\\
    &\quad\quad\left.+ \sum_{t|i_t = i} 2\beta \sqrt{a_t^\top ({{M^{i}_{T(i,t)-1}})^{-1} a_t}} \right]\\ 
    &\le R_J  + \sum_{i\ne J} \left[ -\frac{T(i,T) R_{i}}{T} +\sqrt{24\log(T^3N/\delta)T(i,T)}\right.\\
    &\quad\quad\left.+ \sum_{t|i_t = i} 2\beta \sqrt{a_t^\top ({{M^{i}_{T(i,t)-1}})^{-1} a_t}} \right]
\end{align*}
where the third inequality is from equation (\ref{eqn:linmisspecification}), and the last inequality follows from Lemma \ref{thm:martingale}

Now, $\sum_{t|i_t = i} 2\beta \sqrt{a_t^\top ({{M^{i}_{T(i,t)-1}})^{-1} a_t}} \leq C_{i} T(i,T)^{\alpha_{i}}$, where the inequality follows from the second condition for being in $I_t$ (\ref{eqn:misspec2}).

Therefore, 
\begin{align*}
    \sum_{i_t\ne J} r_{\star,t} - r^{i_t}_{T(i_t,t)} &\le R_J  + \sum_{i_\ne J} \left[ -\frac{T(i,T) R_{i}}{T} + \sqrt{24\log(T^3N/\delta)T(i,T)} +  C_{i} T(i,T)^{\alpha_{i}} \right]\\
    &\le R_J + \sum_{i\ne J}\max\left\{ \sup_Z  \left[ -\frac{Z R_{i}}{T} + 2\sqrt{24\log(T^3N/\delta)Z}\right],\sup_Z \left[ -\frac{Z R_{i}}{T} + 2C_{i} T(i,T)^{\alpha_{i}}\right] \right\}
    \intertext{By Lemma \ref{thm:alphabound}:}
    &\le R_J + \sum_{i\ne J} \max\left[\frac{24 T}{R_i},\ \frac{(1-\alpha_i) \alpha_i^{\frac{\alpha_i}{1-\alpha_i}} (2C_i)^{\frac{1}{1-\alpha_i}}T^{\frac{\alpha_i}{1-\alpha_i}}}{R_i^{\frac{\alpha_i}{1-\alpha_i}}}\right]\\
    &\le 2R_J
\end{align*}    
This component of the regret due to ${i_t\ne J}$ is now in the same form as in the proof of the Theorem \ref{thm:combiner} and can be bounded by $O(R_J)$ by an almost identical analysis. 

And so, putting all this together we have with probability at least $1-3\delta$, $\sum_{t=1}^T r_{\star,t} -r_t \le 3R_J$. Therefore with probability at least $1-3\delta$, the regret is bounded by $3 R_j$.
\end{proof}
\section{Adversarial linear contextual bandits analysis}\label{sec:advlinucb}

In this section we analyze the standard confidence ellipsoid technique for linear contextual bandits in the case of adversarial contexts. We do not claim these results are novel (essentially similar and tighter analysis can be found in \cite{chu2011contextual,abbasi2011improved}). We include this section for completeness and to complement the analysis in Section \ref{sec:advcombiner}. Our proof technique is however slightly different from these works - it is a variant on the online-to-confidence set conversion idea of \cite{abbasi2012online} that might have some indepenedent interest.

In the linear contextual bandit problem, in each round the adversary reveals a mapping $\psi_t$ from the set of actions $A$ to $\R^d$. There is an unknown fixed $\theta_\star \in \R^d$ such that the expected reward of action $a$ is always $\langle \psi_t(a),\theta_\star\rangle$. We assume $\|\psi_t(a)\|\le 1$ for all $a$ and $t$ and $\|\theta_\star\|\le 1$. Let $a_1,\dots,a_T$ be the actions chosen by the algorithm over $T$ rounds. Then the regret is defined as:
\begin{align*}
    \sum_{t=1}^T \max_{a\in A} \langle \psi_t(a)-\psi_t(a_t),\theta_\star \rangle
\end{align*}
By abuse of notation, we will identify actions with their associated vectors in order to write $a^\star_t$ and $a_t$ instead of $\argmax_{a} \psi_t(a)$ and $\psi_t(a_t)$. Thus, the regret is
\begin{align*}
    \sum_{t=1}^T \langle a^\star_t- a_t,\theta_\star\rangle
\end{align*}

After playing action $a_t$, we observe $y_t\in [-1,1]$\footnote{it is also possible to consider subgaussian $y_t$, but we stick with bounded values here for simplicity} with $\E[y_t|a_t] = \langle a_t,\theta_\star\rangle$.

Define $A_t=\lambda I +\sum_{\tau=1}^t a_\tau^\top a_\tau$, where $\lambda >0$ is some scalar parameter, and $I$ is the $d\times d$ identity matrix. Define
\begin{align*}
    \hat \mu_t = \argmin_{\|\theta\|\le 1} \frac{\lambda}{2} \|\theta\|^2 + \frac{1}{2}\sum_{\tau=1}^{t-1} (\langle a_t,\theta\rangle - y_t)^2
\end{align*}

In round $t$, we choose
\begin{align}
    a_t = \argmax_{a} \langle a, \mu_{t-1}\rangle + \beta_{t-1} \sqrt{a^\top A_{t-1}^{-1}a}\label{eqn:achoice}
\end{align}
for some appropriate scaling factor $\beta_t$ we will choose shortly. We claim the following:
\begin{Lemma}\label{thm:linucbconfidence}
With probability at least $1-\delta$, 
\begin{align*}
    (\theta_\star - \hat \mu_t)^\top A_t (\theta_\star - \hat \mu_t)\le 4160\log\left(\frac{\log_2(\sqrt{T/\log(1/\delta)}) + 2}{\delta}\right) + 6\lambda +16d\log(1+T/\lambda)
\end{align*}
\end{Lemma}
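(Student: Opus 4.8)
The plan is to use the online-to-confidence-set conversion of \cite{abbasi2012online}: run an online regression algorithm on the squared losses, convert its regret into control of the quadratic form $\|\theta_\star - \hat\mu_t\|_{A_t}^2$, and absorb the reward noise through a self-normalized martingale argument. The starting point is the exact completing-the-square identity for ridge regression. Writing $L_t(\theta) = \lambda\|\theta\|^2 + \sum_{\tau=1}^t (\langle a_\tau,\theta\rangle - y_\tau)^2$, the map $L_t$ is a quadratic with Hessian $2A_t$ and minimizer $\hat\mu_t$, so for every $\theta$ one has $L_t(\theta) = L_t(\hat\mu_t) + (\theta-\hat\mu_t)^\top A_t(\theta - \hat\mu_t)$. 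Evaluating at $\theta = \theta_\star$ (a valid comparator since $\|\theta_\star\|\le 1$) gives $(\theta_\star - \hat\mu_t)^\top A_t (\theta_\star - \hat\mu_t) = L_t(\theta_\star) - L_t(\hat\mu_t)$, so it suffices to upper bound this loss gap.

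First I would lower-bound $L_t(\hat\mu_t)$ using an online learner. Running the Vovk--Azoury--Warmuth forecaster (online ridge regression with regularization $\lambda$) with predictions clipped to $[-1,1]$ produces online predictions $\hat y_\tau = \langle a_\tau, \hat\theta_{\tau-1}\rangle$ obeying the log-determinant regret bound $\sum_{\tau=1}^t (\hat y_\tau - y_\tau)^2 \le L_t(\hat\mu_t) + 16d\log(1+T/\lambda)$, where the $d\log(1+T/\lambda)$ term arises from $\log(\det A_t/\det \lambda I)$ together with $\|a_\tau\|\le 1$. Substituting and using $\langle a_\tau,\theta_\star\rangle - y_\tau = -\eta_\tau$ for the mean-zero bounded reward noise $\eta_\tau$, I would expand $L_t(\theta_\star) - \sum_\tau(\hat y_\tau - y_\tau)^2 = \lambda\|\theta_\star\|^2 - S_t + 2M_t$, where $S_t = \sum_{\tau=1}^t (\hat y_\tau - \langle a_\tau,\theta_\star\rangle)^2$ and $M_t = \sum_{\tau=1}^t \eta_\tau(\hat y_\tau - \langle a_\tau,\theta_\star\rangle)$. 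Since both $\hat y_\tau$ and $\langle a_\tau,\theta_\star\rangle$ are $\mathcal{F}_{\tau-1}$-measurable, $M_t$ is a martingale whose predictable quadratic variation is at most (a constant times) $S_t$. Collecting the pieces yields $\|\theta_\star-\hat\mu_t\|_{A_t}^2 \le \lambda\|\theta_\star\|^2 + 16d\log(1+T/\lambda) - S_t + 2M_t$.

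The crux is a uniform-in-$t$ self-normalized bound on $M_t$ whose variance proxy $S_t$ is itself random and coupled to $M_t$. I would establish, with probability at least $1-\delta$ and simultaneously for all $t$, a bound of the shape $|M_t| \le c\sqrt{(1+S_t)\log(\cdots/\delta)}$ via a peeling (stratification) argument: partition the range of the predictable quadratic variation into dyadic scales, apply a Freedman/Azuma-type deviation inequality at each fixed scale, and union-bound over the $O(\log_2\sqrt{T/\log(1/\delta)})$ many scales. This stratification is exactly what produces the factor $\log_2(\sqrt{T/\log(1/\delta)}) + 2$ inside the logarithm of the claimed bound. Once this holds, $-S_t + 2M_t \le \sup_{s\ge 0}(-s + 2c\sqrt{(1+s)\log(\cdots/\delta)})$ is a deterministic maximization over $s$ evaluating to $O(\log(\cdots/\delta))$, and combining with $\lambda\|\theta_\star\|^2 \le \lambda$ and the regret term gives the stated constants $4160\log(\cdots) + 6\lambda + 16d\log(1+T/\lambda)$.

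The main obstacle is precisely this anytime, self-normalized martingale concentration: a naive fixed-variance Azuma bound fails because the variance proxy $S_t$ is random, while a crude union over all $t$ would be too lossy, so the dyadic peeling over the quadratic-variation scale is essential and is the source of both the extra logarithmic factor and the bulk of the constant. The remaining work — verifying the forecaster's regret under clipping, the algebra of the noise expansion, and the final one-dimensional maximization over $s$ — is routine bookkeeping to pin down the explicit constants.
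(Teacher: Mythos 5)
Your proposal is correct, and it belongs to the same family as the paper's argument: both are online-to-confidence-set conversions in the spirit of \cite{abbasi2012online}, both rely on a log-determinant regret bound for online ridge-type regression, both control the noise cross-term with exactly the dyadically-peeled self-normalized martingale inequality that the paper isolates as Lemma \ref{thm:azumahoeffding}, and both finish with the same one-dimensional maximization (your $\sup_{s\ge 0}(-s+2c\sqrt{(1+s)\log(\cdot)})$ is the paper's $A-B\sqrt{A}\ge \tfrac{A}{2}-2B^2$ trick). The structural difference is in how the online regret becomes an ellipsoid bound. The paper treats the estimator sequence itself as FTRL iterates, invokes \cite{mcmahan2017survey} (plus \cite{hazan2007logarithmic} Lemma 11) to bound $\sum_{\tau\le T}\langle a_\tau,\hat\mu_\tau-\theta_\star\rangle^2$ — a sum over the \emph{time-varying} iterates — and then rewrites this as $(\hat\mu_t-\theta_\star)^\top A_t(\hat\mu_t-\theta_\star)$, silently replacing the varying $\hat\mu_\tau$ inside the sum by the fixed final estimate; this index conflation is never justified in the paper's proof. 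Your decomposition avoids that issue entirely: the completing-the-square step $(\theta_\star-\hat\mu_t)^\top A_t(\theta_\star-\hat\mu_t)\le L_t(\theta_\star)-L_t(\hat\mu_t)$ involves only the fixed estimate, and the auxiliary Vovk--Azoury--Warmuth forecaster is used solely to lower bound $L_t(\hat\mu_t)$ by its cumulative prediction loss minus its regret. So your route is rigorous precisely where the paper is loose, at the cost of introducing a second algorithm; it can also deliver anytime uniformity in $t$ (via Freedman-type peeling), which the paper only obtains afterwards in Corollary \ref{thm:misspecification} by union-bounding over $t$ and paying an extra factor of $T$ inside the logarithm. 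Two small caveats: since the paper's $\hat\mu_t$ in this section is the \emph{norm-constrained} minimizer, your completing-the-square is not an exact identity but a one-sided inequality, $(\theta_\star-\hat\mu_t)^\top A_t(\theta_\star-\hat\mu_t)\le L_t(\theta_\star)-L_t(\hat\mu_t)$, obtained from first-order optimality $\nabla L_t(\hat\mu_t)^\top(\theta_\star-\hat\mu_t)\ge 0$ and feasibility of $\theta_\star$ — which is the direction you need; and recovering the literal constants $4160$, $6\lambda$, and $16d\log(1+T/\lambda)$ requires aligning your VAW and Freedman constants with the paper's, which is the bookkeeping you already flagged.
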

\begin{proof}
Observe that $\hat\mu_t$ is the output of the follow-the-regularized-leader algorithm (e.g. \cite{mcmahan2017survey}) on the losses $\ell_t(\mu) = \frac{1}{2}(\langle a_t, \mu\rangle -y_t)^2$ with constant regularizer $\frac{\lambda}{2} \|\mu\|^2$ restricted to vectors of norm at most 1. Thus, by \cite{mcmahan2017survey} Theorem 2, we have
\begin{align*}
    \sum_{t=1}^T \frac{1}{2}(\langle a_t, \mu_t\rangle -y_t)^2 -\frac{1}{2}(\langle a_t, \theta_\star\rangle -y_t)^2& \le \frac{\lambda}{2}\|\theta_\star\|^2+\sum_{t=1}^T (\langle a_t, \mu_t\rangle -y_t)^2\frac{a_t^\top A_t^{-1}a_t}{2}\\
    &\le \frac{\lambda}{2} + 4\sum_{t=1}^T\frac{a_t^\top A_t^{-1}a_t}{2}\\
    &\le \frac{\lambda}{2} + 4d\log(1+T/\lambda)
\end{align*}
where the last line is by \cite{hazan2007logarithmic} Lemma 11.

Now, let $z_t = y_t - \langle a_t,\theta_\star\rangle$. Observe that $\E[z_t]=0$ and $z_t$ is 2-subgaussian. Therefore:
\begin{align*}
    &\sum_{t=1}^T \frac{1}{2}(\langle a_t, \hat \mu_t\rangle -y_t)^2 -\frac{1}{2}(\langle a_t, \theta_\star\rangle -y_t)^2=\sum_{t=1}^T \frac{1}{2}\langle a_t,\hat \mu_t - \theta_\star\rangle^2 -z_t \langle a_t,\hat \mu_t-\theta_\star\rangle
    \intertext{Now by Lemma \ref{thm:azumahoeffding}, with probability at least $1-\delta$ we have:}
    &\quad\ge \sum_{t=1}^T \frac{1}{2}\langle a_t,\hat \mu_t- \theta_\star\rangle^2 - 16\sqrt{\log\left(\frac{\log_2(\sqrt{T/\log(1/\delta)}) + 2}{\delta}\right)\sum_{t=1}^T \langle a_t,\hat \mu_t - \theta_\star\rangle^2}\\
    &\quad\quad\quad-16\log\left(\frac{\log_2(\sqrt{T/\log(1/\delta)}) + 2}{\delta}\right)
    \intertext{Next, use the identity $A-B\sqrt{A}\ge \frac{A}{2}-2B^2$:}
    &\quad\ge \frac{1}{4}\sum_{t=1}^T \langle a_t,\hat \mu_t- \theta_\star\rangle^2-1040\log\left(\frac{\log_2(\sqrt{T/\log(1/\delta)}) + 2}{\delta}\right)
\end{align*}
Thus with probability at least $1-\delta$:
\begin{align*}
    \sum_{t=1}^T \frac{1}{4}\langle a_t,\hat \mu_t- \theta_\star\rangle^2&\le 1040\log\left(\frac{\log_2(\sqrt{T/\log(1/\delta)}) + 2}{\delta}\right) + \frac{\lambda}{2} + 4d\log(1+T/\lambda)\\
    \lambda \|\hat \mu_t-\theta_\star\|^2 + \sum_{t=1}^T \langle a_t,\hat \mu_t-\theta_\star\rangle^2&\le 4160\log\left(\frac{\log_2(\sqrt{T/\log(1/\delta)}) + 2}{\delta}\right) + 6\lambda +16d\log(1+T/\lambda)\\
    (\hat \mu_t-\theta_\star)^\top A_t(\hat \mu_t-\theta_\star)&\le 4160\log\left(\frac{\log_2(\sqrt{T/\log(1/\delta)}) + 2}{\delta}\right) + 6\lambda +16d\log(1+T/\lambda)
\end{align*}
\end{proof}

This Lemma has an important corollary:
\begin{Corollary}\label{thm:misspecification}
Define
\begin{align*}
    \beta_t^2 = 4160\log\left(\frac{T\log_2(\sqrt{T/\log(T/\delta)}) + 2}{\delta}\right) + 6\lambda +16d\log(1+T/\lambda)
\end{align*}
and
\begin{align*}
    z_t = \langle a_t, \hat \mu_t\rangle - y_t - \beta_t \sqrt{a_t^\top A_t a_t}
\end{align*}
Then
\begin{align*}
    \E[z_t]\le 0
\end{align*}
and with probability at least $1-2\delta$
\begin{align*}
\sum_{t=1}^\tau z_t \le 2\sqrt{\tau \log(T/\delta)}
\end{align*}
and also
\begin{align*}
    \max_{a} \langle a, \theta_\star\rangle - \max_{a}\langle a, \hat \mu_{t-1}\rangle + \beta_t\sqrt{a^\top A_{t-1}^{-1} a}\le 0
\end{align*}
for all $\tau \le T$
\end{Corollary}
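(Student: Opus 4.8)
The plan is to derive all three conclusions from the single confidence statement of Lemma~\ref{thm:linucbconfidence}, upgraded to hold simultaneously at every round. Lemma~\ref{thm:linucbconfidence} controls $(\hat\mu_t-\theta_\star)^\top A_t(\hat\mu_t-\theta_\star)$ at a fixed time with failure probability $\delta$; running it with $\delta$ replaced by $\delta/T$ and taking a union bound over $t=1,\dots,T$ yields a probability-$(1-\delta)$ event $\mathcal E$ on which $(\hat\mu_t-\theta_\star)^\top A_t(\hat\mu_t-\theta_\star)\le \beta_t^2$ for all $t\le T$ simultaneously. This is exactly why $\beta_t$ carries the extra factor of $T$ inside the logarithm relative to the lemma, and no further probabilistic input about the ellipsoid is needed.

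The single inequality powering everything is Cauchy--Schwarz in the $A_t$-geometry: for any vector $a$, writing $a=A_t^{-1/2}(A_t^{1/2}a)$ gives $|\langle a,\hat\mu_t-\theta_\star\rangle|\le \sqrt{a^\top A_t^{-1}a}\,\sqrt{(\hat\mu_t-\theta_\star)^\top A_t(\hat\mu_t-\theta_\star)}\le \beta_t\sqrt{a^\top A_t^{-1}a}$ on $\mathcal E$. Applying this with $a=a^\star_t=\argmax_a\langle a,\theta_\star\rangle$ and the ellipsoid at time $t-1$ gives $\max_a\langle a,\theta_\star\rangle=\langle a^\star_t,\theta_\star\rangle\le \langle a^\star_t,\hat\mu_{t-1}\rangle+\beta_t\sqrt{(a^\star_t)^\top A_{t-1}^{-1}a^\star_t}\le \max_a[\langle a,\hat\mu_{t-1}\rangle+\beta_t\sqrt{a^\top A_{t-1}^{-1}a}]$, which is precisely the third (optimism) claim. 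The first claim follows by conditioning: since $a_t$, $\hat\mu_t$ and $A_t$ are determined before the reward $y_t$ is drawn (they depend only on $a_1,\dots,a_t$ and $y_1,\dots,y_{t-1}$) while $\E[y_t\mid\mathcal F_t]=\langle a_t,\theta_\star\rangle$, we get $\E[z_t\mid\mathcal F_t]=\langle a_t,\hat\mu_t-\theta_\star\rangle-\beta_t\sqrt{a_t^\top A_t^{-1}a_t}$, which is $\le 0$ by the same Cauchy--Schwarz bound (with $a=a_t$) on $\mathcal E$.

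For the second claim I would split $z_t=w_t+v_t$ with $w_t=\langle a_t,\hat\mu_t-\theta_\star\rangle-\beta_t\sqrt{a_t^\top A_t^{-1}a_t}$ and $v_t=\langle a_t,\theta_\star\rangle-y_t$. On $\mathcal E$ each $w_t\le 0$, so $\sum_{t\le\tau}z_t\le\sum_{t\le\tau}v_t$. The $v_t$ form a bounded martingale difference sequence ($v_t\in[-2,2]$, $\E[v_t\mid\mathcal F_t]=0$), so an anytime Azuma--Hoeffding bound (Lemma~\ref{thm:azumahoeffding}, or a plain union bound over $\tau$ with per-round failure $\delta/T$) gives $\sum_{t\le\tau}v_t\le 2\sqrt{\tau\log(T/\delta)}$ for all $\tau\le T$ with probability at least $1-\delta$. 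Intersecting with $\mathcal E$ yields the stated bound with probability at least $1-2\delta$.

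The genuinely delicate points are bookkeeping rather than new ideas. First, one must verify that $\hat\mu_t$ and $A_t$ are measurable with respect to the pre-reward filtration, so that the conditional-expectation step is valid; this is immediate from the regularized least-squares definition but is the crux of the $\E[z_t]\le 0$ claim. Second, the $\E[z_t]\le 0$ assertion is cleanest read as the conditional statement $\E[z_t\mid\mathcal F_t]\le 0$ holding on $\mathcal E$: the unconditional version is nonpositive only up to the $O(\delta)$ probability that $\mathcal E$ fails, and what the downstream use in Theorem~\ref{thm:advcombiner} actually needs is the supermartingale/optimism structure, namely $w_t\le 0$ together with the concentration of $\sum v_t$. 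Finally, one should keep $\beta_t$'s union-bound factor consistent between the ellipsoid event and the Azuma step so that both operate at the same $\log(T/\delta)$ scale.
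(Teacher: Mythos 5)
Your proposal is correct and follows essentially the same route as the paper's own proof: the union-bounded confidence ellipsoid from Lemma~\ref{thm:linucbconfidence} combined with Cauchy--Schwarz in the $A_t$-geometry gives the optimism claim, and the decomposition $z_t = w_t + v_t$ with $w_t \le 0$ on the ellipsoid event plus an anytime Azuma--Hoeffding bound on the bounded martingale differences $v_t = \langle a_t,\theta_\star\rangle - y_t$ gives $\sum_{t\le \tau} z_t \le 2\sqrt{\tau\log(T/\delta)}$ with probability $1-2\delta$, exactly as in the paper. If anything, your handling of $\E[z_t]\le 0$ (reading it as a conditional statement valid on the ellipsoid event, with the measurability of $a_t$, $\hat\mu_t$, $A_t$ checked) is more careful than the paper's, which asserts that claim without explicit justification.
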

\begin{proof}
First, we claim that with probability at least $1-\delta$, for any $a$:
\begin{align*}
    \langle a, \hat \mu_{t-1} - \theta_\star \rangle - \beta_t \sqrt{a^\top A_{t-1} a}\le 0
\end{align*}
for all $t$. To see this, notice that by Lemma \ref{thm:linucbconfidence}, we have with probability $1-\delta$ for all $t$.
\begin{align*}
    (\theta_\star - \hat \mu_t)^\top A_t (\theta_\star - \hat \mu_t)\le \beta_t^2
\end{align*}
Therefore
\begin{align*}
    \langle a, \theta_\star\rangle &=\langle a,\hat \mu_{t-1}\rangle + \langle a,\theta_\star -\hat\mu_{t-1}\rangle\\
    &\le \langle a,\hat\mu_{t-1}\rangle + \beta_t \sqrt{a^\top A_{t-1}^{-1} a}
\end{align*}
This shows the last claim of the Corollary.
Now let $x_t = \langle a_t,\theta_\star\rangle-y_t$. Notice that $\E[x_t]=0$ and $x_t\in[-2,2]$. Thus we have with probability at least $1-\delta$,
\begin{align*}
    \sum_{t=1}^\tau x_t \le 2\sqrt{\tau \log(T/\delta)}
\end{align*}
for all $\tau$. Combining all together proves the Corollary.
\end{proof}

Next, we need to bound the actual regret of this algorithm:
\begin{Theorem}\label{thm:linucbregret}
Set 
\begin{align*}
    \beta_t^2 = 4160\log\left(\frac{T\log_2(\sqrt{T/\log(T/\delta)}) + 2}{\delta}\right) + 6\lambda +16d\log(1+T/\lambda)
\end{align*}
Then, if $\lambda \ge 2$, with probability at least $1-\delta$:
\begin{align*}
    \sum_{t=1}^T \langle a^\star_t-a_t,\theta_\star\rangle&\le 1\sum_{t=1}^T \beta_t\sqrt{a_t^\top A_{t-1}^{-1} a_t} \\
    &\le \beta_T\sqrt{dT\log(1+2T/\lambda)}\\
    &=\tilde O\left(d\sqrt{T}\right)
\end{align*}
\end{Theorem}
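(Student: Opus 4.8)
The plan is to run the textbook optimistic (UCB) regret decomposition, drawing the optimism guarantee from Corollary \ref{thm:misspecification} and then controlling the accumulated confidence widths with an elliptical-potential argument. First I would condition on the probability-$1-\delta$ event of Corollary \ref{thm:misspecification}, under which $\theta_\star$ lies in the confidence ellipsoid $(\theta_\star-\hat\mu_{t-1})^\top A_{t-1}(\theta_\star-\hat\mu_{t-1})\le \beta_t^2$ simultaneously for all $t\le T$; the additional factor of $T$ inside the logarithm in the definition of $\beta_t$ is precisely what upgrades the single-time guarantee of Lemma \ref{thm:linucbconfidence} to this uniform-in-$t$ statement. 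By Cauchy--Schwarz in the $A_{t-1}^{-1}$ norm, membership in this ellipsoid is equivalent to $\langle a,\theta_\star-\hat\mu_{t-1}\rangle\le \beta_t\sqrt{a^\top A_{t-1}^{-1}a}$ for every vector $a$, which is the optimism inequality I need.

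Next I bound the instantaneous regret. Since $a_t$ maximizes the upper-confidence index in (\ref{eqn:achoice}) and the optimal action $a^\star_t$ is feasible, the last claim of Corollary \ref{thm:misspecification} gives $\langle a^\star_t,\theta_\star\rangle\le \langle a_t,\hat\mu_{t-1}\rangle+\beta_t\sqrt{a_t^\top A_{t-1}^{-1}a_t}$. Subtracting $\langle a_t,\theta_\star\rangle$ and applying the optimism inequality a second time to $\langle a_t,\hat\mu_{t-1}-\theta_\star\rangle$ yields
\begin{align*}
    \langle a^\star_t-a_t,\theta_\star\rangle
    &\le \langle a_t,\hat\mu_{t-1}-\theta_\star\rangle+\beta_t\sqrt{a_t^\top A_{t-1}^{-1}a_t}\\
    &\le 2\beta_t\sqrt{a_t^\top A_{t-1}^{-1}a_t}.
\end{align*}
Because $\beta_t$ does not depend on $t$, it factors out of the sum, so $\sum_t \langle a^\star_t-a_t,\theta_\star\rangle\le 2\beta_T\sum_{t=1}^T\sqrt{a_t^\top A_{t-1}^{-1}a_t}$ (the displayed constant in the statement is absorbed into the final $\tilde O$).

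Finally I would invoke the elliptical potential lemma. One Cauchy--Schwarz over the $T$ rounds gives $\sum_t\sqrt{a_t^\top A_{t-1}^{-1}a_t}\le\sqrt{T\sum_t a_t^\top A_{t-1}^{-1}a_t}$. The assumption $\lambda\ge 2$ together with $\|a_t\|\le 1$ forces $a_t^\top A_{t-1}^{-1}a_t\le \|a_t\|^2/\lambda\le 1$, which licenses the inequality $x\le 2\log(1+x)$ on $[0,1]$; combined with the rank-one determinant identity $\det A_t=\det A_{t-1}\,(1+a_t^\top A_{t-1}^{-1}a_t)$ this telescopes to $\sum_t a_t^\top A_{t-1}^{-1}a_t\le 2\log(\det A_T/\det A_0)\le 2d\log(1+T/\lambda)$, using $\det A_0=\lambda^d$ and $\det A_T\le(\mathrm{tr}(A_T)/d)^d$ with $\mathrm{tr}(A_T)\le d\lambda+T$. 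Substituting back and recalling $\beta_T=\tilde O(\sqrt d)$ gives the claimed $\tilde O(d\sqrt T)$ bound.

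I expect the one genuinely load-bearing step to be the elliptical potential argument, and in particular the check that $a_t^\top A_{t-1}^{-1}a_t\le 1$: this is exactly the role played by the hypothesis $\lambda\ge 2$, since the inequality $x\le 2\log(1+x)$ fails for large $x$. Everything else---the two uses of optimism and the two applications of Cauchy--Schwarz---is routine once the confidence-set event of Corollary \ref{thm:misspecification} has been fixed.
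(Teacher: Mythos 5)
Your proposal is correct, and the regret decomposition is exactly the paper's: condition on the uniform-in-$t$ confidence-ellipsoid event, apply optimism twice to get $\langle a^\star_t-a_t,\theta_\star\rangle\le 2\beta_t\sqrt{a_t^\top A_{t-1}^{-1}a_t}$, then Cauchy--Schwarz over rounds. The only place you diverge is the last, load-bearing step: the paper bounds $\sum_t a_t^\top A_{t-1}^{-1}a_t$ by first passing to $\hat A_t=\tfrac{\lambda}{2}I+\sum_{\tau\le t}a_\tau a_\tau^\top$ (this matrix comparison $A_{t-1}\succeq \hat A_t$ is where the paper spends the hypothesis $\lambda\ge 2$, since it needs $\tfrac{\lambda}{2}I\succeq a_ta_t^\top$) and then citing Lemma 11 of Hazan et al., whereas you prove the elliptical potential bound from scratch via $a_t^\top A_{t-1}^{-1}a_t\le \|a_t\|^2/\lambda\le 1$, the inequality $x\le 2\log(1+x)$ on $[0,1]$, and the rank-one determinant telescoping $\det A_t=\det A_{t-1}\,(1+a_t^\top A_{t-1}^{-1}a_t)$; in your version $\lambda\ge 2$ is spent on the boundedness check instead (indeed $\lambda\ge 1$ would suffice for you). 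Your route is self-contained and makes the role of the regularization transparent, at the cost of slightly different constants ($2d\log(1+T/\lambda)$ versus the paper's $d\log(1+2T/\lambda)$), which is immaterial since the theorem's stated constants are loose anyway and you correctly absorb the discrepancy into the $\tilde O(d\sqrt{T})$ conclusion. Also, drawing the optimism event from Corollary \ref{thm:misspecification} rather than directly from Lemma \ref{thm:linucbconfidence} is harmless, as that part of the corollary holds with probability $1-\delta$ and is itself just the Cauchy--Schwarz consequence of the ellipsoid containment.
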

\begin{proof}
By Lemma \ref{thm:linucbconfidence}, we have
\begin{align*}
    (\theta_\star - \hat \mu_t)^\top A_t (\theta_\star - \hat \mu_t)\le \beta_t^2
\end{align*}
for all $t$ with probability at least $1-\delta$. Therefore conditioned on this $1-\delta$ probability event, for any $a$:
\begin{align*}
    \langle a,\theta_\star\rangle&=\langle a\hat\mu_{t-1}\rangle + \langle a,\theta_\star-\hat\mu_{t-1}\rangle\\
    &\le \langle a, \hat\mu_{t-1} \rangle + \beta_{t-1} \sqrt{a^\top A_{t-1}^{-1}a}
    \intertext{so by definition of $a_t$:}
    &\le \langle a_t,\hat \mu_{t-1}\rangle + \beta_{t-1}\sqrt{a_t^\top A_{t-1}^{-1} a_t}
\end{align*}
Similarly, we have
\begin{align*}
    \langle a_t,\theta_\star\rangle &= \langle a_t,\hat\mu_{t-1}\rangle + \langle a_t,\theta_\star-\hat\mu_{t-1}\rangle\\
    &\ge \langle a_t,\hat\mu_{t-1}\rangle - \beta_t \sqrt{a_t^\top A_{t-1}^{-1}a_t}
\end{align*}
Therefore:
\begin{align*}
    \langle a^\star_t - a_t,\theta_\star\rangle &\le 2\beta_t\sqrt{a_t^\top A_{t-1}^{-1} a_t}
\end{align*}
Thus by Cauchy-Schwarz and the monotonicity of $\beta_t$:
\begin{align*}
    \sum_{t=1}^T \langle a^\star_t-a_t,\theta_\star\rangle&\le 2\beta_T\sqrt{T\sum_{t=1}^T a_t^\top A_{t-1}^{-1} a_t}
\end{align*}
Now define $\hat A_t= \frac{\lambda}{2}I + \sum_{\tau=1}^t a_\tau^\top a_\tau$. Since $\|a_t\|\le 1$, if we set $\lambda\ge 2$, we must have
\begin{align*}
    a_t^\top A_{t-1}^{-1}a_t\le a_t^\top \hat A_t^{-1}a_t
\end{align*}
Now, again by \cite{hazan2007logarithmic} Lemma 11, we have:
\begin{align*}
    \sum_{t=1}^T a_t^\top \hat A_t^{-1} a_t&\le d\log(1+2T/\lambda)
\end{align*}
which concludes the Theorem.
\end{proof}

Finally, one more check:
\begin{Lemma}\label{thm:secondmisspecification}
Suppose $\lambda\ge 2$. Then for all $t$,
\begin{align*}
\sum_{\tau=1}^t \beta_\tau \sqrt{a_t^\top A_{t-1}^{-1} a_t}\le \beta_t \sqrt{d t\log(1+2t/\lambda)}
\end{align*}
\end{Lemma}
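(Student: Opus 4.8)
The plan is to reduce the claim to the elliptical (log-determinant) potential lemma via Cauchy--Schwarz, reusing almost verbatim the computation already performed in the proof of Theorem~\ref{thm:linucbregret}. (I read the summand as $a_\tau^\top A_{\tau-1}^{-1} a_\tau$ rather than the literal $a_t^\top A_{t-1}^{-1} a_t$, since the latter would make the summation index $\tau$ vacuous; the intended statement is the standard one.) First I would note that $\beta_\tau$, as defined in Corollary~\ref{thm:misspecification}, does not actually depend on $\tau$---it is a function of the horizon $T$ only---so in particular $\beta_\tau \le \beta_t$ for all $\tau \le t$ and the scalar factors out:
\[
\sum_{\tau=1}^t \beta_\tau \sqrt{a_\tau^\top A_{\tau-1}^{-1} a_\tau} \le \beta_t \sum_{\tau=1}^t \sqrt{a_\tau^\top A_{\tau-1}^{-1} a_\tau}.
\]

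Next I would apply Cauchy--Schwarz to the sum of square roots,
\[
\sum_{\tau=1}^t \sqrt{a_\tau^\top A_{\tau-1}^{-1} a_\tau} \le \sqrt{t \sum_{\tau=1}^t a_\tau^\top A_{\tau-1}^{-1} a_\tau},
\]
reducing the problem to bounding the sum of quadratic forms $\sum_{\tau=1}^t a_\tau^\top A_{\tau-1}^{-1} a_\tau$. To control this I would invoke the shift trick from the proof of Theorem~\ref{thm:linucbregret}: set $\hat A_\tau = \tfrac{\lambda}{2} I + \sum_{s=1}^\tau a_s^\top a_s$, and observe that since $\|a_\tau\| \le 1$ gives $a_\tau^\top a_\tau \preceq I$, the matrix $A_{\tau-1} - \hat A_\tau = \tfrac{\lambda}{2} I - a_\tau^\top a_\tau$ is positive semidefinite whenever $\lambda \ge 2$. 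Hence $A_{\tau-1} \succeq \hat A_\tau$, so $a_\tau^\top A_{\tau-1}^{-1} a_\tau \le a_\tau^\top \hat A_\tau^{-1} a_\tau$, and the standard log-determinant potential bound (\cite{hazan2007logarithmic}, Lemma~11, already used above) yields $\sum_{\tau=1}^t a_\tau^\top \hat A_\tau^{-1} a_\tau \le d\log(1 + 2t/\lambda)$.

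Chaining these three inequalities gives $\sum_{\tau=1}^t \sqrt{a_\tau^\top A_{\tau-1}^{-1} a_\tau} \le \sqrt{d t \log(1 + 2t/\lambda)}$, and multiplying through by $\beta_t$ produces exactly the claimed bound. I do not anticipate any genuine obstacle: the lemma is really a repackaging of the regret computation of Theorem~\ref{thm:linucbregret}, differing only in that Cauchy--Schwarz is applied in the $\sum \sqrt{\cdot}$ form here rather than the $\sqrt{\sum \cdot}$ form used for the regret bound. The one point deserving care is the matrix domination $A_{\tau-1} \succeq \hat A_\tau$, which is precisely where the hypothesis $\lambda \ge 2$ is consumed and which has already been verified once in the earlier proof.
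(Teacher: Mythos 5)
Your proposal is correct and follows essentially the same route as the paper's own proof: Cauchy--Schwarz plus monotonicity of $\beta_\tau$, the shift from $A_{\tau-1}$ to $\hat A_\tau = \tfrac{\lambda}{2}I + \sum_{s\le\tau} a_s^\top a_s$ (which is exactly where $\lambda \ge 2$ and $\|a_\tau\|\le 1$ are used), and then Lemma 11 of \cite{hazan2007logarithmic}. Your reading of the summand as $a_\tau^\top A_{\tau-1}^{-1} a_\tau$ matches the paper's intent, and your explicit verification of the domination $A_{\tau-1} \succeq \hat A_\tau$ is a slightly more careful rendering of a step the paper states without comment.
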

\begin{proof}
Since $\|a_t\|\le 1$, we have
\begin{align*}
    a_t^\top A_{t-1}^{-1} a_t\le a_t^\top \hat A_{t-1}^{-1} a_t
\end{align*}
where
\begin{align*}
    \hat A_t = \frac{\lambda}{2}I + \sum_{\tau=1}^t a_\tau^\top a_\tau
\end{align*}
Then by Cauchy-Schwarz and the monotonicity of $\beta_t$:
\begin{align*}
    \sum_{\tau=1}^t \beta_\tau \sqrt{a_t^\top A_{t-1}^{-1} a_t}&\le \beta_t\sqrt{t} \sqrt{\sum_{\tau=1}^t a_\tau^\top A_{\tau-1}^{-1} a_\tau}\\
    &\le \beta_t\sqrt{t} \sqrt{\sum_{\tau=1}^t a_\tau^\top \hat A_{\tau}^{-1} a_\tau}
    \intertext{now use \cite{hazan2007logarithmic} Lemma 11 once again:}
    &\le \beta_t\sqrt{dt\log(1+2t/\lambda)}
\end{align*}
\end{proof}

\begin{Lemma}\label{thm:azumahoeffding}
Suppose $x_1,\dots,X_T$ are arbitrary random variables with $|x_t|\le 1$ almost surely. Suppose $z_1,\dots,z_T$ are such that $\E[z_t|x_1,\dots,x_t]=0$ and $z_t$ is 1-subgaussian given $x_1,\dots,x_t$. Then with probability at least $1-\delta$:
\begin{align*}
    \sum_{t=1}^T z_t x_t \ge -4\sqrt{\log\left(\frac{\log_2(\sqrt{T/\log(1/\delta)}) + 2)}{\delta}\right)\sum_{t=1}^T x_T^2 }+4\log\left(\frac{\log_2(\sqrt{T/\log(1/\delta)}) + 2)}{\delta}\right)
\end{align*}
\end{Lemma}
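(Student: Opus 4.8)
The plan is to prove this as a \emph{self-normalized} tail bound via the exponential-supermartingale method followed by a geometric peeling argument over a scale parameter; the peeling is precisely what produces the $\log_2(\sqrt{T/\log(1/\delta)})+2$ factor inside the logarithm. Throughout I would write $V=\sum_{t=1}^T x_t^2$, noting $V\le T$ since $|x_t|\le 1$, and $L=\log\!\big(\tfrac{\log_2(\sqrt{T/\log(1/\delta)})+2}{\delta}\big)$. I take the filtration $\mathcal F_t=\sigma(x_1,z_1,\dots,x_{t-1},z_{t-1},x_t)$, under which $x_t$ is measurable while $z_t$ is conditionally mean-zero and $1$-subgaussian (this is the natural reading of the hypothesis, and is exactly what holds in the application of Lemma~\ref{thm:linucbconfidence}, where $z_t$ is fresh reward noise).

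First I would establish a per-$\lambda$ bound. Since $x_t$ is $\mathcal F_t$-measurable and $z_t$ is conditionally $1$-subgaussian, $\E[\exp(-\lambda x_t z_t)\mid\mathcal F_t]\le\exp(\lambda^2 x_t^2/2)$ for every $\lambda\ge 0$, so $M^\lambda_t=\exp\!\big(-\lambda\sum_{s\le t}x_s z_s-\tfrac{\lambda^2}{2}\sum_{s\le t}x_s^2\big)$ is a nonnegative supermartingale with $\E M^\lambda_0=1$. Markov's inequality gives $\Pr[M^\lambda_T\ge 1/\delta']\le\delta'$, and rearranging the complementary event yields, with probability at least $1-\delta'$,
\begin{equation}
    \sum_{t=1}^T z_t x_t \ge -\frac{\lambda}{2}V-\frac{\log(1/\delta')}{\lambda}. \tag{$\star$}
\end{equation}
The oracle choice $\lambda=\sqrt{2\log(1/\delta')/V}$ would give $-\sqrt{2V\log(1/\delta')}$, but $V$ is random, so instead I would union-bound $(\star)$ over a fixed geometric grid $\lambda_j=2^j\lambda_0$, $j=0,\dots,m$, taking $\delta'=\delta/(m+1)$ so that $\log(1/\delta')=L$.

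The grid is tuned to cover the oracle $\lambda$ as $V$ ranges over $(0,T]$: set $\lambda_0=\sqrt{2L/T}$ (the oracle value at $V=T$) and $m=\lceil\log_2\sqrt{T/(2L)}\rceil$, so that $\lambda_m\ge 1$ and, using $L\ge\log(1/\delta)$, the number of points is $m+1\le\log_2(\sqrt{T/\log(1/\delta)})+2$, exactly the quantity inside $L$. Given the realized $V$, I split into two regimes. If $V\ge 2L$, the oracle $\lambda^\ast(V)=\sqrt{2L/V}$ lies in $[\lambda_0,\lambda_m]$, so the smallest grid point $\lambda_j\ge\lambda^\ast(V)$ satisfies $\lambda_j\in[\lambda^\ast(V),2\lambda^\ast(V)]$; substituting into $(\star)$ bounds the first term by $\sqrt{2LV}$ and the second by $\sqrt{LV/2}$, giving $\sum_t z_t x_t\ge -4\sqrt{LV}$. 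If instead $V<2L$, I use $\lambda_m\approx 1$, where the two terms are at most $\tfrac12 V+L\le 2L$, giving $\sum_t z_t x_t\ge -2L$; in both regimes $\sum_t z_t x_t\ge -4\sqrt{LV}-4L$. (The displayed statement carries $+4L$; since for small $V$ even $\sum_t z_t x_t\ge 0$ can fail, the genuinely provable bound—and the one actually invoked, with $-16L$, in Lemma~\ref{thm:linucbconfidence}—has $-4L$.)

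The main obstacle is the peeling bookkeeping: checking that a ratio-$2$ grid in $\lambda$ (equivalently ratio $4$ in $V$) over $V\in[2L,T]$ needs only $\approx\tfrac12\log_2(T/\log(1/\delta))=\log_2\sqrt{T/\log(1/\delta)}$ points to land the stated constants, and dealing with the mild circularity that $L$ depends on the grid size $m$ through $\delta'$. The latter is harmless because the dependence is doubly logarithmic—one fixes $m$ using $\log(1/\delta)$ in place of $L$ and absorbs the gap via $L\ge\log(1/\delta)$—but it is the step that most requires care to certify the exact $+2$ and the constant $4$.
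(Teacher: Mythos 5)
Your proposal is correct and takes essentially the same route as the paper's own proof: a Chernoff/exponential-supermartingale bound at each fixed scale, a union bound over a geometric grid of roughly $\log_2(\sqrt{T/\log(1/\delta)})+2$ scales spanning $[\sqrt{\log(1/\delta)/T},\,1]$, and a factor-$2$ comparison against the oracle scale, which yields the constant $4$. Your side remark is also borne out by the paper: its proof in fact concludes $S_T\le 4\sqrt{L\sum_t x_t^2}+4L$ with $L=\log\bigl(\tfrac{\log_2(\sqrt{T/\log(1/\delta)})+2}{\delta}\bigr)$, i.e.\ $\sum_t z_t x_t\ge -4\sqrt{L\sum_t x_t^2}-4L$, so the $+4L$ in the lemma's display is a sign typo, and the negative-sign version is what Lemma~\ref{thm:linucbconfidence} actually invokes.
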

\begin{proof}
The proof should follow from standard inequalities. Here we just check that $x_t$ being random does not cause a significant problem... Define $S_\tau = \sum_{t=1}^\tau z_t x_t$. Note that $S_1,S_2,\dots$ is a martingale. For any $\eta$, we have:
\begin{align*}
    P[S_T\ge \epsilon] &\le \exp^{-\eta\epsilon} \E[\exp(\eta S_Tt)]\\
    &=\exp^{-\eta\epsilon}\E[\exp(\eta S_{T-1})\exp(\eta z_Tx_T)]\\
    &= \exp^{-\eta\epsilon}\E[\exp(\eta S_{T-1})\E[\exp(\eta z_Tx_T)|x_1,\dots,x_T]]\\
    &\le \exp(-\eta\epsilon) \E[ \exp(\eta S_{t-1})\exp(\eta^2 x_T^2/2)]
    \intertext{repeating for $T$ steps:}
    &\le \exp(-\eta\epsilon+\eta^2\sum_{t=1}^T x_T^2/2)
\end{align*}
Therefore with probability at least $1-\delta$,
\begin{align*}
    S_T \le \frac{\log(1/\delta)}{\eta}+\eta\sum_{t=1}^T x_T^2
\end{align*}
Then, considering $\eta=2^k$ for $-\lceil \log_2(\sqrt{T/\log(1/\delta)})\rceil \le k\le 0$, we have that with probability at least $1-\delta$,
\begin{align*}
    S_T &\le \min_{-\lceil \log_2(\sqrt{T/\log(1/\delta)})\rceil \le k\le 0} \frac{\log\left(\frac{\log_2(\sqrt{T/\log(1/\delta)}) + 2}{\delta}\right)}{2^k}+2^k\sum_{t=1}^T x_T^2\\
    &\le 2\inf_{\eta\in[\sqrt{\log(1/\delta)/T},1]}\frac{\log\left(\frac{\log_2(\sqrt{T/\log(1/\delta)}) + 2}{\delta}\right)}{\eta}+\eta\sum_{t=1}^T x_T^2\\
    &\le 4\sqrt{\log\left(\frac{\log_2(\sqrt{T/\log(1/\delta)}) + 2}{\delta}\right)\sum_{t=1}^T x_T^2 }+4\log\left(\frac{\log_2(\sqrt{T/\log(1/\delta)}) + 2}{\delta}\right)
\end{align*}
\end{proof}
\end{document}